\documentclass[english]{article}

\usepackage{geometry}
\usepackage[T1]{fontenc}
\usepackage[utf8]{inputenc}
\usepackage{bm}
\usepackage{amsmath,amsthm}

\usepackage[unicode=true,bookmarks=false,breaklinks=false,pdfborder={0 0 1},colorlinks=false]{hyperref}
\hypersetup{colorlinks,citecolor=blue,filecolor=blue,linkcolor=blue,urlcolor=blue}
\usepackage{xcolor,colortbl}
\definecolor{Gray}{gray}{0.85}
\makeatletter

\usepackage[inline]{enumitem}
\usepackage{braket}
\usepackage{amssymb,amsfonts}
\usepackage{bbm}
\usepackage{nicefrac}
\usepackage{xfrac}
\usepackage{booktabs}
\usepackage{multirow}
\usepackage{tikz}
\usetikzlibrary{arrows.meta}
\usepackage{wrapfig}
\usepackage{thm-restate}
\usepackage{empheq}
\usepackage{amsthm}
\usepackage{comment}
\usepackage[sort,compress]{natbib}
\usepackage{booktabs,mathtools}
\usepackage{graphicx}
\usepackage{subcaption}
\usepackage{algorithm}
\usepackage{algorithmic}
\usepackage{dsfont}
\usepackage{color}
\usepackage{float}
\usepackage[capitalize]{cleveref}
\crefname{assumption}{Assumption}{Assumptions}
\usepackage{url}
\usepackage{tcolorbox}
\usepackage{changepage}
\usepackage{microtype}
\usepackage{booktabs} % for professional tables
\usepackage{physics}
\usepackage{pgfplots}

\usepackage{mathtools}
\usepackage{mdframed}
\mdfdefinestyle{MyBoxStyle}{
    linecolor=green!60!black,
    linewidth=0pt,
    backgroundcolor=blue!10,
    roundcorner=5pt,innerleftmargin=4pt,innerleftmargin=4pt,innertopmargin=0pt
}

\DeclareMathOperator*{\argmax}{arg\,max}

\DeclareMathOperator*{\KL}{KL}
\theoremstyle{plain}
\newmdtheoremenv[style=MyBoxStyle]{theorem}{Theorem}[section]
\newtheorem{proposition}[theorem]{Proposition}
\newtheorem{lemma}[theorem]{Lemma}

\theoremstyle{definition}
\newmdtheoremenv[style=MyBoxStyle]{definition}[theorem]{Definition}

\newtheorem{example}[theorem]{Example}
\theoremstyle{remark}
\newtheorem{remark}[theorem]{Remark}
\definecolor{cool-blue}{HTML}{1D6996}
\definecolor{cool-purple}{HTML}{5F4690}
\definecolor{cool-teal}{HTML}{38A6A5}
\definecolor{cool-green}{HTML}{0F8554}
\definecolor{dark-green}{rgb}{0.0,0.5,0.0}

\newcommand{\diff}{\mathrm{d}}

\allowdisplaybreaks

\title{Training Generalizable Collaborative Agents \\ via Strategic Risk Aversion}
 \author{
 	Chengrui Qu\thanks{Department of Computing and Mathematical Sciences, California Institute of Technology, Pasadena, CA, USA.}\\
 	Caltech
 	\and
 	Yizhou Zhang\footnotemark[1]\\
 	Caltech 
	\and
	Nicolas Lanzetti\footnotemark[1] \\
    Caltech\\
	\and
	Eric Mazumdar\footnotemark[1]  \\
	Caltech 
 	} 
\date{\today}
\begin{document}

\maketitle
\begin{abstract}
   Many emerging agentic paradigms require agents to collaborate with one another (or people) to achieve shared goals. Unfortunately, existing approaches to learning policies for such collaborative problems produce brittle solutions that fail when paired with new partners.
  We attribute these failures to a combination of \emph{free-riding} during training and a lack of \emph{strategic robustness}. To address these problems, we study the concept of strategic risk aversion and interpret it as a principled inductive bias for generalizable cooperation with unseen partners. While strategically risk-averse players are robust to deviations in their partner's behavior by design, we show that, in collaborative games,  they also (1) can have better equilibrium outcomes than those at classical game-theoretic concepts like Nash, and (2) exhibit less or no free-riding. Inspired by these insights, we develop a multi-agent reinforcement learning (MARL) algorithm that integrates strategic risk aversion into standard policy optimization methods. Our empirical results across collaborative benchmarks (including an LLM collaboration task) validate our theory and demonstrate that our approach consistently achieves reliable collaboration with heterogeneous and previously unseen partners across collaborative tasks.
\end{abstract}

\noindent \textbf{Keywords:} Strategic risk aversion, multi-agent reinforcement learning, partner generalization

\allowdisplaybreaks
\setcounter{tocdepth}{2}
%\tableofcontents

\section{Introduction}

AI systems increasingly operate in multi-agent environments, where success depends on effective interaction with other agents.  An increasingly important subset of these problems involves agents solving \emph{collaborative tasks}---i.e., tasks where agents must work together towards a shared goal. Examples of such problems range from robots coordinating with people in shared physical spaces like warehouses \citep{kruse2013}, to emerging ``agentic AI'' problems where multiple models collaborate  to write code~\citep{wu2024autogen} or solve math problems~\citep{stoicaMARL}. 

A common approach to such problems is to view them as collaborative \emph{games} and use concepts from game theory to design algorithms to learn collaborative policies~\citep{dafoeCooperativeAIMachines2021}. Viewed through this lens, a central challenge in collaborative games is \emph{partner generalization}---where one seeks to maintain effective interaction across new sets of partners~\citep{carroll2019,hu20a,ruhdorfer2025overcooke}. Indeed, in many real-world settings, agents must navigate situations in which they interact with different partners (either algorithmic or human) who may have slightly different goals, heuristics, or levels of competence. Agents should learn strategies that work well across a broad range of partners without sacrificing performance. 
Unfortunately, current approaches tend to struggle with this---with learned policies often failing to generalize to new partners~\citep{stone2010}, or \emph{overfitting} to other agents' eccentricities and becoming overly reliant on specific conventions \citep{carroll2019,lerer2019}. 

We view the problem of partner generalization as ultimately a question of \emph{robustness} of the learned policy and of \emph{alignment} between agents~\citep{leibo2017}. The need for robustness emerges from the requirement that learned strategies be insensitive to changes in a partner’s strategy and degrade gracefully in performance as the partners become less collaborative.
The need for alignment emerges from a desire to find agents that contribute proportionally to the task.
If agents learn to under-contribute or delegate costly actions \citep{vdn,liu23ac} to their partners---effectively \emph{free-riding} on their partner’s effort---they cannot generalize to new environments and partners.
While free-riding is a widely acknowledged phenomenon in game theory, it is rarely studied as a problem in collaborative multi-agent learning despite recent empirical evidence that it arises even with large AI models~\citep{zhang2025unlockingpowermultiagentllm}.

Prior work---which we review in~\cref{sec:related} and then in depth in Appendix~\ref{app:related}---has studied partner generalization in specific settings that fail to scale to large-scale problems like the post-training of LLM agents~\citep{hu20a} or via largely heuristic approaches~\citep{PPOentropy}.  In contrast, in this paper we advance \emph{strategic} risk aversion as a \emph{principled} and \emph{scalable} approach to addressing this issue in a broad class of collaborative games.

Recently proposed in multi-agent reinforcement learning (MARL) to derive human-like strategies in general games~\citep{mazumdar2024tractableequilibriumcomputationmarkov}, strategic risk aversion mirrors behaviors observed in people in experimental economics~\citep{goeree2003risk}. Unlike many approaches to robustness or risk aversion in MARL which primarily focus on robustness to uncertainty in the underlying environment, \emph{strategic} risk aversion posits that agents should be risk-averse to uncertainty stemming from their opponents' decisions. This perspective inherently aligns with the goal of partner generalization, as it forces agents to be robust to deviations in their partner's play.

Formalizing this principle leads to a new equilibrium concept: (Strategically) Risk-Averse Quantal Response Equilibria (RQE)~\citep{mazumdar2024tractableequilibriumcomputationmarkov}. While RQE has been shown to better capture human play in simple experiments and offer better computational tractability~\citep{zhang2025convergent} than classic game theoretic concepts, its potential for collaborative games and partner generalization remains unknown. We argue that strategic risk aversion provides a suitable inductive bias for robust cooperation in collaborative games: at an RQE, an agent is trained against a structured set of \emph{plausible} partner deviations, governed by the agent's degree of risk aversion. This ensures strategies are robust without being overly conservative.

 \textbf{Contributions:} Through both a theoretical analysis of RQE in structured collaborative games and extensive experiments across a range of diverse multi-agent benchmarks, we validate that RQE are particularly suited for collaborative tasks.  In particular, our contributions are as follows:
\begin{itemize}
    \item \textbf{Incentivizing collaboration (\cref{thm:rqe_cooperation}):} We prove that in continuous quadratic aggregative games,  strategic risk aversion can encourage collaboration, with higher risk aversion leading to a greater focus on the shared goal. This implies a counterintuitive result: unlike in classic robust optimization or RL, \emph{robustness does not necessarily require sacrificing performance}.
    \item \textbf{Alleviating free-riding (\cref{thm:free_riding}):} We prove in finite-action collaborative games with private costs that strategic risk aversion can mitigate free-riding at equilibrium. \emph{Thus, risk aversion can reduce the alignment problem in collaborative MARL.}
    \item \textbf{Scalable risk-averse MARL algorithms:} We develop \textit{Strategically Risk-Averse Policy Optimization} (SRPO), a MARL algorithm that optimizes an RQE-derived objective which integrates naturally with policy-optimization algorithms like independent proximal policy optimization~(IPPO).
    \item \textbf{Empirical validation:} We demonstrate across collaborative MARL benchmarks that SRPO \emph{consistently} achieves more reliable coordination with heterogeneous and unseen partners than IPPO (the current scalable MARL baseline). We observe that while IPPO performs well, it consistently finds free-riding equilibria that limit its generalization. In contrast, we observe---in line with our theory---that RQE found by SRPO exhibit no free-riding and result in better partner generalization. We also provide preliminary small-scale experiments showing that these findings can even extend to the fine-tuning of collaborative agentic AI teams made up of large language models, demonstrating the scalability of our approach.
\end{itemize}

\vspace{-2ex}\section{Related Works}\label{sec:related}
Our work studies strategic risk aversion in collaborative MARL.
We cover the most relevant related work here and defer a more in depth discussion to Appendix~\ref{app:related}.

The most related literature in collaborative MARL focuses on the partner generalization problem; i.e., how to learn collaborative policies that remain effective when paired with previously unseen partners \citep{Barrett2014,carroll2019,dizdarevic25a}. This is sometimes referred to as zero-shot coordination \citep{hu20a,hu21c}, or ad-hoc teamwork problem \citep{stone2010}. One large class of approaches can broadly be described as population-based methods \citep{vinyalsGrandmasterLevelStarCraft2019,zhao2022maximumentropypopulationbasedtraining,yu2023learning,wang2024zsceval,rahman2023generating,andrei2021} in which one trains against a diverse set of learned policies---essentially performing domain randomization for partners. Though such approaches are conceptually simple, the task of generating good populations of partners is nontrivial and typically relies on heuristic, task-dependent design choices. Furthermore, such approaches quickly become computationally intractable for complex, high-dimensional settings (e.g., LLM fine-tuning) and tend not to have strong guarantees that they result in better generalization. To circumvent this limitation,~\citet{PPOentropy} attempt to induce robustness by introducing more randomness during training (by increasing entropy regularization in policy optimization). Though this method scales well, our experiments demonstrate that it does not solve free-riding and consequently does not address the problem of generalization across collaborative tasks. 

In contrast, our approach based on strategic risk aversion scales as a simple modification to existing policy-optimization techniques, is grounded in a principled game theoretic formulation, and empirically consistently delivers more stable cross-play performance across collaborative benchmarks. While these forms of risk aversion have been well studied in the experimental economics literature \citep{gollier2001economics, goeree2002efficiency, goeree2003risk}, they have only recently been studied in theory~\citep{lanzetti2025strategically,mazumdar2024tractableequilibriumcomputationmarkov} and in MARL~\citep{zhang2025convergent,slumbers2023game}.  We demonstrate its potential in collaborative games.

% \paragraph{Notation.}
% \nicolas{I do not think we need this}
\section{Problem Setup}\label{sec:problem-setup}

We consider a general-sum collaborative game with $N$ players. Each player $i\in [N]$ is endowed with actions $a_i\in\mathcal{A}_i$, where the action spaces can be discrete or continuous, and we use the notation $\mathcal{A}_{-i}=\prod_{j\neq i}\mathcal{A}_j$ for the action space of all players except $i$.
Each player aims to maximize a utility function $u_i:\mathcal{A}_i\times \mathcal{A}_{-i}\rightarrow \mathbb{R}$.
The utility function consists of a shared reward $R$, which coincides for all players, and a private cost $c_i$, so that 
\begin{equation*}
    u_i(a_i,a_{-i})
    =
    R(a_1,\ldots, a_N) - c_i(a_i).
\end{equation*}
These games are collaborative since all players seek to maximize the reward, but pay for their own effort. They have been well studied in economics under the guise of public goods games and naturally model many real-world scenarios where players aim to collaboratively accomplish a task but are penalized for their own effort. 
The players select a mixed strategy $x_i\in\Delta(\mathcal{A}_i)$, which is a probability over the available actions $\mathcal{A}_i$ (here, $\Delta(\mathcal{A}_i)$ is the set of probability distributions over $\mathcal{A}_i$).
In the standard game-theoretic setting, the utility of each player is then the standard risk-neutral expected utility
\begin{equation*}
    U_i(x_i,x_{-i})
    =
    \mathbb{E}_{\substack{a_1\sim x_1 \\ ... \\a_N\sim x_N}}[u_i(a_i,a_{-i})],
\end{equation*}
where $x_{-i}$ is the collection of the mixed strategies of all other players $j\neq i$.

In this work, we follow~\citet{mazumdar2024tractableequilibriumcomputationmarkov} and propose to introduce two elements of human decision-making: risk aversion and bounded rationality. 

\paragraph{Risk aversion}
To model risk aversion, we consider players that seek to optimize a risk-adjusted utility based on the  \textit{entropic risk measure} with risk aversion parameter $\tau_i>0$ (e.g., see~\citet{follmer2002convex}), instead of their expected utility. Players' resulting objectives are given by 
\begin{equation}\label{eq:risk_adjusted_objective}
    U^{\tau_i}_i(x_i,x_{-i})
    =
    \inf_{p\in\Delta(\mathcal{A}_{-i})}
    U_i(x_i,p) + \frac{1}{\tau_i}\KL(p,x_{-i}).
\end{equation}
In words, the utility results from a worst-case approach, in which a fictitious adversary tries to inflict maximum damage at the (risk-neutral) expected utility, while not deviating ``too much'' from the strategies of all other players. 
For the sake of this paper, we will measure this deviation with the KL divergence, defined as $\KL(p,q)=\sum_j p_j\log(\frac{p_j}{q_j})$ (when $\mathcal{A}_{-i}$ is discrete) and $\KL(p,q)=\int_{\mathbb{R}^n} \rho_p(x)\log(\frac{\rho_p(x)}{\rho_q(x)})$ (when $\mathcal{A}_{-i}=\mathbb{R}^n$ is continuous), where  $\rho_p$ and $\rho_q$ are the densities of $p$ and $q$ (provided they exist, else $\KL(p,q)=+\infty$).
Our choice of the entropic risk measure is rooted in operations research, but also motivated by ubiquitous use of the KL divergence across machine learning and in particular in policy optimization in reinforcement learning. By focusing on the KL divergence, we derive algorithms that can be implemented with minimal changes to existing codebases.

\paragraph{Bounded rationality}
To incorporate bounded rationality, we add entropy to each player's utility, so that their utility is\begin{align}\label{eq:risk_entropy_adjusted_objective}
    U_i^{\tau_i,\epsilon_i}(x_i,x_{-i})
    &=
    U^{\tau_i}_i(x_i,x_{-i})
    - 
    \epsilon_i H(x_i),
\end{align}
where $\epsilon_i>0$ and $H(x_i)$ is the (negative) entropy of the mixed strategy $x_i$; i.e., $H(p)=\sum_j p_j\log(p_j)$ (if $\mathcal{A}_i$ is discrete) and $H(p)=\int_{\mathbb{R}^n} \rho_p(x)\log(\rho_p(x))\dd x$ (when $\mathcal{A}_i=\mathbb{R}^n$ is continuous), where $\rho_p$ is again the density of $p$ (provided it exists, else $H(p)=+\infty$).
Entropy regularization of the utility leads to the celebrated quantal response model in behavioral economics~\citep{mckelvey1995quantal} and is widely employed in machine learning for exploration purposes (e.g., see \citet{ahmed2019understanding}).
For more general regularization schemes, we refer to~\citet{mazumdar2024tractableequilibriumcomputationmarkov}.

\paragraph{Risk-averse quantal response equilibrium (RQE).} Given our new utility $U_i^{\tau_i,\epsilon_i}(x_i,x_{-i})$, we now define a RQE as a set of joint mixed strategies from which no player has an incentive to unilaterally deviate:

\begin{definition}[\citet{mazumdar2024tractableequilibriumcomputationmarkov}, Definition 5]\label{def:RQE_normal_form}
    A tuple of mixed strategies 
    $(x_1^\star, \ldots, x_N^\star)$
    is a risk-averse quantal response equilibrium (RQE) with degrees of risk aversion $\tau_1,\ldots,\tau_N$ and bounded rationality $\epsilon_1,\ldots,\epsilon_N$ if for all players $i$ we have 
    \begin{equation}
        U^{\tau_i,\epsilon_i}_i(x_i,x_{-i}^\star)
        \leq 
        U^{\tau_i,\epsilon_i}_i(x_i^\star,x_{-i}^\star)
        \: \forall\,x_i\in\Delta(\mathcal{A}_i).
    \end{equation}
\end{definition}

When $\tau_i,\epsilon_i\to 0$ we recover Nash equilibria, when $\tau_i\to 0$ we recover quantal response equilibria, and when $\epsilon \rightarrow 0$ and $\tau \rightarrow \infty$ we recover security strategies.
In the next section, we investigate the benefits of risk aversion, together with bounded rationality, in two classes of collaborative games: aggregative continuous quadratic games and finite symmetric collaborative games. 
These insights prompt us, in the subsequent section, to use RQE for the design of algorithms to train collaborative agents in MARL.

\section{``Free-Lunch'' Theorems for Strategic Risk Aversion in Collaborative Games}\label{sec:theory}
\begin{figure}[t]
    \centering
    \includegraphics[width=1\textwidth]{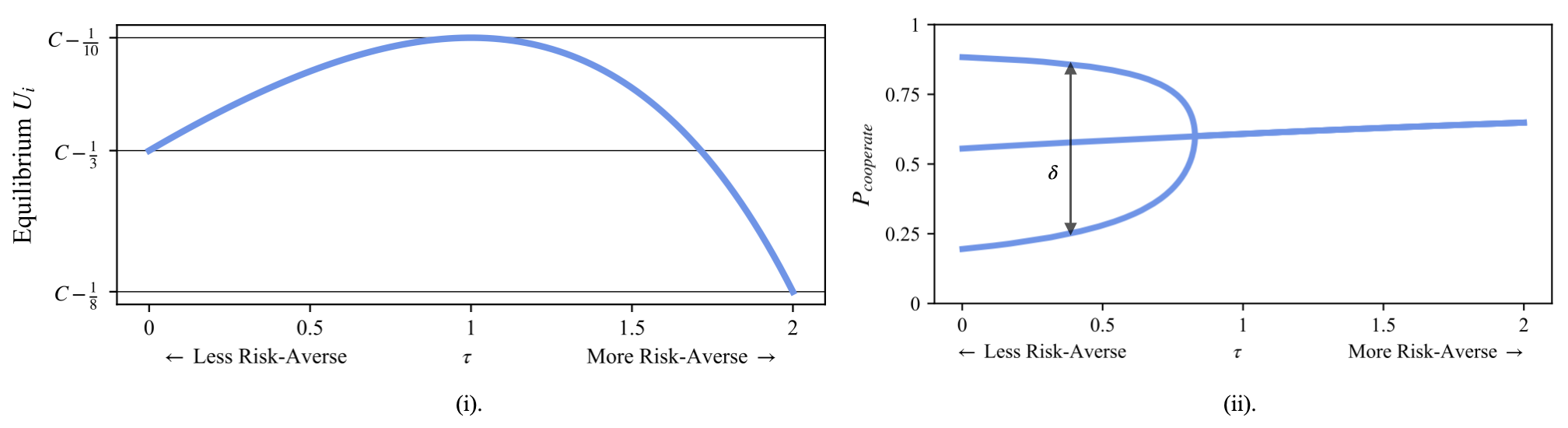}
    \caption{(i). Expected utility of each player as a function of the degree of risk aversion $\tau$ at equilibrium, when $\epsilon=1$ in~\cref{example:continuous_game}. Player's utilities can first \emph{increase} with risk aversion before decreasing due to over-conservatism, meaning that strategic risk aversion can yield better performing equilibria than Nash or QRE. (ii). Probability that a player collaborates at a RQE as a function of the level of risk aversion, for $\epsilon=0.2$ for the game in Example~\ref{example:free_riding}. Strategic risk aversion alleviates free riding entirely after a given threshold (i.e., $\delta \rightarrow 0$) as our theory predicts.} 
    \label{fig:theory}
\end{figure}

We first study strategic risk aversion in simple, \emph{structured} classes of collaborative games which are more tractable to analyze than complex MARL problems we design algorithms for.  We prove two ``free-lunch'' theorems that illustrate how strategic risk aversion can (1) increase collaboration and (2) mitigate free riding---both of which are essential to the task of partner generalization. These results serve as a principled rationale for incorporating strategic risk aversion into MARL. We empirically observe both these properties across our experiments in Section~\ref{sec:experiments}, validating that our insights extend beyond structured games.

\subsection{Strategic Risk Aversion can Induce Collaboration}\label{subsec:risk_collaboration}

Our first setting lies in continuous ($\mathcal{A}=\mathbb{R}^n$) quadratic aggregative games, popular in economics~\citep{corchon1994comparative} and control theory~\citep{paccagnan2018nash}. 
In line with our collaborative setting, we write the utility of each player as the difference between a shared reward that depends on the aggregate action $a_1+\ldots+a_N$, so that 
\begin{equation*}\label{theory:continuous_game:payoff}
    R(a_1,\ldots,a_N)
    =
    \frac{1}{2}
    \left\langle\sum_{i=1}^N a_i, H\sum_{i=1}^N a_i\right\rangle + \left\langle h,\sum_{i=1}^N a_i\right\rangle, 
    % \quad 
     %+ \langle c_i, a_i\rangle,
\end{equation*}
for some $H$ negative definite and $h$ of appropriate dimensions, and a private cost that only depends on the player's own action $a_i$, of the form $c_i(a_i)=\frac{\rho_i}{2}\norm{a_i}^2$ for $\rho_i>0$. For simplicity of exposition, we assume here that all players have the same degree of risk aversion $\tau_i$ and bounded rationality $\epsilon_i$, and the same $\rho_i$. Nevertheless, as we show in the Appendix~\ref{sec:proof:thm:rqe_cooperation}, all results extend to the case where these are player-dependent. 

Being a continuous game, the computation of an RQE is infinite dimensional, as it involves searching over mixed strategies over continuous action spaces.
In~\cref{proposition:continuous_game:rqe} in the appendix, however, we bypass this complexity by showing that there is a unique Gaussian RQE (i.e., RQE where mixed strategies are Gaussian) that can be computed efficiently.
This effectively allows us to study the effect of risk aversion on the expected shared reward $J(\tau)\coloneqq \mathbb{E}_{a_i\sim x_i^\star(\tau)} \left[R(a_1,\ldots,a_N)\right].$

Remarkably, as we show in the next theorem (proven in Appendix~\ref{sec:proof:thm:rqe_cooperation}), this function is strictly increasing. That is, risk \textit{monotonically} increases the shared reward and thus induces collaboration between the players. 

\begin{theorem}[risk induces collaboration]\label{thm:rqe_cooperation}
    % Consider the game with utilities in~\eqref{theory:continuous_game:payoff}, and
    Let $x_i^\star(\tau)$ be the Gaussian mixed strategy of player $i$ at the unique Gaussian RQE of the game, as a function of the degree of risk aversion $\tau$.
    Then, the expected shared reward 
    $\tau\mapsto J(\tau)$ is strictly increasing\footnotemark.
    That is, players contribute more to the shared reward as they become more risk-averse. 
\end{theorem}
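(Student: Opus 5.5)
The plan is to reduce everything to explicit Gaussian computations, diagonalize, and then check monotonicity eigen-direction by eigen-direction. I take as given \cref{proposition:continuous_game:rqe}: the unique Gaussian RQE exists and is computable.

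\textbf{Step 1: closed form of the risk-adjusted utility.} Fix player $i$ and suppose the others play Gaussians, so their aggregate $s_{-i}=\sum_{j\neq i}a_j$ is Gaussian, $s_{-i}\sim\mathcal{N}(m,\Sigma)$. By the Donsker--Varadhan duality, the inner infimum in \eqref{eq:risk_adjusted_objective} equals $-\frac1\tau\log\mathbb{E}_{x_{-i}}[e^{-\tau u_i}]$. Because $u_i$ depends on $a_{-i}$ only through $s_{-i}$, the adversary can be taken to act on $s_{-i}$ alone. Since $u_i$ is quadratic in $s_{-i}$, this log-moment generating function is finite exactly when $\frac1\tau\Sigma^{-1}+H\succ 0$. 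In that case it equals (up to a constant) the inf-convolution
\begin{equation*}
\min_s \tfrac12(a_i+s)^\top H(a_i+s)+h^\top(a_i+s)+\tfrac1{2\tau}(s-m)^\top\Sigma^{-1}(s-m).
\end{equation*}
Substituting $z=a_i+s$ gives a quadratic in $a_i+m$ with effective Hessian
\begin{equation*}
H_\tau=-\bigl((-H)^{-1}+\tau\Sigma\bigr)^{-1},
\end{equation*}
and an effective linear term $h_\tau=H_\tau H^{-1}h$. Adding $-\frac\rho2\|a_i\|^2$ and the entropy term, the entropy-regularized best response is Gaussian. Its covariance is $\epsilon(\rho I-H_\tau)^{-1}$, and its mean solves $(H_\tau-\rho I)\mu_i+H_\tau m+h_\tau=0$.

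\textbf{Step 2: symmetric fixed point and diagonalization.} By uniqueness and symmetry, the RQE has $\mu_i=\mu$ and $\Sigma_i=S$ for all $i$. Hence $m=(N-1)\mu$ and $\Sigma=(N-1)S$, which gives the coupled fixed point
\begin{equation*}
S=\epsilon\bigl(\rho I-H_\tau(S)\bigr)^{-1},\qquad \mu=-\bigl(NH_\tau(S)-\rho I\bigr)^{-1}H_\tau(S)H^{-1}h .
\end{equation*}
All matrices here are rational functions of $H$. I will therefore argue that $S$ commutes with $H$ (the solution of the fixed point restricted to this commutative algebra is the unique one). Working in an eigenbasis of $H$, with eigenvalues $-\lambda_k<0$, each coordinate decouples into a scalar problem. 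Writing $\eta_k=-(H_\tau)_{kk}>0$, the scalar equations are
\begin{equation*}
\eta_k=\frac{1}{\lambda_k^{-1}+\tau(N-1)\sigma_k},\qquad \sigma_k=\frac{\epsilon}{\rho+\eta_k}.
\end{equation*}
Then $\eta_k$ is the unique positive root of a quadratic equation, and $\eta_k(\tau)$ is strictly decreasing in $\tau$ (by the implicit function theorem applied to a monotone scalar equation). Consequently $\sigma_k$ is strictly increasing in $\tau$.

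\textbf{Step 3: monotonicity of $J$.} In the eigenbasis,
\begin{equation*}
J(\tau)=\sum_k\Bigl[-\tfrac{\lambda_k}{2}N^2\mu_k^2+\tilde h_kN\mu_k-\tfrac{\lambda_k}{2}N\sigma_k\Bigr],\qquad \mu_k=\frac{\eta_k\,\tilde h_k/\lambda_k}{N\eta_k+\rho}.
\end{equation*}
The mean part is a concave parabola in $N\mu_k$ with maximizer $\tilde h_k/\lambda_k$. The aggregate mean $N\mu_k$ always lies on the same side of this maximizer, so I need to know in which direction it moves as $\tau$ grows. This is where the argument gets delicate. Naively, a smaller $\eta_k$ shrinks $N\mu_k$ and the variance grows, which suggests $J$ decreases. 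So I expect the intended mechanism is the reverse, namely that the effective curvature felt by each player changes so as to reduce free-riding. I will therefore recheck the sign of $H_\tau$ carefully. In particular, I will check whether the adversary minimizing a concave reward in $s$ actually makes the effective interaction \emph{more} negative, i.e.\ $(-H)^{-1}-\tau\Sigma$ inside the inverse.

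\textbf{Main obstacle.} This sign check, together with the trade-off between mean improvement and variance loss in $\frac{d}{d\tau}J$, is the main obstacle. My plan is to express $dJ/d\tau=\sum_k (\partial J_k/\partial\eta_k)\,\eta_k'(\tau)$, substitute $\sigma_k$ as a function of $\eta_k$, and show that each summand has a fixed strict sign. The first step is to verify this numerically on the scalar case $n=1$ before attempting the general algebra.
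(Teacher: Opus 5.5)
\textbf{There is a genuine gap, and it starts in Step 1.} Your computation is not the paper's risk-adjusted utility. You apply the Donsker--Varadhan formula pointwise in $a_i$, which lets the fictitious adversary react to the realized action $a_i$. In \eqref{eq:risk_adjusted_objective}, however, the adversary picks a single $p\in\Delta(\mathcal{A}_{-i})$, and $U_i(x_i,p)$ averages over independent $a_i\sim x_i$, $a_{-i}\sim p$. The duality therefore applies to $a_{-i}\mapsto\mathbb{E}_{a_i\sim x_i}[u_i(a_i,a_{-i})]$. Since the coupling $\langle a_i,Hs_{-i}\rangle$ is linear in $a_i$, only the mean $\mu_i$ enters the log-moment generating function. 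The covariance $S_i$ then appears only in $\tfrac12\mathrm{tr}\bigl((H-\rho I)S_i\bigr)$ and in the entropy, so the best response has covariance $\epsilon(\rho I-H)^{-1}$, independent of $\tau$. This is \cref{lemma:best_response} ($\Sigma_i=-\epsilon_iH_{i,i}^{-1}$). It is also already stated, with the opposite sign convention for $H$, in \cref{proposition:continuous_game:rqe}, which you take as given.

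Three things in your plan are artifacts of this misreading: the $\tau$-dependent covariance $\epsilon(\rho I-H_\tau)^{-1}$, the coupled fixed point in $S$, and the mean-versus-variance trade-off you call the main obstacle. In fact the variance contribution to $J$ is constant in $\tau$.

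Independently, your $H_\tau$ has the wrong sign. Minimizing $\tfrac12 z^\top Hz+h^\top z+\tfrac1{2\tau}(z-w)^\top\Sigma^{-1}(z-w)$ over $z$ gives Hessian $(H^{-1}+\tau\Sigma)^{-1}=-\bigl((-H)^{-1}-\tau\Sigma\bigr)^{-1}$ in $w$. This is well defined exactly under the finiteness condition $\tfrac1\tau\Sigma^{-1}+H\succ 0$, and the adversary makes the curvature more negative, not less. So $\eta_k=\bigl(\lambda_k^{-1}-\tau(N-1)\sigma_k\bigr)^{-1}$ is strictly increasing in $\tau$. Your ``naive'' conclusion that $J$ decreases comes from this sign error. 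Finally, Step 3 is never carried out; you defer it to a numerical check, so as written there is no proof.

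With both corrections, your diagonalization does close in a few lines. Take $\sigma_k=\epsilon/(\rho+\lambda_k)$ fixed. Your mean formula gives $N\mu_k=\theta_k\tilde h_k/\lambda_k$ with $\theta_k=N\eta_k/(N\eta_k+\rho)\in(0,1)$ strictly increasing in $\tau$. The mean part of the $k$-th summand of $J$ equals $\frac{\tilde h_k^2}{\lambda_k}\bigl(\theta_k-\tfrac12\theta_k^2\bigr)$, which is strictly increasing on $(0,1)$ whenever $\tilde h_k\neq 0$. (Strictness needs $h\neq 0$, which the paper also tacitly assumes.)

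This is a genuinely different and more elementary route than the paper's. The paper passes to the surrogate game of the means (\cref{proposition:continuous_game:rqe:proof}) and implicitly differentiates the linear equilibrium system in each $\tau_i$. It then shows $\partial_{\tau_i}\bar J$ is a quadratic form in $m_1^\star+m_2^\star-\bar a$ whose matrix is positive definite, via a commuting-matrices argument. That argument covers heterogeneous $\tau_i,\epsilon_i,\rho_i$ and gives coordinatewise monotonicity. Your corrected version instead uses symmetry to reduce to one scalar problem per eigendirection of $H$, and it handles $N$ players directly through the aggregate $s_{-i}$, whereas the paper writes out only the two-player case.
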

\footnotetext{The function is strictly increasing on the domain of $\tau$ where RQE are well-defined, that is, when the risk parameters $\tau$ is ``not too large''; else, risk diverges to infinity. For details, we refer the reader to the appendix and, in particular, to~\eqref{proposition:continuous_game:rqe:assumption}.}
This increasing contribution to the shared reward generally entails an increase in the players' private cost. Thus, in practice, we observe a \textit{tradeoff} between being risk neutral ($\tau\to 0$) and more risk-averse ($\tau$ large) with large personal cost but large shared reward.
In a one-dimensional example, we can study this tradeoff analytically. 

\begin{example}\label{example:continuous_game}
    Consider two robots that move an object from the origin to a target location $\bar a$. Each robot exerts a force $a_i$, so that the total force is $a_1+a_2$. Players incur a penalty for their own actuation effort. Thus, the shared reward is $-\frac{1}{2}(a_1+a_2 - \bar a)^2$ and the personal cost is  $\frac{1}{2} a_i^2$ (with $\rho_i=1$ for simplicity). 
    In this case, the Gaussian RQE has mean $m_i^\star(\tau)=\frac{\bar a}{3-\tau\epsilon}$ and variance $\Sigma_i=\frac{\epsilon}{2}$.
    The equilibrium expected reward is concave in $\tau$ as shown in~\cref{fig:theory}, highlighting that risk aversion can yield higher utilities. 
\end{example}

\begin{remark} We refer to Theorem~\ref{theory:continuous_game:payoff} as a ``free-lunch" theorem because it suggests that one can add risk aversion into a game and \emph{not} incur a loss in performance. This can be observed in \cref{fig:theory}(i) in which equilibrium utility increases from a risk-neutral baseline before decreasing again. This is in stark contrast to single-agent RL or classic robust optimization where robustness necessitates sacrificing performance.
\end{remark}
We validate our theory in broader classes of games in~\cref{sec:experiments} and observe that, in some games, some degree of strategic risk aversion can increase collaboration and performance over a risk-neutral baseline. In other games, however, the structure is less amenable to such phenomena, and we observe that risk aversion introduces conservatism and a corresponding decrease in performance.

\subsection{Strategic Risk Aversion Alleviates Free-riding}\label{subsec:risk_free_riding}

We now prove a second ``free-lunch'' theorem associated with strategic risk aversion in collaborative games, namely that players who are strategically risk-averse will free-ride less at equilibrium. We prove this result in the context of 2-player collaborative \emph{finite} symmetric games played over the probability simplex. These are games in which player $i=1,2$'s utility is given by
\begin{align*}
    U_i(x_1,x_2)&=\mathbb{E}_{\substack{a_1 \sim x_1, a_2 \sim x_2}}[R(a_1,a_2)] -\mathbb{E}_{a_i\sim x_i}[c(a_i)],
    %\\&= \langle x_1, R x_2\rangle  -\langle c , x_i \rangle,
\end{align*}
where $R$ is a symmetric ($R(a_1,a_2)=R(a_2,a_1)$) shared reward, $c$ is a private cost, and $x_i$ are strategy vectors in the probability simplex in $\mathbb{R}^{n}$. 
Although the game is symmetric, equilibria can be asymmetric and thus exhibit \emph{free-riding}---i.e., one player can exert significantly less effort at equilibrium while still enjoying high utility. As we observe across our experiments, free-riding equilibria are ubiquitous across many MARL benchmarks and pose a significant challenge to the generalization of agents' strategies. 
Concretely, we define free riding in our game as follows.
\begin{definition}[free-riding]
    An RQE $(x_1, x_2)\in\Delta(\mathcal{A})\times\Delta(\mathcal{A})$ exhibits free-riding with degree $\delta\geq 0$ if the difference in costs paid by each player at the RQE is $\delta$; i.e., $| \mathbb{E}_{a_1\sim x_1}[c(a_1)] - \mathbb{E}_{a_2\sim x_2}[c(a_2)]|=\delta$. If $\delta=0$, the RQE does not exhibit free-riding. 
\end{definition}

The following theorem shows that as a player's degree of risk aversion increases, free-riding becomes less prevalent.

\begin{theorem}[risk removes free-riding]\label{thm:free_riding}
    There exists a game-dependent constant $C$ depending only on $R$, $c$, and the degree of bounded rationality $\epsilon$ such that if player's degrees of risk aversion $\tau$ in a two-player collaborative game satisfy 
    $\tau >\frac{C}{\delta^2}$,
    then the game can admit no RQE with degree of free-riding greater than $\delta$.
\end{theorem}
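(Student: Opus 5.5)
The plan is to show that, for large $\tau$, both players at any RQE approximately maximize the \emph{same} strongly concave function. Strong concavity then forces their strategies to be close to each other, and hence their expected costs to be close. I will bound $\|x_1-x_2\|_1^2$ by $C'/\tau$ and conclude with
\[
\delta=\bigl|\langle c,x_1\rangle-\langle c,x_2\rangle\bigr|\le \|c\|_\infty\|x_1-x_2\|_1 .
\]

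\textbf{Step 1 (closed form of the risk-adjusted utility).} In the finite game the inner infimum in~\eqref{eq:risk_adjusted_objective} is a KL-regularized linear minimization. By the Donsker--Varadhan / Gibbs variational formula it equals
\[
U_1^{\tau}(x_1,x_2)=-\frac{1}{\tau}\log\sum_{j}x_{2,j}\,e^{-\tau (R^\top x_1)_j}-\langle c,x_1\rangle .
\]
The minimizer is $p^\star_j\propto x_{2,j}e^{-\tau (R^\top x_1)_j}$. The same holds for player 2, with the roles swapped and $R$ symmetric.

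\textbf{Step 2 (uniform interiority of RQE).} The function $x_1\mapsto U_1^{\tau}(x_1,x_2)$ is concave, being an infimum of linear functions, and $-\epsilon H$ makes it strictly concave. The first-order conditions therefore give a softmax form,
\[
x_{1,j}\propto\exp\bigl((Rp^\star-c)_j/\epsilon\bigr),
\]
by Danskin's theorem, since the gradient of the envelope is $Rp^\star-c$. Because $\|Rp^\star-c\|_\infty\le K:=\|R\|_\infty+\|c\|_\infty$, every RQE satisfies
\[
x_{i,j}\ge \frac{e^{-2K/\epsilon}}{n}=:\underline{x}
\]
for all $i,j$, uniformly in $\tau$. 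I expect this to be the main obstacle. It is the step that prevents the adversary's partner distribution from having tiny mass on the worst action, and it is exactly where the dependence of $C$ on $\epsilon$ enters.

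\textbf{Step 3 (uniform approximation by a common objective).} Define
\[
f(x)=\min_j (R^\top x)_j-\langle c,x\rangle-\epsilon H(x),
\]
which by symmetry of $R$ is the same function for both players. From Step 1, for any $x_2$ with entries at least $\underline{x}$,
\[
\min_j v_j\;\le\; -\frac{1}{\tau}\log\sum_j x_{2,j}e^{-\tau v_j}\;\le\; \min_j v_j+\frac{\log(1/\underline{x})}{\tau}.
\]
Hence, at an RQE, the objective $g_1(\cdot)=U_1^{\tau,\epsilon}(\cdot,x_2)$ satisfies $|g_1-f|\le\eta:=\log(1/\underline{x})/\tau$ uniformly on the simplex. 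The same bound holds for player 2.

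\textbf{Step 4 (stability of maximizers).} By Pinsker, $f$ is $\epsilon$-strongly concave with respect to $\|\cdot\|_1$. Let $\bar x=\arg\max f$. Since $x_1$ maximizes $g_1$,
\[
f(x_1)\ge g_1(x_1)-\eta\ge g_1(\bar x)-\eta\ge f(\bar x)-2\eta .
\]
Strong concavity then gives $\frac{\epsilon}{2}\|x_1-\bar x\|_1^2\le 2\eta$. The same bound holds for $x_2$. Therefore
\[
\|x_1-x_2\|_1^2\le \frac{16\eta}{\epsilon}=\frac{16\,(2K/\epsilon+\log n)}{\epsilon\,\tau}.
\]
Combining this with the bound on $\delta$ above, any RQE has free-riding degree satisfying $\delta^2\le C/\tau$, where
\[
C=16\,\|c\|_\infty^2\,\frac{2K/\epsilon+\log n}{\epsilon}
\]
depends only on $R$, $c$ and $\epsilon$ (with $n$ fixed by the game). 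Equivalently, if $\tau>C/\delta^2$, no RQE has free-riding degree greater than $\delta$.
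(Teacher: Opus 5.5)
Your proposal is correct and follows essentially the same route as the paper's proof: uniform interiority of RQE strategies from the softmax form, the sandwich $W(x)\le U^{\tau}(x,x_2)\le W(x)+\log(1/\underline{x})/\tau$ (the paper gets the upper bound by bounding $\mathrm{KL}(q,x_2)\le\log(1/\underline{x})$ directly rather than through the Gibbs formula), $\epsilon$-strong concavity of the common objective $W-\epsilon H$ to place both $x_1$ and $x_2$ near its maximizer, and a final cost-difference bound. Your constants are slightly looser, since you use the two-sided $2\eta$ where the one-sided $\eta$ suffices and $\|c\|_\infty$ where the paper uses $(c_{\max}-c_{\min})/2$, but this does not affect the statement.
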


This theorem (proven in Appendix~\ref{app:free_riding_proof}) shows that as strategic risk aversion increases, RQE must exhibit less free-riding. The intuition for this result is as follows: suppose that player $i$ free-rides at a RQE. If they become risk-averse, the worst-case deviation for their opponent is simply to stop putting any effort into the game. Since player $i$ is free-riding and not putting their own effort, this deviation causes a large drop in performance. Thus, at a risk-averse equilibrium, a player must contribute some of their own effort. We validate this intuition in a simple coordination game.

\begin{example}\label{example:free_riding}
    Consider the game with $\mathcal{A}=\{\text{C}=\text{collaborate},\text{D}=\text{defect}\}$,  $R(a_1,a_2)=1$ if $a_1=\text{C}$ or $a_2=\text{C}$ (at least one player collaborates) and $R(a_1,a_2)=0$ if $a_1=a_2=\text{D}$ (both defect), and $c(a_i)=0.4$ if $a_i=\text{C}$ (cost of collaboration) and $c(a_i)=0.0$ otherwise. %(taken from~\citet{lanzetti2025strategically}).
    We fix $\epsilon=0.2$ and study the effect of the degree of risk aversion on the game in~\cref{fig:theory}.
    Without risk aversion, the game has one symmetric and two free-riding RQE. As the degree of risk aversion increases, players become increasingly collaborative and the degree of free-riding diminishes. Importantly, there is a threshold  where the two free-riding RQEs disappear and ``merge'' into the symmetric one---a phenomenon known as pitchfork bifurcation.
    % in bifurcation theory.
\end{example}   

We validate this theorem through our experiments in both the Overcooked gridworld and the Tag environment. In both environments, we empirically observe that Nash agents naturally learn to free-ride at equilibrium which in turn  severely degrades their ability to generalize to new agents. Our strategic risk-averse agents however tend to put more effort into the task and thus generalize better to new agents.

\begin{remark}
In our ablation studies on the effects of risk aversion, we empirically observe a threshold-like phenomenon under which free riding disappears as risk aversion increases past a certain threshold. This seems to validate that the implications of~\cref{thm:free_riding} hold beyond the simple classes of games covered by the theorem.
\end{remark}
\section{MARL Algorithm Design}\label{sec:algorithm}

The theoretical benefits of strategic risk aversion---increased collaboration and no free-riding---prompt us to design scalable \emph{strategically risk-averse} training algorithms for MARL. We do so in the general setting of policy optimization algorithms. For a more detailed mathematical background of risk aversion in MARL, we refer to Appendix~\ref{app:srpo_math_background}.

\subsection{Meta-algorithm for Strategically Risk-averse Policy Optimization}\label{subsec:meta_algorithm}

In standard policy optimization, each agent seeks a policy $\pi_{\theta_i}$, parametrized by parameters $\theta_i$, that maximizes an objective function $\mathcal{L}_i(\theta_i,\theta_{-i})$, which is a function of the policies of all agents. 
Inspired by our strategically risk-averse utility~\eqref{eq:risk_adjusted_objective}, it is tempting to replace this objective function with
\begin{equation}\label{eq:srpo:loss_with_sup}
    \inf_{\phi_i}\mathcal{L}_i(\theta_i,\phi_i) + \frac{1}{\tau_i}\KL(\phi_i,\theta_{-i}) - \epsilon_i H(\theta_i),
\end{equation}
where, to ease exposition, we slightly overload the notation of $\KL(\phi_i,\theta_{-i})$ to be the (expected) KL-divergence between the induced policies across states (if the problem is one of MARL) .
Unfortunately, the computation of this objective poses a significant computational challenge. The mere evaluation of this objective, as well as the computation of its gradient, requires solving a high-dimensional non-convex optimization problem---the infimum over $\phi_i$---and is therefore computationally prohibitive. 

To bypass this complexity, we follow~\citet{mazumdar2024tractableequilibriumcomputationmarkov} and resort to an auxiliary game in which we augment the number of players. 
For each agent $i$, we replace the infimum in~\eqref{eq:srpo:loss_with_sup} by an \emph{adversary} that aims to inflict maximum damage to agent $i$, while not deviating too much from the other agents' policies, and is thus designed to solve the infimum in~\eqref{eq:srpo:loss_with_sup}. As such, the objective of agent $i$ is now a function of their parameters $\theta_i$ and those of their adversary:
\begin{equation}\label{eq:srpo:loss}
    \mathcal{L}_i(\theta_i,\phi_i) + \frac{1}{\tau_i}\KL(\phi_i,\theta_{-i}) - \epsilon_i H(\theta_i),
\end{equation}
where the KL term can now be dropped as it does not affect the agent's parameters. The objective of the adversary agent, who controls the adversarial policy $\pi_{\phi_i}$ and whose goal is to minimize the objective of agent $i$, depends on the parameters of all agents and is therefore given by
\begin{equation}
    -\mathcal{L}_i(\theta_i,\phi_i) - \frac{1}{\tau_i}\KL(\phi_i,\theta_{-i}) + \epsilon_i H(\theta_i),
\end{equation}
where the entropy term can be dropped. While adversarial training has been proposed as a robustness technique, such methods are often unstable to train and result in over-conservative policies~\citep{advrobust, lauffer2025robust}. The key insight resulting from strategic risk aversion is the idea to constrain the adversary to not deviate from the opponents policy (which is also evolving during training). This extra step stabilizes training.

This framing provides us with a meta-algorithm for strategically risk-averse MARL. We now instantiate this algorithm in the case of PPO, also called independent PPO (IPPO) when deployed in multi-agent settings \citep{yu2022mappo},  
and introduce \textit{Strategically Risk-averse Policy Optimization} (SRPO), the first strategically risk-averse algorithm for MARL. 

\subsection{SRPO}\label{sec:srpo}

When using PPO~\citep{schulman2017proximalpolicyoptimizationalgorithms}, the objective of agent $i$ is given by
\begin{equation}\begin{aligned}
    &\mathcal{L}_i^\mathrm{IPPO}(\theta_i,\theta_{-i})
    =\mathcal{L}_i^\mathrm{CLIP}(\theta_i,\theta_{-i})-\epsilon_i H(\theta_i)(o_i^t),
\end{aligned}\end{equation}
where the clipped surrogate objective is
\begin{equation*}\begin{aligned}
    \mathcal{L}_i^\mathrm{CLIP}(\theta_i,\theta_{-i})
    =& \:\mathbb{E}_{t} \Big[ \min \Big( r_i^t(\theta_i) \hat{A}_i^t(\theta_i,\theta_{-i}),\\
    &\text{clip}(r_i^t(\theta_i), 1 - \delta, 1 + \delta) \hat{A}_i^t(\theta_i,\theta_{-i}) \Big) \Big],
\end{aligned}\end{equation*}
with $r_i^t(\theta_i) = \frac{\pi_{\theta_i}(a_i^t | o_i^t)}{\pi_{\theta_{i, \text{old}}}(a_i^t | o_i^t)}$ being the importance sampling ratio, $\hat{A}_i^t(\theta_i,\theta_{-i})$ the estimated advantage at time $t$ (and depends on $\theta_{-i}$ as it is computed using samples of the policies of all agents), $\epsilon_i H(\theta_i)(o_i^t)$ a (negative) entropy regularizer, and $o_i^t$ the observation of agent $i$ at time $t$.

We can now follow the meta-algorithm above to derive strategically risk-averse proximal policy optimization.
Specifically, the objective of agent $i$ is
\begin{equation}\label{eq:srpo_loss_agent}
    \mathcal{L}_i^\mathrm{SRPO}(\theta_i,\phi_i)
    =
    \mathcal{L}_i^\mathrm{CLIP}(\theta_i,\phi_i)-\epsilon_i H(\theta_i)(o_i^t).
\end{equation}
Since the PPO objective already includes entropy regularization, the additional entropy term in~\eqref{eq:srpo:loss} is superfluous. The objective of player $i$'s adversary is then
\begin{equation}\label{eq:srpo_loss_adversary}
    \bar{\mathcal{L}}_i^\mathrm{SRPO}(\phi_i,(\theta_i,\theta_{-i}))
    =
    -\mathcal{L}_i^\mathrm{CLIP}(\theta_i,\phi_i)- \frac{1}{\tau_i}\KL(\phi_i,\theta_{-i}).
\end{equation}
Overall, SRPO consists of running gradient descent on these objectives. 
We present pseudocode and details in Appendix~\ref{app:srpo_details}, and note here that the iteration structure and complexity of SRPO is very similar to that of IPPO.

\section{Experiments}\label{sec:experiments}
We evaluate the proposed SRPO algorithm against IPPO on three cooperative multi-agent benchmarks and one LLM-based debate task. We use IPPO as our benchmark due to its emerging role as the dominant \emph{scalable} algorithm for collaborative MARL across numerous benchmarks~\citep{yu2022mappo,PPOentropy,dewitt2020independentlearningneedstarcraft}.
The experimental design closely follows the theoretical motivations developed earlier. Concretely, our experiments test the following claims:
\begin{itemize}
    \item\textbf{Partner generalization.} SRPO achieves higher and more stable cross-play performance than IPPO when paired with previously unseen partners.
    \item \textbf{Shared reward and free-riding.} In cooperative tasks with private costs, SRPO converges to equilibria with higher shared reward and reduced free-riding behavior.
    \item \textbf{Scalability.} SRPO remains practical in complex multi-agent settings when implemented with policy sharing and adversary sampling.
\end{itemize}
We implement SRPO and IPPO in four representative environments: (1) a modified grid-world inspired by Overcooked AI \citep{carroll2019}, which allows explicit modeling of both shared team rewards and private costs; (2) Tag \citep{lowe2017}, a continuous-control coordination task; (3) 4-player Hanabi \citep{bard2020} with 3 colors and 3 ranks, a partially observed cooperative game requiring implicit coordination and communication used to study zero-shot coordination~\citep{PPOentropy}, and (4) an LLM-based multi-agent debate setting \citep{du2024debate,park-etal-2025-maporl} on the GSM8K dataset \citep{cobbe2021training} in which agents must collaborate to solve math problems.

\paragraph{Training and evaluation.} Across all environments, for an apples-to-apples comparison, the number of interactions with the environment for both SRPO and IPPO are kept the same. We evaluate partner generalization using \textbf{cross-play performance}, where a trained agent is paired with \emph{held-out} partners not encountered during training. This evaluation makes the existence of free-riding-type policies extremely apparent: the policies lead to a distinct \emph{checker-board} pattern, which emerges when free-riding agents are paired with one another and fail to achieve good performance. We consistently observe this phenomenon arising from IPPO-trained agents across environments.
 
\paragraph{Ablation studies.} Across environments, we also report results of ablation studies on both entropy and strategic risk aversion to further validate our theory, show how our results are robust to hyper-parameter selection, and highlight how entropy alone cannot guarantee generalization in general collaborative games despite positive empirical evidence in special cases~\citep{PPOentropy}.
The code will be released upon publication, and the detailed setup is in Appendix~\ref{sec:experimental-details}.

\begin{figure*}[!t]
    \centering
    \includegraphics[width=0.92\textwidth]{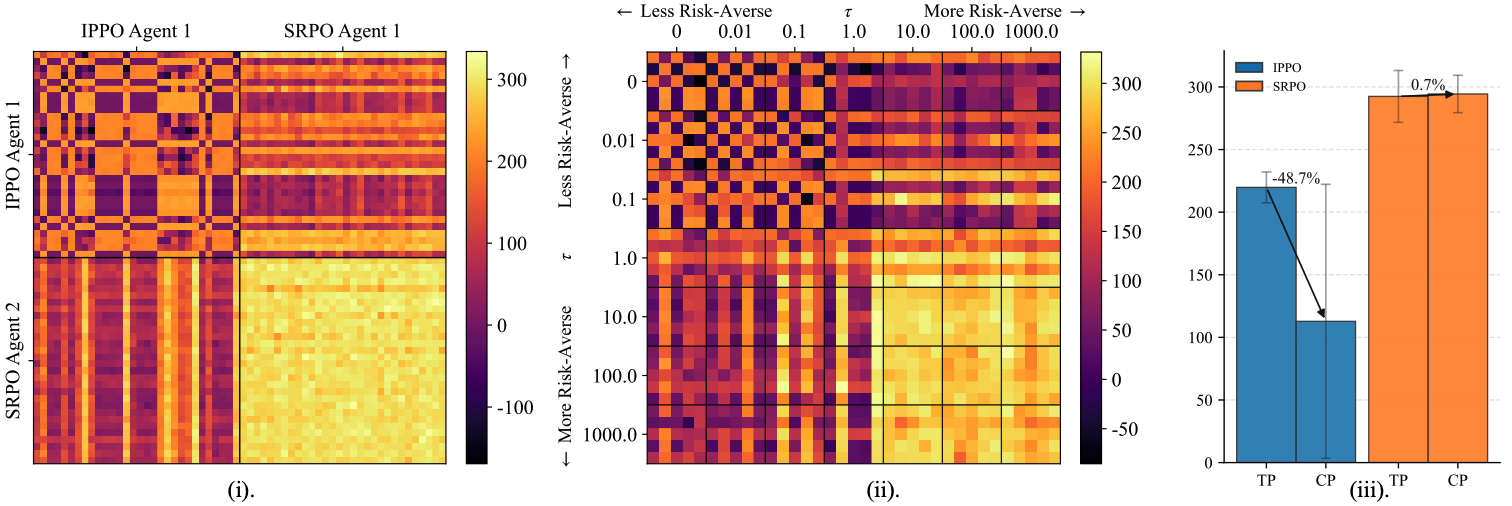}
    \caption{Cross-play and ablation experiments in the overcooked environment. Each square represents the average reward across 10 episodes of length 128 for each pair of agents. Diagonal blocks represent the training performance of the agents. (i) We directly observe that IPPO ($\epsilon=0.1$) learns to free-ride while SRPO ($\tau=10, \epsilon=0.1)$ does not. Furthermore, mirroring~\cref{thm:rqe_cooperation}, we observe that SRPO yields higher utility strategies (i.e., risk improves performance). (ii) Results of an ablation experiment, varying $\tau$ while holding $\epsilon=0.1$. We empirically observe that free-riding completely disappears as risk aversion increases, mirroring the result in~\cref{thm:free_riding}. (iii) Difference between Training Performance (TP) and Cross-play Performance (CP) (mean and standard deviation): the performance of IPPO drastically decreases, with lower average and larger standard deviation in cross-play, while the performance of SRPO is unaffected.}
    \label{fig:overcooked}
\end{figure*}

\subsection{Overcooked Gridworld}\label{sec:exp-overcooked}
We construct an environment based onOvercooked~\cite{ruhdorfer2025overcooke} multi-agent benchmark to serve as a simple laboratory to verify our theory. In our Overcooked Gridworld, agents receive a shared \emph{team reward} when an onion is picked up ($+1$) and placed into a pot ($+10$), while each agent incurs \emph{private costs} for movement ($-0.2$) and collision ($-2$). This induces a canonical social dilemma: although the team objective is collaborative, each agent has an incentive to avoid costly effort and rely on the teammate to complete the task. Results are shown in \cref{fig:overcooked}.

\paragraph{SRPO reduces free-riding and exhibits higher performance.} Empirically, IPPO always converges to a free-riding equilibrium where one teammate avoids movement but collects high reward due to the effort of their partner---evident in the characteristic checkerboard pattern observed in the top left block of~\cref{fig:overcooked}(i). In contrast, SRPO learns a policy in which \emph{both} agents coordinate and contribute to the task. This behavior matches SRPO's objective: because training explicitly optimizes performance against \emph{plausible adversarial partner deviations} (\cref{sec:algorithm}), non-contribution becomes risky. As we summarize in~\cref{fig:overcooked}(iii), SRPO thus attains both higher performances and improved cross-play reliability, consistent with our theoretical predictions. 

\paragraph{Ablation study of the degree of strategic risk aversion $\tau$.}
 As shown in \cref{fig:overcooked}(ii), small values of $\tau$ and IPPO ($\tau=0$) lead to free-riding and thus poor cross-play performance, reflecting reliance on fragile conventions and lack of effort on the part of agents. As $\tau$ increases, collaboration becomes more stable: policies converge to a consistent solution, free-riding disappears, and cross-play performance improves, validating that Theorem~\ref{thm:free_riding} holds more broadly. We also observe that SRPO is robust to choices of $\tau$. Ablations show consistent gains over IPPO across orders of magnitude of $\tau$. We make similar observations in ablation experiments in the Tag environment presented in Appendix~\ref{sec:ablation}.

\subsection{Tag}\label{sec:exp-tag}
In Tag, two chasers must coordinate to catch a runner. The chasers get a shared positive reward (+1) if a chaser collides with a runner. We first train runner policies independently via adversarial training, and then train chaser policies using either IPPO or SRPO against a fixed runner to keep the game collaborative. Cross-play performance is evaluated along two axes: (i) pairing trained chasers with an unseen chaser partner (teammate shift), and (ii) evaluating chasers against a runner policy not observed during training (opponent shift). The results are shown in \cref{fig:tag-cross-play}.

\begin{figure*}[t!]
    \centering
      \includegraphics[width=0.92\textwidth]{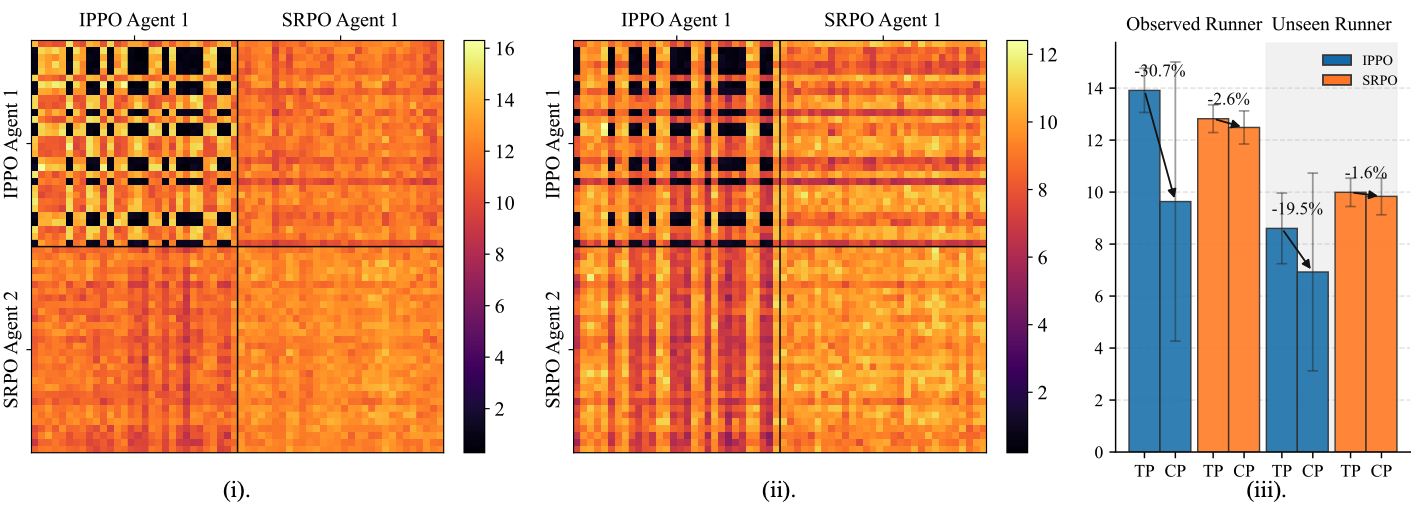}
    \caption{
    Cross-play performances of SRPO ($\tau=10, \epsilon=0.01$) and IPPO ($\epsilon=0.01$) agents in the Tag environment against a runner seen during training (i) and an unseen runner (ii). Each square represents the average reward of two agents across 100 runs of length 100. IPPO does well in training environments (yet still clearly learns free-riding like policies), but their performance degrades drastically against an unseen runner. SRPO has slightly lower training performance but clearly learns a more generalizable policy. (iii) Difference between Training Performance (TP) and Cross-play Performance (CP) (mean and standard deviation): the performance of IPPO drastically decreases, with lower average and larger standard deviation in cross-play, while the performance of SRPO is almost unaffected.
    }
    \label{fig:tag-cross-play}
\end{figure*}

\paragraph{SRPO learns more generalizable policies.} We empirically observe that in this standard benchmark environment, IPPO-trained agents can learn high performing policies. However, we also observe that these policies often encode free-riding and can overfit to both the runner encountered during training and specific coordination conventions it develops with its training partner. While this can produce strong in-distribution performance (i.e., on the diagonal), it degrades sharply under either teammate or runner shifts, as seen in the sharp drops in performance as IPPO agents are played against each other, as we summarize in~\cref{fig:tag-cross-play}(iii).

In contrast, SRPO has slightly worse performance in the training environment. This highlights that in some environments incorporating strategic risk aversion may sometimes require tradeoffs with performance. In the case of Tag, we can trace this back to the lack of the aggregative structure required for~\cref{thm:rqe_cooperation}. Nevertheless, SRPO attains consistently higher and more stable cross-play performance across both other partners (including free-riding IPPO agents) and runners, showing their potential to generalize more widely.

\subsection{Hanabi}
Hanabi is a collaborative card game in which players must play cards in the correct sequence, with the twist that you can see everyone’s cards except your own. It is a canonical benchmark for collaboration in MARL~\citep{bard2020}. 
Following \citet{lauffer2025robust}, we consider a simplified game with $3$ colors and $3$ ranks. In this setting, agents may develop private communication protocols during training that fail to generalize to unseen teammates.
We focus here on the 4-player variant to demonstrate the scalability of SRPO with the number of players, and present results on 2-player games in \cref{sec:experimental-details}.
% \vspace{-1.5ex}
\begin{figure}[!t]
    \begin{subfigure}{0.48\textwidth}
        \centering
        \includegraphics[height=6cm]{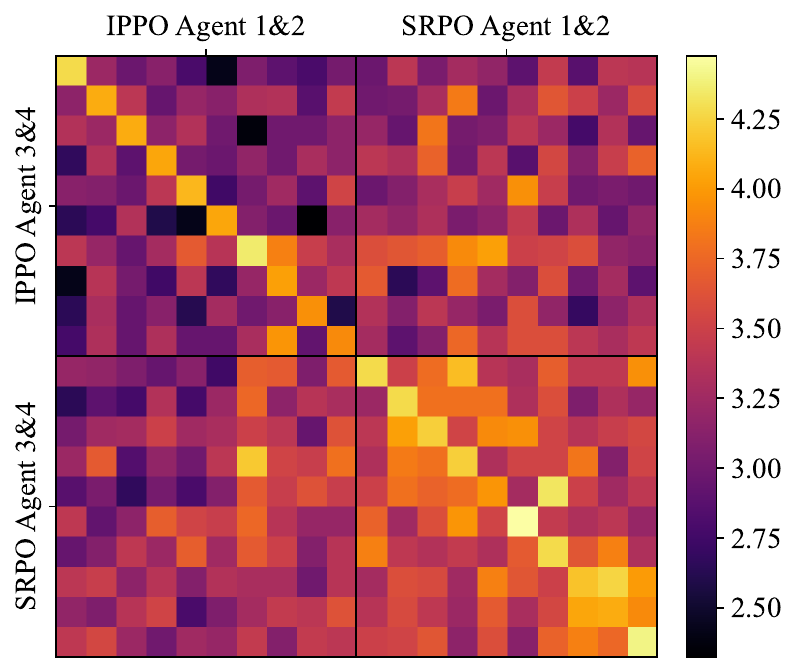}
        \caption{}
        \label{fig:hanabi-4}
    \end{subfigure}
    \hfill
    \begin{subfigure}{0.48\textwidth}
        \centering
        \includegraphics[height=6cm]{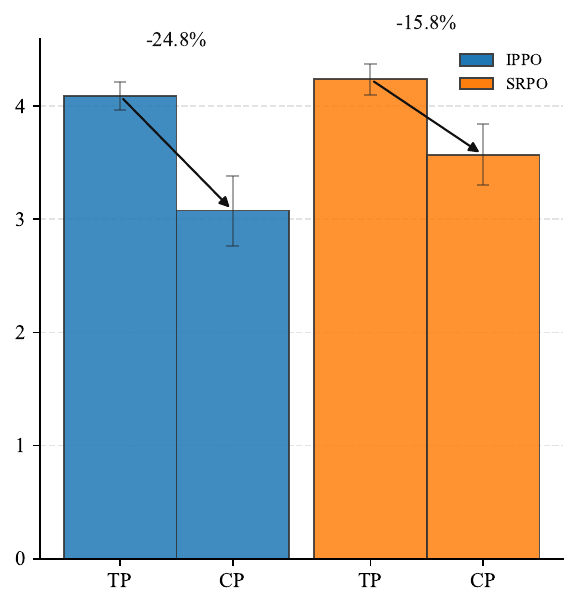}
        \caption{}
        \label{fig:hanabi-drop}
    \end{subfigure}
    \caption{
    Cross-play performance of SRPO and IPPO agents in the Hanabi environment. We use policy sharing to validate the scalability of SRPO. During evaluation, we let agents 1 and 2 share a policy and agents 3 and 4 share a policy, enabling pairwise cross-play evaluation. In \cref{fig:hanabi-4}, each square represents the average reward of the two agent groups across 100 runs, each of length 100. \cref{fig:hanabi-drop} shows the differences between training performance (TP) and cross-play performance (CP) (mean and standard deviation) for both IPPO and SRPO. SRPO remains more robust when paired with an unseen partner. Here, we set the entropy coefficient to be $\epsilon=0.001$ for both IPPO and SRPO, and $\tau=0.01$ for SRPO.
    }\label{fig:hanabi-cross-play}
\end{figure}

\paragraph{Scalability of SRPO.} To scale SRPO to larger numbers of agents we make use of policy sharing---i.e., we share a single risk-averse policy across agents and maintain only one adversary policy, randomly assigning the adversarial role during training. The results are shown in \cref{fig:hanabi-cross-play}. where we observe that SRPO exhibits more stable cross-play performance than IPPO.

\subsection{Multi-LLM-Agent Debate on GSM8K}
To conclude, we present a proof-of-concept on using SRPO in a language-based cooperative reasoning setting  in which agents must collaborate to solve grade school math word problems~\citep{cobbe2021training} through structured debate.

In this setup, two agents engage in a three-round iterative debate protocol. In the first round, each agent independently observes the question and produces its own reasoning and answer. In subsequent rounds, each agent observes the original question as well as both agents' outputs from the previous round, and then refines its response. Successfully solving a problem therefore requires more than producing a correct answer in isolation: each agent must reinforce correct reasoning while remaining robust to potentially misleading or incorrect proposals from its teammate. This naturally induces a cooperative yet adversarial interaction, where robustness to the partner’s policy plays a critical role in achieving reliable joint performance.

Both IPPO (the existing state-of-the-art) and SRPO agents are trained using multiple base language models, including Qwen2.5-0.5B-Instruct (Q0.5B) and Qwen2.5-3B-Instruct (Q3B)~\citep{bai2025qwen2}, as well as Qwen3-0.6B (Q0.6B) and Qwen3-4B-Instruct-2507 (Q4B)~\citep{yang2025qwen3}, using the verl training framework~\citep{sheng2024hybridflow}. Here, we set the entropy coefficient to be $\epsilon=0$ for both IPPO and SRPO, and $\tau=10$ for SRPO. 

We evaluate performance along two complementary robustness dimensions. First, we measure cross-play performance between agents trained with the same method but using different model scales. Performance is quantified using \textbf{joint accuracy}, where a problem is counted as correct only if \emph{both} agents produce the correct final answer. This metric directly reflects the ability of agents to coordinate reliably under policy heterogeneity. Second, we evaluate robustness to drastic partner shifts by pairing a trained Qwen agent with an \emph{untuned} Llama~3.2-1B-Instruct model~\citep{touvron2023llama}. In this setting, we measure the accuracy of the trained agent alone, isolating its ability to maintain correct reasoning despite interacting with a potentially unreliable partner. Results are summarized in \cref{tab:crossplay-collective,tab:untrained-partner}.

\begin{table}[!t]
\centering
\small
\caption{Cross-play performance (joint accuracy) across model scale combinations. Percentage improvement is computed relative to IPPO. Across all model combinations, SRPO consistently outperforms IPPO in terms of cross-play performance.}
\label{tab:crossplay-collective}
\setlength{\tabcolsep}{3.5pt}
\begin{tabular}{lcccccc}
\toprule
\textbf{Method} &
\textbf{Q0.5B\&Q0.6B} &
\textbf{Q0.5B\&Q3B} &
\textbf{Q0.5B\&Q4B} &
\textbf{Q0.6B\&Q3B} &
\textbf{Q0.6B\&Q4B} &
\textbf{Q3B\&Q4B} \\
\midrule
SRPO & \textbf{0.622} & \textbf{0.686} & \textbf{0.873} & \textbf{0.616} & \textbf{0.848} & \textbf{0.732} \\
IPPO & 0.605 & 0.651 & 0.846 & 0.573 & 0.711 & 0.709 \\
\midrule
Improvement (\%) & +2.81\% & +5.38\% & +3.19\% & +7.50\% & +19.27\% & +3.24\% \\
\bottomrule
\end{tabular}
\end{table}

\begin{table}[!t]
\centering
\small
\caption{Performance (the accuracy of the trained agent itself) when paired with an \emph{untuned} Llama~3.2-1B-Instruct model. Percentage improvement is computed relative to IPPO. Across all model sizes, SRPO agents consistently outperform IPPO, indicating that SRPO endows the trained the power to be robust to the teammate.}
\label{tab:untrained-partner}
\setlength{\tabcolsep}{5pt}
\begin{tabular}{lcccc}
\toprule
\textbf{Method} &
\textbf{Q0.5B} &
\textbf{Q0.6B} &
\textbf{Q3B} &
\textbf{Q4B} \\
\midrule
SRPO & \textbf{0.405} & \textbf{0.671} & \textbf{0.632} & \textbf{0.917} \\
IPPO & 0.378 & 0.587 & 0.552 & 0.901 \\
\midrule
Improvement (\%) & +7.14\% & +14.31\% & +14.49\% & +1.78\% \\
\bottomrule
\end{tabular}
\end{table}

\paragraph{SRPO extends to LLM-based multi-agent systems.}
As shown in \cref{tab:crossplay-collective}, SRPO consistently improves joint accuracy over IPPO across all cross-play combinations, with gains of up to $19.27\%$. This demonstrates that SRPO enhances coordination robustness across heterogeneous policies, enabling agents to reliably reach correct joint outcomes. Furthermore, as shown in \cref{tab:untrained-partner}, SRPO substantially improves individual accuracy when paired with an untuned partner, achieving gains of up to $14.49\%$. This indicates that SRPO-trained agents are significantly more robust to severe partner mismatch, maintaining correct reasoning even when interacting with unreliable teammates. Together, these results highlight SRPO’s effectiveness in promoting robust cooperative reasoning in multi-agent language environments.
In \cref{sec:llm-debate}, we provide evidence that SRPO primarily improves collaboration, rather than the individual reasoning ability of each agent.
\section{Conclusion}
In this work, we introduce strategic risk aversion as a principled inductive bias for learning collaborative policies that generalize to unseen partners. Using the risk-averse quantal response equilibrium (RQE) framework, we show that strategic risk aversion can both increase contributions to shared rewards and eliminate free-riding at equilibrium, demonstrating that robustness need not be purely conservative. Guided by these insights, we propose Strategically Risk-Averse Policy Optimization (SRPO), a scalable modification of standard policy optimization. Across cooperative benchmarks (Overcooked, Tag, Hanabi) and an LLM-based debate task on GSM8K, SRPO yields improved robustness to heterogeneous or unreliable teammates, including across model scales. Future work includes extending strategic risk aversion to broader agentic AI settings such as human–AI collaboration and multi-agent foundation model systems.
\bibliography{risk}
\bibliographystyle{apalike}

\newpage
\appendix
\onecolumn
\section{Further Discussion on Related Work}\label{app:related}
In this section, we discuss a broader set of related work to better position our work in the landscape of collaborative multi-agent learning and risk-averse/robust MARL.
\paragraph{Collaborative MARL and Partner Generalization}
Research into solving collaborative decision-making problems using MARL has a long history~\citep{old_survey_coop}, originating from a desire for solving decentralized decision-making problems~\citep{old_tsitskilis}. Far from fading, these issues remain at the fore of research interest with the advent of powerful AI systems and a desire to build large teams of AI agents~\citep{tran2025multiagentcollaborationmechanismssurvey} for solving collaborative tasks. Emerging from this literature, is a general consensus that classic single-agent policy optimization algorithms like PPO~\citep{schulman2017proximalpolicyoptimizationalgorithms}, used in a decentralized way, can yield state-of-the-art or near state-of-the-art performance without needing to be overly specialized to the multi-agent nature of the problem~\citep{yu2022mappo,dewitt2020independentlearningneedstarcraft,PPOentropy}. Despite such results, learned policies resulting from such approaches can fail to generalize to unseen agents~\citep{PPOentropy} and new environments. We validate this through our experiments and further observe that these state-of-the-art algorithms are consistently prone to learning free-riding strategies. 

Beyond the lines of work mentioned in Section~\ref{sec:related}, other important lines of work on addressing the partner generalization problem seek to avoid brittle coordination conventions by learning symmetry-invariant or partner-agnostic policies \citep{hu20a,treutlein21a,muglich2022equivariant}. These  often require strong structural assumptions and may not scale to complex collaborative environments. Another set of approaches attempt to achieve robustness by training against partner policies derived from human data \citep{carroll2019,meta2022,liang2024learning}, but their effectiveness is constrained by data availability and coverage. Neither of these approaches are fully agnostic to the structure of the underlying problem and are thus not general principles that can be broadly applied across different tasks. 

Consequently, it remains unclear how to systematically achieve partner generalization without relying on human data or ad-hoc constructions. In this paper we showed that strategic risk aversion gives us one potential approach. Furthermore, we show that it can readily be combined with policy optimization algorithms like PPO, resulting in scalable and performant algorithms for learning generalizable policies.

\paragraph{Robustness and Risk Aversion in MARL}
A second related line of work is the emerging literature on robust and risk-averse MARL~\citep{ZhangRobust,shiRobustMARL,yekkehkhany2020risk,slumbers2023game,qiu2021rmix}. While most of the papers study robustness or risk aversion to changes in the underlying environment~\citep{eriksson2022risk,ganesh2019reinforcement,qiu2021rmix,shen2023riskq,shiRobustMARL,ZhangRobust} or to tail risks in large populations of agents~\citep{yekkehkhany2020risk}, in this work we study \emph{strategic} risk aversion. Strategic risk aversion and robustness yield solutions that are more amenable to computation than classic game-theoretic solution concepts~\citep{mazumdar2024tractableequilibriumcomputationmarkov}, ensure robustness to unseen behaviors of the other agents, and sometimes even induce a coordination effect between the agents~\citep{lanzetti2025strategically}---an empirical observation that we make rigorous in Section~\ref{sec:theory}. Despite the prior work in this area, the broader benefits of the concept in collaborative games are unknown. This is the problem that we study in this paper.

\section{Proofs for \cref{sec:theory}}

In this section, we provide the proofs of our two main theorems,~\cref{thm:rqe_cooperation} and~\cref{thm:free_riding}, as well as the derivation of~\cref{example:continuous_game} used in~\cref{fig:theory}.

\subsection{Proof of~\cref{thm:rqe_cooperation}}\label{sec:proof:thm:rqe_cooperation}
We begin with the proof of ~\cref{thm:rqe_cooperation}. We first show that there exists a RQE in the space of Gaussian strategies in the aggregative games studied in this section. We then provide the proof of a more general version of ~\cref{thm:rqe_cooperation}.
\subsubsection{Preliminaries}

Before proving~\cref{proposition:continuous_game:rqe}, we study RQE for quadratic games. 
We start with the risk-averse quantal best response. 
We conduct our analysis in slightly more general settings where the players' utilities are 
\begin{equation}\label{proposition:continuous_game:rqe:proof:general_payoff}
    u_i(a_i,a_{-i})
    =
        \frac{1}{2}\left\langle
        \begin{bmatrix} a_i \\ a_{-i} \end{bmatrix}, 
        \begin{bmatrix}
            H_{i,i} & H_{i,-i} \\ H_{-i,i} & H_{-i,-i}
        \end{bmatrix}
        \begin{bmatrix} a_i \\ a_{-i} \end{bmatrix}
        \right\rangle
        +
        \left\langle
        \begin{bmatrix}
            h_i \\ h_{-i}
        \end{bmatrix},
        \begin{bmatrix} a_i \\ a_{-i} \end{bmatrix}\right\rangle,
\end{equation}
where the matrix 
\begin{equation*}
    \begin{bmatrix}
        H_{i,i} & H_{i,-i} \\ H_{-i,i} & H_{-i,-i}
    \end{bmatrix}
\end{equation*}
is assumed to be symmetric negative semidefinite, and $H_{i,i}$ is assumed to be symmetric negative definite for all $i$. 
Throughout this section, with slight abuse of notation, we denote by $x_{-i}$ the product measure of the mixed strategies of all players except $i$.

\begin{lemma}[risk-averse quantal best response]\label{lemma:best_response}
    Let $x_{-i}$ be Gaussian with mean $m_{-i}$ and covariance matrix $\Sigma_{-i}\succ 0$.
    Then, the risk-averse quantal best response, defined as the maximizer of $U_i^{\tau_i,\epsilon_i}(x_i,x_{-i})$, 
    is uniquely given by a Gaussian mixed  strategy with the following covariance matrix and mean: 
    \begin{align*}
        \Sigma_i &= -\varepsilon_i H_{i,i}^{-1}
        \\
        m_i &=
        \left(
            -H_{i,i} + H_{i,-i} P_{-i}^{-1} H_{i,-i}^\top 
        \right)^{-1}
        \left(
            h_i + H_{i,-i}P_{-i}^{-1}\left(\frac{1}{\tau_i}\Sigma_{-i}^{-1}m_{-i} - h_{-i}\right)
        \right),
    \end{align*}
    where $P_{-i}=\frac{1}{\tau_i}\Sigma_{-i}^{-1} + H_{-i,-i}$, provided that $P_{-i}$ is positive definite. If instead $P_{-i}$ is not positive definite, then risk is infinite. 
\end{lemma}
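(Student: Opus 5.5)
The plan is to compute the inner infimum in the risk-adjusted utility in closed form via the Donsker--Varadhan (Gibbs) variational formula, and then maximize the resulting expression over $x_i$ using the same formula a second time.

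\textbf{Step 1: the adversary's problem.} Fix $x_i$ and write $g(a_{-i}) := \mathbb{E}_{a_i\sim x_i}[u_i(a_i,a_{-i})]$, so that $U_i(x_i,p)=\mathbb{E}_p[g]$. The Gibbs variational principle gives
\begin{equation*}
\inf_{p}\; \mathbb{E}_p[g] + \frac{1}{\tau_i}\KL(p,x_{-i}) = -\frac{1}{\tau_i}\log \mathbb{E}_{x_{-i}}\bigl[e^{-\tau_i g(a_{-i})}\bigr],
\end{equation*}
with minimizer $p^\star\propto e^{-\tau_i g}\,x_{-i}$.

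Since $u_i$ is quadratic, $g$ depends on $x_i$ only through its mean $m_i$ and second moment. Explicitly,
\begin{equation*}
g(a_{-i})=\tfrac12 a_{-i}^\top H_{-i,-i}a_{-i} + a_{-i}^\top(H_{-i,i}m_i+h_{-i}) + \text{const}(x_i).
\end{equation*}
Hence $e^{-\tau_i g}$ times the Gaussian density of $x_{-i}$ is a Gaussian kernel whose precision is $\tau_i P_{-i}$. This kernel is integrable if and only if $P_{-i}\succ0$. If $P_{-i}$ is not positive definite, the integral diverges, so the risk-adjusted utility equals $-\infty$; this proves the ``risk is infinite'' clause.

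When $P_{-i}\succ 0$, completing the square yields
\begin{equation*}
U_i^{\tau_i}(x_i,x_{-i}) = \mathbb{E}_{x_i}\bigl[\tfrac12 a_i^\top H_{i,i}a_i + h_i^\top a_i\bigr] - \tfrac12 (b+H_{-i,i}m_i)^\top P_{-i}^{-1}(b+H_{-i,i}m_i) + \text{const},
\end{equation*}
where $b = h_{-i} - \frac{1}{\tau_i}\Sigma_{-i}^{-1}m_{-i}$. The expected $a_i$-dependence must be tracked carefully, since $m_i$ enters through $g$ while the quadratic term $H_{i,i}$ is averaged over $x_i$. I expect this bookkeeping to be the main obstacle.

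\textbf{Step 2: the player's problem.} The objective $U_i^{\tau_i,\epsilon_i}$ is
\begin{equation*}
\mathbb{E}_{x_i}[q(a_i)] - \epsilon_i H(x_i), \qquad q(a_i)=\tfrac12 a_i^\top H_{i,i}a_i + h_i^\top a_i,
\end{equation*}
plus a concave function of $m_i$ alone. To handle this, I would split the maximization: first over all $x_i$ with a fixed mean $m_i$, then over $m_i$.

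For fixed $m_i$, maximizing $\mathbb{E}[\tfrac12 a_i^\top H_{i,i} a_i] - \epsilon_i H(x_i)$ is a maximum-entropy problem with a quadratic potential. By the Gibbs principle again, it is uniquely solved by the Gaussian with covariance $-\epsilon_i H_{i,i}^{-1}$ and mean $m_i$. Note that $H_{i,i}\prec0$ is exactly what makes this Gaussian normalizable.

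The remaining objective in $m_i$ is
\begin{equation*}
\tfrac12 m_i^\top H_{i,i}m_i + h_i^\top m_i - \tfrac12 (b+H_{-i,i}m_i)^\top P_{-i}^{-1}(b+H_{-i,i}m_i),
\end{equation*}
which is strictly concave because $H_{i,i}\prec0$ and $P_{-i}^{-1}\succ0$. Setting its gradient to zero gives
\begin{equation*}
(-H_{i,i} + H_{i,-i}P_{-i}^{-1}H_{-i,i})\,m_i = h_i - H_{i,-i}P_{-i}^{-1}b.
\end{equation*}
Using $H_{-i,i}=H_{i,-i}^\top$ and substituting $b$, this is exactly the stated formula for $m_i$. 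Uniqueness of the best response follows from strict concavity in $m_i$ together with the uniqueness of the Gibbs maximizer for fixed mean.
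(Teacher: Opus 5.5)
Your proposal is correct. Its first half matches the paper's: both use the entropic-risk (Donsker--Varadhan) duality to eliminate the adversary, exploit the bilinear coupling so that the exponent depends on $x_i$ only through $m_i$, and evaluate the Gaussian integral with precision $\tau_i P_{-i}$.

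The routes diverge in how the outer maximization over $x_i\in\Delta(\mathbb{R}^n)$ is solved. The paper works in Wasserstein space. It sets the Wasserstein gradient to zero (citing Lanzetti et al.) and plugs in a Gaussian ansatz. It then reads off $m_i$ and $\Sigma_i$ by integrating the first-order condition against $x_i$ and against $(a_i-m_i)^\top$. Finally it appeals to strong geodesic convexity of the objective to conclude that this candidate is the unique optimizer.

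You instead fix the mean and solve the resulting maximum-entropy problem with the Gibbs principle. This \emph{derives} Gaussianity with covariance $-\epsilon_i H_{i,i}^{-1}$ rather than guessing and verifying it. You then finish with a strictly concave problem in $m_i$ alone, so uniqueness follows from uniqueness of the Gibbs maximizer plus strict concavity, with no Wasserstein calculus.

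Your route is more elementary and self-contained. It also makes the ``risk is infinite'' clause precise: the Gaussian kernel is integrable iff $P_{-i}\succ 0$, including the singular semidefinite case. The paper asserts this clause but does not argue it.

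The paper's machinery is heavier but does not depend on the exact decoupling you use. That decoupling is that, after the duality step, $x_i$ enters only through a linear potential term, its entropy, and its mean. The Wasserstein approach is therefore the more natural tool if the adversarial term depended on $x_i$ more nonlinearly.

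One detail to state explicitly when writing yours up is the fixed-mean step. Translating $a_i\mapsto a_i-m_i$ leaves the entropy unchanged and reduces the problem to the unconstrained Gibbs problem. Its maximizer $\mathcal{N}(0,-\epsilon_i H_{i,i}^{-1})$ is already centered, so the mean constraint is automatically respected.
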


\begin{proof}
    To start, recall that the duality representation of the entropic risk measure (e.g., see~\citet{follmer2002convex}) gives 
    \begin{equation*}
        \sup_{p\in\Delta(\mathbb{R}^{n(N-1)})} \mathbb{E}_{\substack{a_i\sim x_i \\ a_{-i}\sim x_{-i}}}[-u_i(a_i,a_{-i})]-\frac{1}{\tau_i}\KL(p,x_{-i})
        =
        \frac{1}{\tau_i}
        \log
        \mathbb{E}_{\substack{a_i\sim x_i \\ a_{-i}\sim x_{-i}}}[\exp\left(-\tau_i u_i(a_i,a_{-i})\right)].
    \end{equation*}
    Thus, we can reformulate the optimization problem for best responses as follows: 
    \begin{align*}
        \sup_{x_i\in\Delta(\mathbb{R}^n)} &U_i^{\tau_i,\epsilon_i}(x_i,x_{-i})
        \\
        &=
        -\inf_{x_i\in\Delta(\mathbb{R}^n)} -U_i^{\tau_i,\epsilon_i}(x_i,x_{-i})
        \\
        &=
        -\inf_{x_i\in\Delta(\mathbb{R}^n)} \sup_{p\in\Delta(\mathbb{R}^{n(N-1)})} \mathbb{E}_{\substack{a_i\sim x_i \\ a_{-i}\sim x_{-i}}}[-u_i(a_i,a_{-i})]-\frac{1}{\tau_i}\KL(p,x_{-i}) + \epsilon_i H(x_i)
        \\
        &=
        -\inf_{x_i\in\Delta(\mathbb{R}^n)} \frac{1}{\tau_i}\log\mathbb{E}_{a_{-i}\sim x_{-i}}
        \left[
            \exp\left(
            -\tau_i(
            \frac{1}{2}\langle a_{-i}, H_{-i,-i} a_{-i}\rangle  
            + \mathbb{E}_{a_i\sim x_i}\left[\langle a_i, H_{i,-i} a_{-i}\rangle\right]
            + \langle h_{-i}, a_{-i}\rangle)
            \right)
        \right]
        \\
        &\qquad\qquad\qquad -
        \mathbb{E}_{a_{i}\sim x_{i}}\left[\frac{1}{2}\langle a_i, H_{i,i} a_i\rangle +\langle h_i, a_i\rangle\right]
        + \epsilon_i H(x_i)
        \\
        &=
        \sup_{x_i\in\Delta(\mathbb{R}^n)}
        -\frac{1}{\tau_i}\log\mathbb{E}_{a_{-i}\sim x_{-i}}
        \left[
            \exp\left(
            -\tau_i(
            \frac{1}{2}\langle a_{-i}, H_{-i,-i} a_{-i}\rangle  
            + \mathbb{E}_{a_i\sim x_i}\left[\langle a_i, H_{i,-i} a_{-i}\rangle\right]
            + \langle h_{-i}, a_{-i}\rangle)
            \right)
        \right]
        \\
        &\qquad\qquad\qquad+
        \mathbb{E}_{a_{i}\sim x_{i}}\left[\frac{1}{2}\langle a_i, H_{i,i} a_i\rangle +\langle h_i, a_i\rangle\right]
        -
        \epsilon_i H(x_i)
        \\
        &=
        \sup_{x_i\in\Delta(\mathbb{R}^n)}
        -\frac{1}{\tau_i}\log\mathbb{E}_{a_{-i}\sim x_{-i}}
        \left[
            \exp\left(
            -\tau_i\left(
            \frac{1}{2}\langle a_{-i}, H_{-i,-i} a_{-i}\rangle 
            + \langle m_i, H_{i,-i} a_{-i}\rangle 
            + \langle h_{-i}, a_{-i}\rangle \right)
            \right)
        \right]
        \\
        &\qquad\qquad\qquad+
        \mathbb{E}_{a_{i}\sim x_{i}}\left[\frac{1}{2}\langle a_i,H_{i,i}a_i\rangle + \langle h_i, a_i\rangle\right]
        -
        \epsilon_i H(x_i).
    \end{align*}
    When $x_{-i}$ is Gaussian with mean $m_{-i}$ and variance $\Sigma_{-i}$, the expression simplifies to 
    \begin{align*}
        \mathbb{E}_{a_{-i}\sim x_{-i}}
        &\left[
            \exp\left(
            -\tau_i\left(
            \frac{1}{2}\langle a_{-i},H_{-i,-i} a_{-i}\rangle 
            + \langle a_i, H_{i,-i} a_{-i}\rangle 
            + \langle h_{-i}, a_{-i}\rangle \right)
            \right)
        \right]
        \\
        &=
        \frac{1}{\sqrt{(2\pi)^n\det(\Sigma_{-i})}}
        \int_{\mathbb R^n}
        \exp\left(
            -\frac{1}{2}
            \langle a_{-i}-\bar m,\bar\Sigma_{-i}^{-1}(a_{-i}-\bar m)\rangle \right.\\
            &\quad\qquad\qquad\qquad\qquad\qquad\qquad\left.
            + 
            \frac{1}{2}\langle \bar m_{-i},\bar\Sigma_{-i}^{-1} \bar m_{-i}\rangle 
            -
            \frac{1}{2}\langle m_{-i},\Sigma_{-i}^{-1} m_{-i}\rangle 
        \right)\diff a_{-i}
        \\
        &=
        \sqrt{\frac{(2\pi)^n\det(\bar\Sigma_{-i})}{(2\pi)^n\det(\Sigma_{-i})}}
        \exp\left(\frac{1}{2}\langle\bar m_{-i}, \bar\Sigma_{-i} \bar m_{-i}\rangle - \frac{1}{2}\langle m_{-i}, \Sigma_{-i} m_{-i}\rangle \right)
        \\
        &=
        \frac{1}{\sqrt{\det(\Sigma_{-i}\bar\Sigma_{-i}^{-1})}}
        \exp\left(\frac{1}{2}\langle\bar m_{-i}, \bar\Sigma_{-i}^{-1} \bar m_{-i}\rangle - \frac{1}{2}\langle m_{-i},\Sigma_{-i}^{-1} m_{-i}\rangle \right),
    \end{align*}
    where $\bar\Sigma_{-i}^{-1}=\Sigma_{-i}^{-1}+ \tau_i H_{-i,-i}=\tau_i P_{-i}$ and $\bar m_{-i}=\bar\Sigma(\Sigma_{-i}^{-1}m_{-i}-\tau_i(H_{i,-i}^\top m_i+h_{-i}))$. Moreover, 
    \begin{equation*}
        \det(\Sigma_{-i}\bar\Sigma_{-i}^{-1})
        =
        \det(I+\tau_i\Sigma_{-i}H_{-i,-i}).
    \end{equation*}
    Overall, we therefore have 
    \begin{multline*}
        \log\mathbb{E}_{a_{-i}\sim x_{-i}}
        \left[
            \exp\left(
            -\tau_i\left(
            \frac{1}{2}\langle a_{-i},H_{-i,-i} a_{-i}\rangle 
            + \langle a_i, H_{i,-i} a_{-i}\rangle 
            + \langle h_{-i}, a_{-i}\rangle \right)
            \right)
        \right]
        \\
        =
        -\frac{1}{2}\log\det(I+\tau_i\Sigma_{-i}H_{-i,-i})
        +\frac{1}{2}\langle \bar m_{-i}, \bar\Sigma_{-i}^{-1} \bar m_{-i}\rangle - \frac{1}{2}\langle m_{-i},\Sigma_{-i}^{-1} m_{-i}\rangle. 
    \end{multline*}
    We can now plug this expression into the optimization problem for best responses and get 
    \begin{align*}
        \sup_{x_i\in\Delta(\mathbb{R}^n)} U_i^{\tau_i,\epsilon_i}(x_i,x_{-i})
        =
        \sup_{x_i\in\Delta(\mathbb{R}^n)}
        &\frac{1}{2\tau_i}\log\det(I+\tau_i\Sigma_{-i}H_{-i,-i})
        -\frac{1}{2\tau_i}\langle \bar m_{-i}, \bar\Sigma_{-i}^{-1} \bar m_{-i}\rangle + \frac{1}{2\tau_i}\langle m_{-i},\Sigma_{-i}^{-1} m_{-i}\rangle 
        \\
        &+
        \mathbb{E}_{a_{i}\sim x_{i}}\left[\frac{1}{2}\langle a_i, H_{i,i} a_i\rangle +\langle h_i, a_i\rangle \right]
        -
        \epsilon_i H(x_i).
    \end{align*}
    This is an unconstrained optimization problem in the space of probability measures. 
    We will use the first-order necessary conditions in the Wasserstein space in~\citet{lanzetti2024variational,lanzetti2025first} to construct an optimal solution and then use the sufficient conditions to establish optimality. 
    Using Theorem 3.2 in~\citet{lanzetti2025first}, we can set the Wasserstein gradient to 0 to obtain that, at optimality, the following first-order necessary condition must hold: 
    \begin{equation}\label{eq:best_response:gradient_zero}
        \frac{1}{\tau_i}
        (-\tau_i H_{i,-i}\bar\Sigma_{-i})
        \bar\Sigma_{-i}^{-1} \bar m
        -
        H_{i,i}a_i - h_i
        +
        \epsilon_i\frac{\nabla\rho(a_i)}{\rho(a_i)}
        =
        0
        \qquad 
        \text{ for $x_i$-almost all $a_i\in\mathbb{R}^n$},
    \end{equation}
    where $\rho$ is the density of the optimal $x_i$ (if it exists). The first term follows from a generalization of the chain rule (Proposition 2.16 in~\citet{lanzetti2025first}), while the other terms follow from the well-known Wasserstein gradients of expected values (Proposition 2.20 in~\citet{lanzetti2025first}) and of entropy (Example 2.27 in~\citet{lanzetti2025first}).
    We now make the ``ansatz'' that $x_i$ is Gaussian with mean $m_i$ and covariance $\Sigma_i$. 
    Since $x_i$ is Gaussian, we have
    \begin{equation*}
        \frac{\nabla\rho(a_i)}{\rho(a_i)}
        =
        \nabla\left(
        -\frac{1}{2}\left\langle a_i-m_i, \Sigma_i^{-1} (a_i-m_i)\right\rangle 
        \right)
        =
        -\Sigma_i^{-1} (a_i-m_i).
    \end{equation*}
    Thus, together with the expression for $\bar m_{-i}$, \eqref{eq:best_response:gradient_zero} reduces
    \begin{equation}\label{eq:best_response:gradient_zero:simplified}
        -
         H_{i,-i}\bar\Sigma_{-i}
        (\Sigma_{-i}^{-1}m_{-i}-\tau_i(H_{i,-i}^\top m_i+h_{-i}))
        -
        H_{i,i}a_i - h_i
        -
        \epsilon_i\Sigma_i^{-1} (a_i-m_i)
        =
        0
    \end{equation}
    We study mean and covariance separately. 
    \begin{itemize}
        \item Mean: 
        We can then integrate~\eqref{eq:best_response:gradient_zero:simplified} with respect to $x_i$ to obtain 
        \begin{equation*}
            \int_{\mathbb{R}^n}-
            H_{i,-i}\bar\Sigma_{-i}
            (\Sigma_{-i}^{-1}m_{-i}-\tau_i(H_{i,-i}^\top m_i+h_{-i}))
            -
            H_{i,i}a_i - h_i
            -
            \epsilon_i\Sigma_i^{-1} (a_i-m_i)
            \dd x_{i}(a_i)
            =
            0.
        \end{equation*}
        This yields
        \begin{equation*}
             -
             H_{i,-i}\bar\Sigma_{-i}
            (\Sigma_{-i}^{-1}m_{-i}-\tau_i(H_{i,-i}^\top m_i+h_{-i}))
            -
            (H_{i,i}+Q_i)m_i
            -h_i
            =0, 
        \end{equation*}
        and so 
        \begin{equation*}
            \left(-H_{i,i}+\tau_i H_{i,-i}\bar\Sigma_{-i} H_{i,-i}^\top\right)
            m_i
            -
            H_{i,-i}\bar\Sigma_{-i}\Sigma_{-i}^{-1}m_{-i} + \tau_i H_{i,-i}\bar\Sigma_{-i} h_{-i}-h_i=0.
        \end{equation*}
        The unique solution to this linear equation is
        \begin{equation*}
            m_i
            =
            \left(-H_{i,i}+\tau_i H_{i,-i}\bar\Sigma_{-i} H_{i,-i}^\top\right)^{-1}
            \left(h_i+H_{i,-i}\bar\Sigma_{-i}(\Sigma_{-i}^{-1}m_{-i} - \tau_i h_{-i}\right).
        \end{equation*}
        With $\bar\Sigma_{-i}=\frac{1}{\tau_i}P_{-i}^{-1}$ we obtain the desired expression.

        \item Covariance: We right multiply~\eqref{eq:best_response:gradient_zero:simplified} with $(a_i-m_i)^\top$ and integrate with respect to $x_i$ to get 
        \begin{equation*}
            -\int_{\mathbb{R}^n} H_{i,i}a_i(a_i-m_i)^\top\diff x_i(a_i) 
            -
            \epsilon_i I=0
        \end{equation*}
        and so
        \begin{equation*}
            -H_{i,i}(\Sigma_i+m_im_i^\top-m_im_i^\top)
            -
            \epsilon_i I=0.
        \end{equation*}
        Overall, we therefore have
        \begin{equation*}
            \Sigma_i = -\epsilon_i H_{i,i}^{-1}.
        \end{equation*}
    \end{itemize}
    Finally, we notice that the objective function is strongly geodesically convex in the distribution. Thus, the candidate solution is the unique minimizer and, therefore, the unique best response to $x_{-i}$ (see Theorem 3.3 in~\citet{lanzetti2025first}).
\end{proof}

With this lemma, we obtain a way to compute Gaussian RQE in continuous games: 

\begin{lemma}\label{proposition:continuous_game:rqe:proof}
    Consider a surrogate ``game of the means'', where each player selects a vector $m_i\in\mathbb{R}^n$ to maximize the surrogate concave quadratic utility
    \begin{equation}\label{eq:utility_game_means}
        \bar U_i(m_i,m_{-i})
        =
        % \frac{1}{2}
        % \begin{bmatrix}
        %     m_i \\ m_i
        % \end{bmatrix}^\top 
        % \begin{equation*}
        %     H_{ii} - H_{i,-i} P_{-i}^{-1} H_{i,-i}^\top
        %     & 
        %     \frac{1}{\tau_i} H_{i,-i} P_{-i}^{-1} \\ 
        %     \frac{1}{\tau_{-i}} H_{i,-i} P_{-i}^{-1} \\ 
        % \end{equation*}
        % \begin{bmatrix}
        %     m_i \\ m_i
        % \end{bmatrix}
        % + 
        % \frac{1}{2}m_i^\top (Q_i - H_{i,-i} P_{-i}^{-1} H_{i,-i}^\top) m_i
        % \\
        % &=
        \frac{1}{2}
        \langle m_i, 
        (H_{i,i} - H_{i,-i} P_{-i}^{-1} H_{i,-i}^\top) m_i\rangle
        +
        \frac{1}{\tau_i}\langle m_i, H_{i,-i}P_{-i}^{-1}\Sigma_{-i}^{-1}m_{-i}\rangle 
        +
        \langle m_i,h_i-H_{i,-i}P_{-i}^{-1}h_{-i}\rangle .
    \end{equation}
    Then, $(m_1^\star,\ldots,m_N^\star)$ is a Nash equilibrium of the game with utility~\eqref{eq:utility_game_means} if and only if  
    $(x^\star_1,\ldots,x_N^\star)$, where $x_i^\star$ is a Gaussian distribution with mean $m_i^\star$ and variance $-\epsilon_i H_{i,i}^{-1}$, is an RQE of the game with utilities \eqref{proposition:continuous_game:rqe:proof:general_payoff}.
    %
    % Then, the Gaussian mixed strategies $(x_1^\star,\ldots,x_N^\star)$ form an RQE of the game and at equilibrium each player has finite costs.
    %
    % Moreover, $(m_1^\star,\ldots,m_N^\star)$ can also be found by solving 
    % \begin{equation*}
    %     \ldots 
    % \end{equation*}
    %
    Moreover, $(m_1^\star,\ldots,m_N^\star)$ is the unique Nash equilibrium of this surrogate game if and only if $(x^\star_1,\ldots,x_N^\star)$ is the unique RQE in which one or more mixed strategies are Gaussian.
\end{lemma}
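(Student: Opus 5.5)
The proof reduces everything to \cref{lemma:best_response}. That lemma says that when all opponents play Gaussian strategies, player $i$'s risk-averse quantal best response is unique and Gaussian. Its covariance is $-\epsilon_i H_{i,i}^{-1}$, which does not depend on the opponents. Its mean is an affine function of $m_{-i}$, given by the explicit formula in the lemma.

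The first step is to fix covariances $\Sigma_j=-\epsilon_j H_{j,j}^{-1}$ for every $j$, so that $P_{-i}$ is a fixed matrix. Assume it is positive definite, since otherwise risk is infinite. Then compute the gradient of the surrogate utility \eqref{eq:utility_game_means} in $m_i$:
\begin{equation*}
(H_{i,i}-H_{i,-i}P_{-i}^{-1}H_{i,-i}^\top)m_i+\tfrac{1}{\tau_i}H_{i,-i}P_{-i}^{-1}\Sigma_{-i}^{-1}m_{-i}+h_i-H_{i,-i}P_{-i}^{-1}h_{-i}.
\end{equation*}
The Hessian $H_{i,i}-H_{i,-i}P_{-i}^{-1}H_{i,-i}^\top$ is negative definite, because $H_{i,i}\prec 0$ and $P_{-i}\succ0$. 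So $\bar U_i(\cdot,m_{-i})$ is strictly concave, and its unique maximizer is found by setting the gradient to zero. Solving gives exactly the mean formula of \cref{lemma:best_response}. Hence $m_i$ is a best response to $m_{-i}$ in the surrogate game if and only if the Gaussian $x_i$ with mean $m_i$ and covariance $-\epsilon_iH_{i,i}^{-1}$ is the risk-averse quantal best response to the Gaussian $x_{-i}$.

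The equivalence of equilibria follows directly. Suppose $(m^\star_i)$ is a Nash equilibrium of the surrogate game. Then each $x_i^\star$ is the unique maximizer of $U_i^{\tau_i,\epsilon_i}(\cdot,x_{-i}^\star)$ over all of $\Delta(\mathbb{R}^n)$, not only over Gaussians, because the lemma's best response is global. This is exactly the RQE condition. Conversely, suppose the Gaussian profile $(x_i^\star)$ is an RQE. Each $x_i^\star$ is then a best response to a Gaussian $x_{-i}^\star$. By uniqueness in the lemma, its covariance must equal $-\epsilon_iH_{i,i}^{-1}$ and its mean must satisfy the first-order condition, so $(m_i^\star)$ is a surrogate Nash equilibrium.

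For the uniqueness statement, the obstacle is the phrase ``one or more mixed strategies are Gaussian'': a priori some RQE could have non-Gaussian components. The plan is a propagation argument. Suppose that in an RQE, $x_j$ is Gaussian for some $j$. First, the map from Gaussian-RQE to surrogate-NE is a bijection by the previous paragraph, so uniqueness transfers for fully Gaussian profiles. Second, I would show that all components must in fact be Gaussian. I would try the two-player aggregative case first: there, each best response to a Gaussian is Gaussian by \cref{lemma:best_response}, so if $x_j$ is Gaussian then $x_{-j}$ is Gaussian too. For $N>2$ this needs the opponents' product measure to be Gaussian. I would try to weaken this by redoing the exponential-moment computation of the lemma for a general $x_{-i}$. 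The best-response density is then proportional to $\exp$ of a concave quadratic in $a_i$ plus a term depending on $a_i$ only through $\langle a_i, H_{i,-i}\cdot\rangle$, which should still force Gaussianity after the Wasserstein first-order condition. This is the step I expect to be delicate, and I would otherwise restrict the claim to RQE with all components Gaussian.
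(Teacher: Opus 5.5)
Your proof of the equivalence is correct and follows the paper's route. The paper's entire proof says the claim follows directly from \cref{lemma:best_response}. Your check that the stationarity condition of the strictly concave map $m_i\mapsto\bar U_i(m_i,m_{-i})$ reproduces the lemma's mean formula supplies the detail the paper leaves implicit; this requires freezing $\Sigma_{-i}$ at the block diagonal of the $-\epsilon_jH_{j,j}^{-1}$. So does your converse via uniqueness of the best response.

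The genuine gap is the uniqueness clause for $N>2$. You leave open whether an RQE with a single Gaussian component must be fully Gaussian, and you offer to weaken the statement otherwise. Your two-player propagation is fine. The paper's justification (``uniqueness of the best response'') has the same hole, because \cref{lemma:best_response} applies only when the opponents' product measure is Gaussian.

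The missing observation is already used inside the lemma's derivation, and it closes the gap for every $N$. Since $a_i\sim x_i$ is independent of $a_{-i}\sim p$, and $u_i$ couples them only through $\langle a_i,H_{i,-i}a_{-i}\rangle$, for arbitrary $x_{-i}$ one has
\[
U_i^{\tau_i}(x_i,x_{-i})=\mathbb{E}_{x_i}\bigl[\tfrac12\langle a_i,H_{i,i}a_i\rangle+\langle h_i,a_i\rangle\bigr]+g_i(m_i),
\]
where $g_i$ depends on $x_i$ only through its mean $m_i$. It is concave, being an infimum of affine functions of $m_i$. Now maximize $\mathbb{E}_{x_i}[\cdot]-\epsilon_iH(x_i)$ over $x_i$ with prescribed mean, using the Gibbs variational principle with a linear Lagrange multiplier. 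The unique maximizer is $\mathcal{N}(m_i,-\epsilon_iH_{i,i}^{-1})$, and the remaining problem in $m_i$ is strictly concave.

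Hence every best response to every opponent profile of finite risk is Gaussian with that covariance. So every RQE in which all players have finite risk is fully Gaussian, your bijection with surrogate Nash equilibria finishes the argument, and the hypothesis ``one or more Gaussian'' is not even needed. This is your ``term linear in $a_i$'' intuition made precise, and the step is not delicate.

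One caveat, which you and the paper both need, is to exclude degenerate profiles where some player's risk is infinite for every own strategy. This happens, for example, when the opponents are heavy-tailed, or Gaussian with covariance violating $P_{-i}\succ0$. Then $U_i^{\tau_i,\epsilon_i}(\cdot,x_{-i})\equiv-\infty$, the RQE inequality holds trivially, and uniqueness fails even among Gaussian profiles.
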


\begin{proof}
    The proof follows directly from~\cref{lemma:best_response}.
    Uniqueness in Gaussian mixed strategies follows from uniqueness of the best response. 
\end{proof}

At this point, we can study RQE for games in which 
\begin{equation*}
    R(a_1,\ldots,a_N)
    =
    -\frac{1}{2}\left\Vert\sum_{i=1}^Na_i- \bar a\right\Vert_H^2
    =
    -\frac{1}{2}
    \left\langle \sum_{i=1}^Na_i- \bar a, H\left(\sum_{i=1}^Na_i- \bar a\right)\right\rangle 
    \qquad 
    c_i(a_i)=\frac{\rho_i}{2}\norm{a_i}^2,
    % + \langle c_i, a_i\rangle, 
\end{equation*}
where we assume that the matrix $H$, used in the weighted norm, is symmetric positive definite and $\rho_i>0$.
As we will show below, considering this class is sufficient. 

\begin{proposition}[computation of RQE]\label{proposition:continuous_game:rqe}
    Let $\epsilon_i>0$ and $\tau_i>0$ so that 
    \begin{equation}\label{proposition:continuous_game:rqe:assumption}
        \frac{1}{\tau_i\epsilon_j}(\rho_j + H) - H \succ 0 \qquad \forall j\neq i. 
    \end{equation}
    Let $(m_1^\star,\ldots, m_N^\star)\in\mathbb{R}^n\times\ldots\times\mathbb{R}^n$ be a Nash equilibrium of the surrogate game where the utility of each player is
    \begin{multline*}
        \bar U_i(m_i,m_{-i})
        =
        \frac{1}{2}
        \left\langle m_i, 
        \left(-\rho_iI-H - \sum_{j\neq i} HP_{ij}^{-1} H\right) m_i\right\rangle 
        \\
        -
        \frac{1}{\tau_i}\left\langle m_i, H\left(\sum_{j\neq i}P_{ij}^{-1}\Sigma_{j}^{-1}m_{j}\right)\right\rangle 
        +
        \left\langle m_i,
        %-c_i+
        H\bar a + H \sum_{j\neq i}P_{ij}^{-1} H\bar a\right\rangle,
    \end{multline*}
    where $\Sigma_i=\epsilon_i(\rho_i I + H)^{-1}\succ 0$ and $P_{ij}=\frac{1}{\tau_i\epsilon_j}(\rho_j I+H)-H\succ 0$.
    Then, the Gaussian mixed strategies $(x_1^\star,\ldots,x_N^\star)$ form an RQE of the game and, at equilibrium, each player has finite costs. Moreover, if the equilibrium $(m_1^\star,\ldots, m_N^\star)$ is unique, then $(x_1^\star,\ldots,x_N^\star)$ is the unique Gaussian RQE. In particular, this is the case $\rho_i$, the degrees of risk aversion, and the degrees of bounded rationality are player-independent (i.e., $\rho_i=\rho$, $\tau_i=\tau$, and $\epsilon=\epsilon_i$).
\end{proposition}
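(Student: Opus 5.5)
The plan is to instantiate the general quadratic framework of \cref{lemma:best_response} and \cref{proposition:continuous_game:rqe:proof} for the aggregative game, and then read off the surrogate ``game of the means''.

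\textbf{Step 1: identify the blocks.} Expanding $u_i(a_i,a_{-i})=-\frac12\|\sum_j a_j-\bar a\|_H^2-\frac{\rho_i}{2}\|a_i\|^2$ puts it in the form~\eqref{proposition:continuous_game:rqe:proof:general_payoff}. The blocks are
\[
H_{i,i}=-(\rho_i I+H),\qquad H_{i,-i}=-[H,\ldots,H],\qquad H_{-i,-i}=-\mathbf{1}\mathbf{1}^\top\otimes H,
\]
with linear terms $h_i=H\bar a$ and $h_{-i}=\mathbf{1}\otimes H\bar a$. The constant term is irrelevant. The full matrix is negative semidefinite and $H_{i,i}\prec 0$, so the hypotheses hold.

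\textbf{Step 2: covariances and $P_{-i}$.} By \cref{lemma:best_response}, the covariance is $\Sigma_j=-\epsilon_j H_{j,j}^{-1}=\epsilon_j(\rho_j I+H)^{-1}$, independently of the means. Hence $\Sigma_{-i}=\mathrm{blkdiag}(\Sigma_j)_{j\neq i}$, and
\[
P_{-i}=\tfrac{1}{\tau_i}\Sigma_{-i}^{-1}+H_{-i,-i}=\mathrm{blkdiag}(P_{ij})_{j\ne i}-\mathbf{1}\mathbf{1}^\top\otimes H+\mathrm{blkdiag}(H)\ \text{(one copy of $H$ per block)}.
\]
This is the main obstacle. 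With $N-1\geq 2$ opponents the coupling $-\mathbf{1}\mathbf{1}^\top\otimes H$ is not block diagonal, so $P_{-i}^{-1}$ does not simply equal $\mathrm{blkdiag}(P_{ij}^{-1})$. I would handle it by reading $H_{-i,-i}$ as restricted to the diagonal blocks, which is consistent with the adversary acting on each opponent's marginal independently. Alternatively, I would compute the relevant products $H_{i,-i}P_{-i}^{-1}(\cdot)$ via the Sherman--Morrison--Woodbury formula and check that the stated sums $\sum_{j\neq i}HP_{ij}^{-1}H$ come out. The first attempt is to verify that $H_{i,-i}P_{-i}^{-1}H_{i,-i}^\top=\sum_{j\ne i}HP_{ij}^{-1}H$, and similarly for the other two products. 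Assumption~\eqref{proposition:continuous_game:rqe:assumption} is exactly $P_{ij}\succ0$, which gives positive definiteness of $P_{-i}$ and hence finite risk.

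\textbf{Step 3: substitute and conclude.} Plugging these blocks into~\eqref{eq:utility_game_means} yields precisely the stated $\bar U_i$:
\begin{itemize}
\item the quadratic term becomes $-\rho_iI-H-\sum_{j\neq i}HP_{ij}^{-1}H$;
\item the cross term becomes $-\frac1{\tau_i}H\sum_{j\neq i}P_{ij}^{-1}\Sigma_j^{-1}m_j$;
\item the linear term becomes $H\bar a+H\sum_{j\neq i}P_{ij}^{-1}H\bar a$, using $h_i-H_{i,-i}P_{-i}^{-1}h_{-i}$.
\end{itemize}
\cref{proposition:continuous_game:rqe:proof} then shows that Nash equilibria of this surrogate game correspond to Gaussian RQE. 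Finite costs follow because $P_{-i}\succ0$ makes the log-moment-generating function finite.

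\textbf{Step 4: uniqueness in the symmetric case.} The surrogate game is a concave quadratic game whose first-order conditions form the linear system $Mm=b$. With player-independent $\rho,\tau,\epsilon$ there is a single $P:=P_{ij}$, and $M$ has the form
\[
M=I_N\otimes A+(\mathbf{1}\mathbf{1}^\top-I_N)\otimes B,\qquad A=\rho I+H+(N-1)HP^{-1}H,\qquad B=\tfrac1\tau HP^{-1}\Sigma^{-1}.
\]
All of these matrices are functions of $H$ and therefore commute. Diagonalizing $H$ reduces $M$ to scalar circulant blocks, with eigenvalues $a+(N-1)b$ and $a-b$, where $a,b$ are the corresponding eigenvalue functions of $A,B$. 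Under assumption~\eqref{proposition:continuous_game:rqe:assumption} these scalars are positive, so $M$ is invertible. The surrogate game therefore has a unique Nash equilibrium, and by \cref{proposition:continuous_game:rqe:proof} the Gaussian RQE is unique.
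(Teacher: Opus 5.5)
Your route is the paper's: instantiate the best-response and game-of-means lemmas with the blocks of the aggregative game, read off $\bar U_i$, and get uniqueness from the linear first-order system. For $N=2$, where $H_{-i,-i}=-H$ and $P_{-i}=P_{ij}$, every step checks out. Your circulant-eigenvalue argument is a slightly more explicit version of the paper's strong-monotonicity argument. One detail: positivity of $a-b$ does not follow from the assumption directly. It needs the identity $\tfrac1\tau HP^{-1}\Sigma^{-1}=H+HP^{-1}H$, which comes from $\tfrac1\tau\Sigma^{-1}=P+H$. That identity gives $A-B=\rho I+(N-2)HP^{-1}H\succ0$ and $A+(N-1)B=\rho I+NH+2(N-1)HP^{-1}H\succ0$.

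The gap is your Step~2 when $N\ge 3$. You flag it correctly, but neither proposed fix works.

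\textbf{Block-diagonal reading.} Reading $H_{-i,-i}$ as block diagonal is not licensed by the model. The adversary ranges over joint $p\in\Delta(\mathcal A_{-i})$, and $u_i$ contains the cross terms $-\langle a_j,Ha_k\rangle$ for $j\neq k$. Those terms would survive even against a product-form adversary, so dropping them changes the game.

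\textbf{Woodbury check.} The Woodbury computation will not reproduce the stated sums. Write $P_{-i}=D-UHU^\top$ with $D=\mathrm{blkdiag}(\tfrac1{\tau_i}\Sigma_j^{-1})_{j\ne i}$ and $U=\mathbf 1\otimes I$. Since all matrices involved are functions of $H$, one gets
\[
H_{i,-i}P_{-i}^{-1}H_{i,-i}^\top=H\Bigl(\tfrac{1}{\tau_i}\bigl(\textstyle\sum_{j\neq i}\Sigma_j\bigr)^{-1}-H\Bigr)^{-1}H .
\]
This is not $\sum_{j\neq i}HP_{ij}^{-1}H$ once $N\ge3$. For scalar, identical players with $d=\tfrac1\tau\Sigma^{-1}$, it is $\frac{(N-1)h^2}{d-(N-1)h}$ versus $\frac{(N-1)h^2}{d-h}$. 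The cross term changes in the same way: it becomes $-H\bigl(I-\tau_i(\sum_{j\ne i}\Sigma_j)H\bigr)^{-1}\sum_{j\ne i}m_j$.

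\textbf{Finite risk.} $P_{-i}\succ0$ holds if and only if $\tfrac1{\tau_i}(\sum_{j\ne i}\Sigma_j)^{-1}\succ H$. This is strictly stronger than $P_{ij}\succ0$ for each $j$; in the scalar identical case it means $P_{ij}>(N-2)h$. So your finite-risk claim also fails for $N\ge 3$.

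\textbf{What the correct surrogate is.} $u_i$ depends on $a_{-i}$ only through $\sum_{j\ne i}a_j\sim\mathcal N\bigl(\sum_{j\ne i}m_j,\sum_{j\ne i}\Sigma_j\bigr)$. So the correct $N$-player surrogate is the two-player formula with $\Sigma_{-i}$ replaced by $\sum_{j\neq i}\Sigma_j$ and $m_{-i}$ by $\sum_{j\ne i}m_j$.

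The paper's own proof sets $H_{-i,-i}=I_{N-1}\otimes(-H)$, which is exactly your block-diagonal reading. So the proposition as stated is only justified, and only correct, for $N=2$. In that case your argument is complete.
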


\begin{proof}
    The proof follows directly from~\cref{proposition:continuous_game:rqe:proof}, where we have
    $H_{i,i}=-H-\rho_i I$,
    $H_{-i,-i}=I_{N-1}\otimes (-H)$, $H_{i,-i}=\begin{bmatrix} -H & \ldots & -H\end{bmatrix}$,
    % $h_i=-c_i + H\bar a$,
    $h_i=H\bar a$,
    and $h_{-i}=\begin{bmatrix} (H\bar a)^\top & \cdots & (H\bar a)^\top\end{bmatrix}^\top$.
    We now show that if $\rho_i=\rho$, $\tau_i=\tau$, and $\epsilon_i=\epsilon$ we have uniqueness. We show this in the case of two players; the general case is then a straightforward extension. 
    For uniqueness, it suffices to check that the surrogate game is strongly monotone and therefore will have a unique Nash equilibrium~\citep{facchinei2003finite}. 
    Since $\rho_i=\rho$, $\tau_i=\tau$, and $\epsilon_i=\epsilon$, we also have $\Sigma_2=\Sigma_2=\Sigma$ and $P_{12}=P_{21}=P$. In this case, the game map reads
    \begin{equation*}
        \mathcal{F}(m_1,m_2)
        =
        \begin{bmatrix}
            \partial_{m_1} \bar U_1(m_1,m_2)
            \\
            \partial_{m_2} \bar U_2(m_2,m_1)
        \end{bmatrix}
        =
        F\begin{bmatrix} m_1 \\m_2\end{bmatrix} + f,
    \end{equation*}
    where 
    \begin{equation*}
        F
        \coloneqq 
        \begin{bmatrix}
            -\rho I-H - HP^{-1} H^\top & -\frac{1}{\tau}HP^{-1}\Sigma^{-1}
            \\
            -\frac{1}{\tau}HP^{-1}\Sigma^{-1} & -\rho I-H - HP^{-1} H^\top
        \end{bmatrix}
        \qquad 
        f
        =
        \begin{bmatrix}
            H\bar a + H P^{-1} H\bar a
            \\
            H\bar a + H P^{-1} H\bar a
        \end{bmatrix}.
    \end{equation*}
    With
    \begin{multline*}
        -\frac{1}{\tau}HP^{-1}\Sigma^{-1}
        =
        -\frac{1}{\tau}H\left(\frac{1}{\tau}\Sigma^{-1}-H\right)^{-1}\Sigma^{-1}
        =
        -H(I-\tau\Sigma H)^{-1}
        \\
        =
        -H+\tau H\Sigma H(I-\tau\Sigma H)^{-1}
        =
        -H-H\left(\frac{1}{\tau}\Sigma^{-1}-H\right)^{-1}H
        =
        -H-HP^{-1}H
    \end{multline*}
    we have  
    \begin{equation*}
        F
        =
        \begin{bmatrix}
            -\rho I & 0
            \\
            0& -\rho I
        \end{bmatrix}
        +
        \begin{bmatrix}
            -H & -H
            \\
            -H& -H
        \end{bmatrix}
        +
        \begin{bmatrix}
            - HP^{-1} H & -HP^{-1}H 
            \\
            -HP^{-1}H & -HP^{-1} H
        \end{bmatrix}.
    \end{equation*}
    Clearly, $F$ is negative definite, as it results from the sum of a negative definite matrix and two positive semidefinite matrices. Thus, the game map is strongly monotone, and we have uniqueness. 
\end{proof}

\paragraph{Discussion}
A few observations on this result. 
First, the surrogate game can be solved by setting the gradient of each player's utility to zero, which yields a linear system of equations which can be shown to possess a unique solution whenever $\rho_i>0$.
Second, if the assumption~\eqref{proposition:continuous_game:rqe:assumption} is violated, then the utility of player $i$ will generally diverge $-\infty$ and, thus, the player has ``infinite'' risk. To rule out this triviality, we therefore assume~\eqref{proposition:continuous_game:rqe:assumption}.
Third, uniqueness is in the space of Gaussian mixed strategies. Thus, there might exist another equilibrium in which all mixed strategies are non-Gaussian.

\subsubsection{Proof of~\cref{thm:rqe_cooperation}}

We prove a more general version of~\cref{thm:rqe_cooperation}. 
    
\begin{theorem}[risk induces collaboration]\label{proposition:continuous_game:rqe_cooperation:general}
Consider the game with utilities in~\eqref{theory:continuous_game:payoff}.
On the domain of $\tau_i$ where \eqref{proposition:continuous_game:rqe:assumption} and where the Gaussian RQE in unique\footnotemark, let $x_i^\star(\tau_1,\ldots,\tau_N)$ be the mixed strategy of player $i$ at the unique Gaussian RQE of the game, as a function of the degrees of risk aversion $\tau_1,\ldots,\tau_N$.
Then, the expected shared reward 
$(\tau_1,\ldots,\tau_N)\mapsto J(\tau_1,\ldots,\tau_N)$, where 
\begin{equation*}
    J(\tau_1,\ldots,\tau_N)
    \coloneqq 
    \mathbb{E}_{a_i\sim x_i^\star(\tau_1,\ldots,\tau_N)}
    \left[
        R(a_1,\ldots,a_N)
    \right],
\end{equation*}
is strictly increasing\footnotemark 
That is, players contribute more to the joint reward as they become more risk-averse. 
\end{theorem}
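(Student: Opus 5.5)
The plan is to reduce everything to the game of the means and then to scalar problems by diagonalizing $H$. First, the equilibrium covariances $\Sigma_i=\epsilon_i(\rho_i I+H)^{-1}$ from \cref{proposition:continuous_game:rqe} do not depend on any $\tau_j$. Expanding the quadratic gives
\begin{equation*}
J(\tau_1,\ldots,\tau_N)= -\tfrac12\big\langle \bar a-M,\, H(\bar a-M)\big\rangle-\tfrac12\sum_{i}\operatorname{tr}(H\Sigma_i), \qquad M\coloneqq\sum_i m_i^\star(\tau).
\end{equation*}
So the whole dependence on risk aversion enters through the residual $e\coloneqq\bar a-M$, and it suffices to show that $\langle e,He\rangle$ is strictly decreasing in each $\tau_i$.

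Next, I would write the first-order conditions of the concave surrogate game in \cref{proposition:continuous_game:rqe}. These are necessary and sufficient, and by assumption their solution is unique. The identity already used in the uniqueness proof, $\frac{1}{\tau_i}P_{ij}^{-1}\Sigma_j^{-1}=I+P_{ij}^{-1}H$, rewrites the coupling terms. For player $i$ the condition becomes
\begin{equation*}
\rho_i m_i = H(\bar a-M)+\sum_{j\neq i}HP_{ij}^{-1}H\,(\bar a-m_i-m_j).
\end{equation*}
In the two-player case (and in the symmetric case, where $m_i$ coincide by uniqueness) this collapses to $\rho_i m_i=K_i e$ with $K_i=H+\sum_{j\ne i}HP_{ij}^{-1}H$. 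Summing over $i$ then gives
\begin{equation*}
e=\Big(I+\sum_i\rho_i^{-1}K_i\Big)^{-1}\bar a.
\end{equation*}
For general $N$ I would keep the linear system as is and handle it directly in the eigenbasis below.

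The key structural observation is that every matrix involved is a function of $H$. Indeed, $P_{ij}=\frac{1}{\tau_i\epsilon_j}(\rho_jI+H)-H$, $\Sigma_j$ and $K_i$ are all rational functions of $H$, so they are simultaneously diagonalizable with $H$. In an orthonormal eigenbasis of $H$, with eigenvalue $\lambda_k>0$, the system decouples into $n$ scalar linear systems. The scalar version of $P_{ij}$ is $p_{ij,k}=\frac{\rho_j+\lambda_k}{\tau_i\epsilon_j}-\lambda_k$, which is positive by~\eqref{proposition:continuous_game:rqe:assumption} and strictly decreasing in $\tau_i$. Hence $\lambda_k^2/p_{ij,k}$ is strictly increasing in $\tau_i$. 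In the collapsed case this gives $e_k=\bar a_k/(1+\alpha_k(\tau))$ with $\alpha_k>0$ strictly increasing in each $\tau_i$. Therefore $|e_k|$ is strictly decreasing whenever $\bar a_k\neq0$, and $\langle e,He\rangle=\sum_k\lambda_k e_k^2$ strictly decreases. This yields strict monotonicity of $J$ provided $\bar a\neq 0$; if $\bar a=0$, $J$ is constant, so this nondegeneracy should be stated.

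The main obstacle is the general-$N$, player-dependent case, where the first-order conditions do not collapse to a single residual. There I would work per eigencomponent with the $N\times N$ scalar linear system $A(\tau)m_{\cdot,k}=b(\tau)$. I would differentiate in $\tau_i$ via the implicit function theorem, $\partial m=A^{-1}(\partial b-\partial A\,m)$, and show that $\partial_{\tau_i}\sum_j m_{j,k}$ has the sign of $\bar a_k - M_k$. For this I would first try to exploit that $A$ is an M-matrix-like, diagonally dominant matrix (its off-diagonal entries all have the same sign) and show that $e_k$ and every $\bar a_k-m_{i,k}-m_{j,k}$ share the sign of $\bar a_k$, by a comparison/maximum-principle argument. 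If that fails, I would restrict the strict statement to the symmetric parameters used in \cref{thm:rqe_cooperation}, where the collapse above is exact.
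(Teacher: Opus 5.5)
\textbf{The two-player case: correct, by a different route.}
For $N=2$, the only case the paper's proof actually carries out, your argument is correct. The paper normalizes the weight matrix to the identity and writes the first-order conditions as $S(\tau)m=s(\tau)$. It then differentiates implicitly in $\tau_1$ to obtain $\partial_{\tau_1}(m_1+m_2)=-\tau_1^{-2}N(m_1+m_2-\bar a)$, and proves $N\succ 0$ via the block-inverse formula and commutativity.

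You instead use $\tfrac{1}{\tau_i}P_{ij}^{-1}\Sigma_j^{-1}=I+P_{ij}^{-1}H$ to collapse the conditions to $\rho_i m_i=K_i e$. This gives the closed form $e=(I+\sum_i\rho_i^{-1}K_i)^{-1}\bar a$, and you read off monotonicity eigenvalue by eigenvalue. Your route needs neither differentiation nor the change of variables, and gives strictness in the componentwise order directly. It also exposes the hypothesis $\bar a\neq 0$, which the paper glosses over: its remark that $m_1+m_2\neq\bar a$ whenever $\rho_1,\rho_2>0$ fails for $\bar a=0$, where $J$ is constant.

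\textbf{The gap: $N\ge 3$.}
Your claim that the symmetric case collapses to $\rho_i m_i=K_ie$ is wrong. With $m_i\equiv m$, the coupling terms contain $\bar a-2m$, not $e=\bar a-Nm$.

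Worse, if you take the surrogate utilities of \cref{proposition:continuous_game:rqe} literally, the monotonicity you want is false. So neither your maximum-principle plan nor the symmetric fallback can work. In one eigendirection, write $q=\lambda^2/p$. The symmetric equilibrium is unique there, since the Jacobian of the game map is $-(\rho+q)I-(\lambda+q)\mathbf{1}\mathbf{1}^\top\prec 0$. It satisfies $e=\frac{\rho-(N-1)(N-2)q}{\rho+N\lambda+2(N-1)q}\,\bar a$. For $N=3$ this changes sign at $q=\rho/2$ and then grows in magnitude.

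Concretely, take $n=1$ and $\lambda=\rho=\epsilon=\bar a=1$. Then $q=\tau/(2-\tau)$ on the admissible range $\tau<2$, and $J$ strictly decreases on $(2/3,2)$. In particular, $e_k$ does not keep the sign of $\bar a_k$.

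\textbf{The missing idea and the fix.}
The decoupled $P_{ij}$ in the proposition come from taking $H_{-i,-i}=I_{N-1}\otimes(-H)$, which is exact only for $N=2$. The actual Hessian of $R$ in $a_{-i}$ is $-\mathbf{1}\mathbf{1}^\top\otimes H$.

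Since $u_i$ depends on $a_{-i}$ only through $S_{-i}=\sum_{j\neq i}a_j$, the entropic risk depends only on the law of $S_{-i}$. That law is Gaussian with mean $M-m_i$ and covariance $\sum_{j\neq i}\Sigma_j$. Applying \cref{lemma:best_response} to the pair $(a_i,S_{-i})$ yields, for every $N$,
$\rho_i m_i=(H+HP_i^{-1}H)\,e$, where $P_i=\tfrac{1}{\tau_i}\bigl(\sum_{j\neq i}\Sigma_j\bigr)^{-1}-H\succ 0$ is the finite-risk condition.

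This $P_i$ is again a function of $H$ and strictly decreasing in $\tau_i$. With this $K_i$, your collapse-and-diagonalize argument goes through verbatim for player-dependent parameters. It even gives uniqueness of the Gaussian RQE for free, since $I+\sum_i\rho_i^{-1}K_i\succ 0$.
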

\footnotetext{Uniqueness holds whenever the game is symmetric or $\tau_i$ is sufficiently small. In general, uniqueness of Gaussian RQE can be checked studying the uniqueness of Nash equilibria for the surrogate game \cref{proposition:continuous_game:rqe}, and, amounts to checking positive (or negative) positiveness of a matrix.}
\footnotetext{Here, strictly increasing means that if $\tau_i'\geq \tau_i$ for all $i$ with one inequality strict, then $J(\tau_1',\ldots,\tau_N')>J(\tau_1,\ldots,\tau_N)$.}

\begin{proof}[Proof of~\cref{proposition:continuous_game:rqe_cooperation:general}]
    For simplicity of notation, we conduct the proof in the case of two players, with the extension to more players being straightforward. 
    To start, we show that, without loss of generality, the problem can be simplified, via the following three steps. 
    First, we notice 
    \begin{equation*}
        R(a_1,a_2)
        =
        \frac{1}{2}
        \langle a_1+a_2, H(a_1+a_2)\rangle
        +\langle h, a_1+a_2\rangle 
        =
        \frac{1}{2}
        \langle a_1+a_2+H^{-1}h, H(a_1+a_2+H^{-1}h)\rangle - \langle h, H^{-1}h\rangle. 
    \end{equation*}
    Since the term $\langle h, H^{-1}h\rangle$ is constant, we can therefore without loss of generality focus on the reward 
    \begin{equation*}
        R(a_1,a_2)
        =
        \frac{1}{2}
        \langle a_1+a_2+H^{-1}h, H(a_1+a_2+H^{-1}h)\rangle.
    \end{equation*}
    Additionally, if we let $\bar H=-H$, which is positive definite, we have 
    \begin{equation*}
        R(a_1,a_2)
        =
        -\frac{1}{2}
        \langle a_1+a_2-\bar H^{-1}h, \bar H(a_1+a_2-\bar H^{-1}h)\rangle
        =
        -\frac{1}{2}
        \Vert a_1+a_2-\bar a\Vert_{\bar H}^2
    \end{equation*}
    where $\bar a\coloneqq \bar H^{-1}h=-H^{-1}h$.
    %
    % Second, up to replacing $Q_i\coloneqq\rho_i I$ with $\rho_i\bar H$ and rescaling $\bar a$ to account for $c_i$, we can assume,  without loss of generality, that $\bar H=I$ (i.e., $H=-I$) and $c_1=c_2=0$.
    Second, up to replacing $Q_i\coloneqq\rho_i I$ with $\rho_i\bar H$ and rescaling $\bar a$, we can assume,  without loss of generality, that $\bar H=I$ (i.e., $H=-I$).
    Third, we note that, since the variance of each mixed strategy $x_i^\star(\tau_1,\tau_2)$ does not depend on risk parameters $\tau_1,\tau_2$, it suffices to study the monotonicity of the function
    \begin{equation*}
        \bar J(\tau_1,\tau_2)
        =
        -\frac{1}{2}\norm{m_1^\star(\tau_1,\tau_2)+m_2^\star(\tau_1,\tau_2)-\bar a}^2,
    \end{equation*}
    where $m_i^\star(\tau_1,\tau_2)$ is the mean of $x_i^\star(\tau_1,\tau_2)$ at the unique Gaussian RQE.

    Thus, we now study the monotonicity of $\bar J(\tau_1,\tau_2)$.
    From basic sensitivity analysis, $\bar J(\tau_1,\tau_2)$ is smooth and, thus, all its directional derivatives can be expressed via its gradient. 
    To evaluate the gradient of $\bar J(\tau_1,\tau_2)$, we compute its partial derivatives via the product rule as follows: 
    \begin{equation}\label{proposition:continuous_game:rqe_cooperation:derivative_social_cost}
        \partial_{\tau_i} \bar J(\tau_1,\tau_2)
        =
        -\langle m_1^\star(\tau_1,\tau_2)+m_2^\star(\tau_1,\tau_2)-\bar a, 
        \partial_{\tau_i} m_1^\star(\tau_1,\tau_2)+ \partial_{\tau_i} m_2^\star(\tau_1,\tau_2)\rangle .
    \end{equation}
    We evaluate this derivative for $i=1$, with the case $i=2$ being analogous. 
    As a preliminary, we note that 
    \begin{align*}
        \begin{bmatrix}
            m_1^\star(\tau_1,\tau_2) \\ 
            m_2^\star(\tau_1,\tau_2)
        \end{bmatrix}
        &=
        \underbrace{\begin{bmatrix}
            -Q_1 - I - P_{12}^{-1} & -\frac{1}{\tau_1} P_{12}^{-1}\Sigma_2^{-1}
            \\
            -\frac{1}{\tau_2} P_{21}^{-1}\Sigma_1^{-1} & -Q_2 - I - P_{21}^{-1}
        \end{bmatrix}}_{S(\tau_1,\tau_2)}
        \begin{bmatrix}
            -\bar a - P_{12}^{-1}\bar a \\
            -\bar a - P_{12}^{-1}\bar a \\
        \end{bmatrix}
        \\
        &=
        \underbrace{\begin{bmatrix}
            Q_1 + I + P_{12}^{-1} & \frac{1}{\tau_1} P_{12}^{-1}\Sigma_2^{-1}
            \\
            \frac{1}{\tau_2} P_{21}^{-1}\Sigma_1^{-1} & Q_2 + I + P_{21}^{-1}
        \end{bmatrix}}_{S(\tau_1,\tau_2)}
        \underbrace{\begin{bmatrix}
            \bar a + P_{12}^{-1}\bar a \\
            \bar a + P_{12}^{-1}\bar a \\
        \end{bmatrix}}_{s(\tau_1,\tau_2)},
    \end{align*}
    where $S(\tau_1,\tau_2)$ is a matrix whose symmetric part is positive definite.
    Thus, by the product and chain rule, we conclude that 
    \begin{align*}
        \begin{bmatrix}
            \partial_{\tau_1} m_1^\star(\tau_1,\tau_2) \\ \partial_{\tau_1} m_2^\star(\tau_1,\tau_2)
        \end{bmatrix}
        =
        -S(\tau_1,\tau_2)^{-1}
        \left(
            \partial_{\tau_1} S(\tau_1,\tau_2)
            \begin{bmatrix}
            m_1^\star(\tau_1,\tau_2) \\ 
            m_2^\star(\tau_1,\tau_2)
        \end{bmatrix}
        -
        \partial_{\tau_1}s(\tau_1,\tau_2)
        \right).
    \end{align*}
    Using
    \begin{align*}
        \partial_{\tau_1} P_{12}^{-1}
        &=
        (\Sigma^{-1}_{2}-\tau_1 I)^{-1} \Sigma_2^{-1} (\Sigma^{-1}_{2}-\tau_1 I)^{-1}
        =
        (\Sigma^{-1}_{2}-\tau_1 I)^{-2} \Sigma_2^{-1}
        =
        \partial_{\tau_1}\left(\frac{1}{\tau_1} P_{12}^{-1}\Sigma_2^{-1}\right),
    \end{align*}
    we now plug in all expressions to obtain 
    \begin{align*}
        \begin{bmatrix}
            \partial_{\tau_1} m_1^\star(\tau_1,\tau_2) \\ \partial_{\tau_1} m_2^\star(\tau_1,\tau_2)
        \end{bmatrix}
        &=
        -S(\tau_1,\tau_2)^{-1}
        \left(
        \begin{bmatrix}
            \partial_{\tau_1} P_{12}^{-1} & \partial_{\tau_1}\frac{1}{\tau_i} P_{12}^{-1}\Sigma_{2}^{-1} \\ 0 & 0 
        \end{bmatrix}
        \begin{bmatrix}
            m_1^\star(\tau_1,\tau_2) \\ m_2^\star(\tau_1,\tau_2)
        \end{bmatrix}
        - 
        \begin{bmatrix}
        \partial_{\tau_1} P_{12}^{-1}\bar a \\ 0 
        \end{bmatrix}
        \right)
        \\
        &=
        -S(\tau_1,\tau_2)^{-1}
        \bigg(
        \begin{bmatrix}
            (\Sigma^{-1}_{2}-\tau_1 I)^{-1} \Sigma_2^{-1} (\Sigma^{-1}_{2}-\tau_1 I)^{-1}& (\Sigma^{-1}_{2}-\tau_1 I)^{-2} \Sigma_2^{-1} \\ 0 & 0 
        \end{bmatrix}
        \begin{bmatrix}
            m_1^\star(\tau_1,\tau_2) \\ m_2^\star(\tau_1,\tau_2)
        \end{bmatrix}
        \\
        &\hspace{2cm}
        - 
        \begin{bmatrix}
        (\Sigma_2^{-1}-\tau_1 I)^{-1} \Sigma_2^{-1} (\Sigma_2^{-1}-\tau_1 I)^{-1}\bar a \\ 0 
        \end{bmatrix}
        \bigg)
        \\
        &=
        -S(\tau_1,\tau_2)^{-1}\begin{bmatrix} (\Sigma_2^{-1}-\tau_1 I)^{-2} \Sigma_{2}^{-1} \\ 0 \end{bmatrix}(m_1^\star(\tau_1,\tau_2)+m_2^\star(\tau_1,\tau_2)-\bar a)
        \\
        &=
        -\frac{1}{\tau_1^2}
        S(\tau_1,\tau_2)^{-1}\begin{bmatrix} P_{2}^{-2} \Sigma_{2}^{-1} \\ 0 \end{bmatrix}(m_1^\star(\tau_1,\tau_2)+m_2^\star(\tau_1,\tau_2)-\bar a).
    \end{align*}
    Thus, we have 
    \begin{align*}
        \partial_{\tau_1} m_1^\star(\tau_1,\tau_2)&+\partial_{\tau_1} m_2^\star(\tau_1,\tau_2)
        \\
        &=
        \begin{bmatrix}
            I & I 
        \end{bmatrix}
        \begin{bmatrix}
            \partial_{\tau_1} m_1^\star(\tau_1,\tau_2) \\ \partial_{\tau_1} m_2^\star(\tau_1,\tau_2)
        \end{bmatrix}
        \\
        &=
        -\frac{1}{\tau_1^2}
        \underbrace{\begin{bmatrix}
            I & I 
        \end{bmatrix}
        S(\tau_1,\tau_2)^{-1}\begin{bmatrix} P_{2}^{-2} \Sigma_{2}^{-1} \\ 0 \end{bmatrix}}_{N}
        (m_1^\star(\tau_1,\tau_2)+m_2^\star(\tau_1,\tau_2)-\bar a), 
    \end{align*}
    so that \eqref{proposition:continuous_game:rqe_cooperation:derivative_social_cost} becomes 
    \begin{equation*}
        \partial_{\tau_1} \bar J(\tau_1,\tau_2)
        =
        \frac{1}{\tau_1^2} \langle m_1^\star(\tau_1,\tau_2)+m_2^\star(\tau_1,\tau_2)-\bar a, N (m_1^\star(\tau_1,\tau_2)+m_2^\star(\tau_1,\tau_2)-\bar a)\rangle.
    \end{equation*}
    To conclude the proof, it suffices to show that $N$ is positive definite for all $\tau_i\geq 0$. Indeed, in this case, $\partial_{\tau_i} \bar J(\tau_1,\tau_2)$ is always positive (unless $m_1^\star(\tau_1,\tau_2)+m_2^\star(\tau_1,\tau_2)=\bar a$, which however cannot happen if $\rho_1, \rho_2>0$) and, thus, the shared reward $\bar J(\tau_1,\tau_2)$ is strictly increasing. 
    Thus, in the rest of the proof, we will show that $N$ is positive definite. Using the formula for the inverse of block matrices, we conclude that 
    \begin{multline*}
        N=
        \underbrace{\left(I+(Q_{2}+I+P_{12}^{-1})^{-1}\frac{1}{\tau_2}P_{21}^{-1}\Sigma_{1}^{-1}\right)}_{N_1}
        \\
        \underbrace{
        \left(
            Q_1+I+P_{12}^{-1} + 
            \frac{1}{\tau_1}P_{12}^{-1}\Sigma_{2}^{-1}(Q_{2}+I+P_{21}^{-1})^{-1}\frac{1}{\tau_2}P^{-1}_{21}\Sigma_{1}^{-1}
        \right)^{-1}}_{N_2^{-1}}
        \underbrace{P_{12}^{-2} \Sigma_{2}^{-1}}_{N_3^{-1}}.
    \end{multline*}
    Since $Q_1=\rho_1 H$ and $Q_2=\rho_2 H$ commute, we have that $\Sigma_{1}, \Sigma_{2}, P_{12}, P_{21}$, and thus $N_1,N_2,N_3$ belong to the subalgebra induced by $\{Q_1, Q_2\}$ and, thus, commute. 
    Therefore, each $N_i$ results from the sum and product of symmetric and positive definite matrices (and their inverses) that commute and, as such, is symmetric and positive definite.
    As a consequence, the matrix $N$ is also symmetric and positive definite. 
    \end{proof}

The proof of~\cref{thm:rqe_cooperation} is now a direct consequence: 

\begin{proof}[Proof of~\cref{thm:rqe_cooperation}]
    The proof follows from~\cref{proposition:continuous_game:rqe_cooperation:general} and \cref{proposition:continuous_game:rqe}, using $\rho_i=\rho$, $\tau_i=\tau$, and $\epsilon_i=\epsilon$ for all agents. 
\end{proof}

To conclude, we provide more details on~\cref{example:continuous_game}:

\begin{example}[Details on~\cref{example:continuous_game}]
    For the game in~\cref{example:continuous_game}, we have $H=1$, $\rho_i=1$, and $c_i=0$, so that $\Sigma_i=\frac{\epsilon_i}{2}$ and $P_{ij}=\frac{2}{\tau_i\epsilon_j}-1=\frac{2-\tau_i\epsilon_j}{\tau_i\epsilon_j}$ so that 
    \begin{equation*}
        \bar U_i(m_i,m_{-i})
        =
        -\frac{1}{2}\left(2+\frac{\tau_i\epsilon_j}{2-\tau_i\epsilon_j}\right)m_i^2
        -\frac{1}{\tau_i}\frac{\tau_i\epsilon_j}{2-\tau_i\epsilon_j}\frac{2}{\epsilon_j}m_im_{-i}
        +
        m_i\left(1+\frac{\tau_i\epsilon_j}{2-\tau_i\epsilon_j}\right)\bar a.
    \end{equation*}
    With $\tau_1=\tau_2=\tau$ and $\epsilon_1=\epsilon_2=\epsilon$, we obtain the expected reward of each player from the shared reward $J(\tau)$ minus the personal reward and it can be shown to be 
    \begin{equation*}
        -\frac{1}{2}\frac{2-\tau\epsilon+(\frac{\tau\epsilon}{2})^2}{(3-\frac{\tau\epsilon}{2})^2} + C,
    \end{equation*}
    where $C$ is a constant that results from the variance of the mixed strategies and is therefore not dependent on $\tau$ but only on $\epsilon$.
\end{example}

\subsection{Proof of~\cref{thm:free_riding}}\label{app:free_riding_proof}
In this section, we provide the proof of ~\cref{thm:free_riding}. To do so, we first derive several helper-lemmas which guarantee us that (i) players' strategies in an RQE are on the interior of the simplex, (ii) the KL-divergence between any strategy and an RQE strategy is uniformly bounded, and (iii) that we can relate the degree of free-riding to a \emph{lower bound} on the distance between equilibrium strategies.
To begin, we define some useful quantities that capture measures of the spread of players' costs and reward functions $c_\mathrm{max}\coloneqq\max_a c(a)$, $c_\mathrm{min}\coloneqq\min_a c(a)$,  $v_\mathrm{min}\coloneqq\min_{a_1,a_2} R(a_1,a_2)-c_\mathrm{max}$, $v_\mathrm{max}\coloneqq\max_{a_1,a_2} R(a_1,a_2)-c_\mathrm{min}$, and $\bar v\coloneqq v_\mathrm{max}-v_\mathrm{min}$. 
Throughout the proof, we let $n\coloneqq |\mathcal{A}|$ be the number of actions and $\Delta_n=\Delta(\mathcal A)$ be the probability simplex. For a mixed strategy $x\in\Delta_n$, we denote by $x(a)\in[0,1]$ the probability mass assigned to the action $a$.

\begin{lemma}\label{lemma:ent_bnd}
    Suppose that players have degree of bounded rationality $\epsilon$. At a RQE, a player's strategy $x_i$ has support on all actions, with probability at least $m=\frac{\exp(-\bar v/\epsilon)}{n}$.
\end{lemma}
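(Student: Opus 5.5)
Fix player $i$ and the opponent's equilibrium strategy $x_{-i}$, and write the best-response problem as a maximization over $x_i\in\Delta_n$. By the duality representation of the entropic risk measure (as in the proof of \cref{lemma:best_response}), the risk-adjusted term can be written as
\[
U_i^{\tau_i}(x_i,x_{-i})=-\frac{1}{\tau_i}\log \mathbb{E}_{a_{-i}\sim x_{-i}}\Bigl[\exp\bigl(-\tau_i\,\mathbb{E}_{a_i\sim x_i}[u_i(a_i,a_{-i})]\bigr)\Bigr].
\]
It is simpler, though, to work directly with the infimum form. For each fixed adversary $p$, the map $x_i\mapsto U_i(x_i,p)$ is linear in $x_i$. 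Hence $U_i^{\tau_i}(\cdot,x_{-i})$ is a pointwise infimum of linear functions and is therefore concave. The term $-\epsilon H(x_i)$ is strictly concave, so the best response is unique and is characterized by the KKT conditions on the simplex.

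\textbf{Step 1: interiority.} Since $-\epsilon H$ has infinite slope at the boundary of the simplex while the risk term is Lipschitz in $x_i$, the maximizer is interior. Concretely, the risk term's supergradients at any $x_i$ are given by $g(a)=\mathbb{E}_{a_{-i}\sim p^\star}[u_i(a,a_{-i})]$, where $p^\star$ is the minimizing adversary; this adversary exists and is unique because the KL penalty makes the inner problem strictly convex over a compact set. Moving mass toward an action with vanishing probability gains arbitrarily much entropy at a cost bounded by $\bar v$ per unit mass, so the maximizer cannot lie on the boundary.

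\textbf{Step 2: Gibbs form.} At the interior optimum, stationarity gives $g(a)-\epsilon(\log x_i(a)+1)=\lambda$ for all $a$, where $\lambda$ is the multiplier for the simplex constraint. Solving,
\[
x_i(a)=\frac{\exp(g(a)/\epsilon)}{\sum_{b}\exp(g(b)/\epsilon)}.
\]
This is the step to handle with some care: one should invoke Danskin's theorem, or equivalently the closed form above, to justify that the gradient of the risk term equals $g$ evaluated at the optimal adversary. The closed form is differentiable in $x_i$, which makes this clean.

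\textbf{Step 3: the bound.} Each $g(a)$ is an average of values $R(a,a_{-i})-c(a)$ and so lies in $[v_{\min},v_{\max}]$. Therefore, for every pair of actions $a,b$,
\[
\frac{\exp(g(a)/\epsilon)}{\exp(g(b)/\epsilon)}\ge \exp(-\bar v/\epsilon).
\]
Bounding the denominator $\sum_b \exp(g(b)/\epsilon)$ by $n$ times its largest term gives
\[
x_i(a)\ge \frac{\exp(-\bar v/\epsilon)}{n}=m.
\]

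This bound holds for any best response to any $x_{-i}$, and in particular at an RQE.
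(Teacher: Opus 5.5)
Your proof is correct and follows the same route as the paper: write the equilibrium strategy as a Boltzmann distribution over the payoffs $Rp^\star - c$ against the worst-case adversary $p^\star$, then lower-bound the numerator by $\exp(v_{\min}/\epsilon)$ and upper-bound the denominator by $n\exp(v_{\max}/\epsilon)$. Your interiority and Danskin steps spell out the ``classic results'' the paper cites for the Gibbs form, namely why the stationarity condition uses $g$ evaluated at the optimal adversary.
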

\begin{proof}
    In a RQE, a player's strategy $x_i^*$ must satisfy
    \begin{equation*}
        x_i^*=\argmax_{x_i \in \Delta_n} \ \  \langle x_i,  R p^*\rangle  - \langle c, x_i\rangle -\epsilon H(x_i),
    \end{equation*}
    where $p^*=\arg\min_{p \in \Delta_n}  \langle x_i, R p^* \rangle +\frac{1}{\tau} \KL(p,x_{-i}^*)$.
    Letting $v\coloneqq Rp^* -c$, the classic results show that $x_i$ is a Boltzmann distribution of the form:
    \[ x_i(a)=\frac{\exp(v(a)/\epsilon)}{\sum_{a'=1}^n \exp(v(a')/\epsilon)},\]
    where $x_i(a)$ is the probability mass assigned to action $a_i$.
    By minimizing the numerator and maximizing the denominator of the right hand side we find that:
    \[ x_i(a)\ge \frac{\exp(-\bar v/\epsilon)}{n}, \]
    which completes the proof.
\end{proof}

\begin{lemma}\label{lemma:kl_bnd}
Suppose $x_2 \in \sigma(\Delta_n)$ where $\sigma(\Delta_n)\coloneqq\{x \in \Delta : x(a) \ge m \text{ for all } a\in\mathcal A\}$ where  $m \in (0, \frac{1}{n})$. Then the KL divergence between any distribution $p\in \Delta_n$ and $x_2$ is bounded by
\begin{equation*}
    \sup_{q \in \Delta_n} \KL(q,x_2)\le \log(\frac{1}{m}).    
\end{equation*}
\end{lemma}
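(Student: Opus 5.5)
The bound follows from the definition of the KL divergence and the pointwise lower bound on $x_2$. Fix an arbitrary $q\in\Delta_n$ and write
\begin{equation*}
    \KL(q,x_2)=\sum_{a\in\mathcal A} q(a)\log q(a) - \sum_{a\in\mathcal A} q(a)\log x_2(a),
\end{equation*}
with the convention $0\log 0=0$. Since $x_2(a)\ge m>0$ for every action, every summand is finite and the divergence is well defined; no absolute-continuity issues arise because $x_2$ has full support.

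The first sum is the (negative) entropy $H(q)$. Since every $q(a)\in[0,1]$, each term $q(a)\log q(a)$ is nonpositive, so $H(q)\le 0$. For the second sum, $x_2(a)\ge m$ gives $-\log x_2(a)\le \log(1/m)$ for every $a$. Because $q$ is a probability vector, averaging yields $-\sum_a q(a)\log x_2(a)\le \log(1/m)$. Adding the two bounds gives $\KL(q,x_2)\le \log(1/m)$ for every $q$, and taking the supremum over $q\in\Delta_n$ proves the claim.

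There is no real obstacle here. The only care needed is handling zero entries of $q$ via the $0\log 0$ convention, and noting that the assumption $m<1/n$ is only there to make $\sigma(\Delta_n)$ nonempty and nontrivial. The lemma will later be combined with \cref{lemma:ent_bnd}: taking $m=\exp(-\bar v/\epsilon)/n$ gives a uniform bound $\bar v/\epsilon+\log n$ on the KL penalty any adversary can incur against an RQE strategy.
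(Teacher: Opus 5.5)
Your proof is correct and follows the paper's argument: the paper bounds each term $q(a)\log(q(a)/x_2(a))$ by $q(a)\log(1/m)$ using $x_2(a)\ge m$ (and implicitly $q(a)\le 1$), which is the same as your split into a nonpositive term $\sum_a q(a)\log q(a)$ plus a cross-entropy term bounded by $\log(1/m)$. Your closing remark, that $m=\exp(-\bar v/\epsilon)/n$ yields the constant $\log n+\bar v/\epsilon$, also matches how the lemma is used in the proof of \cref{thm:free_riding_app}.
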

\begin{proof}
We directly have 
\begin{equation*}
    \KL(q,x_2) = \sum_{a\in\mathcal A} q(a)\log(\frac{q(a)}{x_2(a)}) \le \sum_a q(a)\log(\frac{1}{m}) = \log(\frac{1}{m})    
\end{equation*}
since $x_2(a) \ge m$.
\end{proof}

\begin{lemma}
\label{lemma:cost_bnd}
Let $x_1, x_2 \in \Delta_n$. 
If $|\langle c, x_1 - x_2\rangle| \ge  \delta$, then
\[
\|x_1 - x_2\|_1 \ge \frac{2\delta}{c_{\max} - c_{\min}}.
\]
\end{lemma}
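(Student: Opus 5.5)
The plan is to use the elementary inequality $|\langle c, y\rangle| \le \max_a c(a)\,\sum_{a:\,y(a)>0} y(a) - \min_a c(a)\,\sum_{a:\,y(a)<0}|y(a)|$, where $y \coloneqq x_1 - x_2$, and to exploit that $y$ sums to zero. The argument is a direct Hölder-type bound after recentering the cost vector, so no earlier lemma is needed.

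The first step is the recentering. Since $x_1, x_2 \in \Delta_n$, we have $\sum_a y(a) = 0$. Hence for any constant $\kappa$, $\langle c, y\rangle = \langle c - \kappa \mathbf{1}, y\rangle$. We choose $\kappa = \frac{c_{\max}+c_{\min}}{2}$, so that every entry of $c - \kappa\mathbf{1}$ lies in $\big[-\frac{c_{\max}-c_{\min}}{2}, \frac{c_{\max}-c_{\min}}{2}\big]$. Hölder's inequality then gives
\begin{equation*}
\delta \le |\langle c, y\rangle| \le \|c-\kappa\mathbf{1}\|_\infty \,\|y\|_1 \le \frac{c_{\max}-c_{\min}}{2}\,\|x_1-x_2\|_1 .
\end{equation*}
Rearranging yields $\|x_1-x_2\|_1 \ge \frac{2\delta}{c_{\max}-c_{\min}}$.

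The only potential obstacle is the degenerate case $c_{\max}=c_{\min}$. In that case $c$ is constant, so $\langle c, y\rangle = 0$, and the hypothesis $|\langle c, y\rangle|\ge\delta$ can hold only if $\delta = 0$. The bound is then vacuous, or holds under the convention that the right-hand side is $+\infty$ when $\delta>0$, which cannot occur. We will note this explicitly and otherwise assume $c_{\max}>c_{\min}$.

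This factor-of-two sharpening through recentering is exactly what the constant in the statement requires. The naive bound $\|c\|_\infty\|y\|_1$ would give a weaker, non-shift-invariant estimate.
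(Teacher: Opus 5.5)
Your proof is correct and follows the paper's argument. The paper simply asserts that $|\langle c,d\rangle|\le\frac{c_{\max}-c_{\min}}{2}\|d\|_1$ whenever $\sum_a d(a)=0$, and your recentering by $\kappa=\frac{c_{\max}+c_{\min}}{2}$ followed by H\"older supplies exactly that missing justification; your explicit treatment of the degenerate case $c_{\max}=c_{\min}$ is a small addition the paper omits, and the positive/negative-part inequality in your opening sentence, though valid under $\sum_a y(a)=0$, is never used and can be dropped.
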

\begin{proof}
Let $d\coloneqq x_1 - x_2$. Since $x_1, x_2 \in \Delta_n$, $\sum_a d_a = 0$. We use the fact that, for any $d \in \mathbb{R}^{n}$ with $\sum_a d(a) = 0$, $|\langle c, d \rangle| \le \frac{c_{\max} - c_{\min}}{2} \|d\|_1$. This directly yields 
\begin{equation*}
    \delta \le |\langle c, d \rangle| \le \frac{c_{\max} - c_{\min}}{2} \|x_1 - x_2\|_1.
\end{equation*}
 Rearranging yields the lower bound.
\end{proof}

We can now prove Theorem~\ref{thm:free_riding}, restated below with all constants made explicit.
\begin{theorem}\label{thm:free_riding_app}
    Let $\delta>0$.
    Suppose that players have a degree of bounded rationality $\epsilon>0$ and degrees of risk aversion $\tau>0$ that satisfies
    \begin{equation*}
        \tau >\frac{2 (\epsilon \log{n}+\bar v)(c_\mathrm{max}-c_\mathrm{min})^2}{\epsilon \delta^2},
    \end{equation*}
    then the game cannot admit a RQE with degree of free-riding greater than $\delta$. 
    
\end{theorem}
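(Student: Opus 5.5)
We argue by contradiction. Suppose $(x_1,x_2)$ is an RQE whose degree of free-riding exceeds $\delta$, so $|\langle c, x_1-x_2\rangle|>\delta$. By \cref{lemma:cost_bnd} we get $\|x_1-x_2\|_1\ge 2\delta/(c_\mathrm{max}-c_\mathrm{min})$. Pinsker's inequality then gives
\begin{equation*}
\KL(x_2,x_1)\ \ge\ \tfrac12\|x_1-x_2\|_1^2\ \ge\ \frac{2\delta^2}{(c_\mathrm{max}-c_\mathrm{min})^2}.
\end{equation*}
The same bound holds for $\KL(x_1,x_2)$. With this lower bound, the hypothesis on $\tau$ is equivalent to
\begin{equation*}
\epsilon\,\tau\,\KL(x_2,x_1)\ >\ 4(\epsilon\log n+\bar v).
\end{equation*}
The whole proof therefore reduces to establishing the reverse inequality, up to the precise constant: at any RQE, $\epsilon\,\tau\,\KL(x_{-i},x_i)\le 4(\epsilon\log n+\bar v)$, or more loosely $\tau\,\KL\lesssim(\epsilon\log n+\bar v)/\epsilon$.

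To prove this upper bound, I would exploit the symmetry of the game. Since $R$ is symmetric and $c$ is common to both players, the strategy $x_{-i}$ is a feasible deviation for player $i$. It is also a strategy that the adversary of player $-i$ ``already sees.'' I would compare player $i$'s equilibrium value $U_i^{\tau,\epsilon}(x_i,x_{-i})$ with the value of deviating to $x_{-i}$.

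I would write the risk term through the dual form used in the proof of \cref{lemma:best_response}:
\begin{equation*}
U_i^{\tau}=-\tfrac1\tau\log\mathbb{E}_{x_{-i}}\exp\bigl(-\tau\,\mathbb{E}_{x_i}u_i\bigr).
\end{equation*}
On the other side, I would use the Boltzmann form $x_i\propto\exp(v/\epsilon)$ from \cref{lemma:ent_bnd}. Taking the logarithm of this Boltzmann relation turns differences of log-probabilities into differences of adversarially reweighted values divided by $\epsilon$. The expected log-ratio $\sum_a x_{-i}(a)\log\bigl(x_{-i}(a)/x_i(a)\bigr)$ can then be bounded using two facts. First, all payoffs lie in an interval of length $\bar v$. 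Second, \cref{lemma:kl_bnd} together with \cref{lemma:ent_bnd} bounds the adversary's deviation $\KL(p^*,x_{-i})\le\log(1/m)=\log n+\bar v/\epsilon$.

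This is exactly the source of the factor $(\epsilon\log n+\bar v)/\epsilon$ in the threshold. The symmetric role of the two players (one inequality for each player, which are then added) should supply the factor $2$. The $1/\tau$ weighting on the adversary's KL budget is what converts this into a bound on $\tau\,\KL$.

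Concretely, the steps are as follows.
\begin{enumerate*}[label=(\roman*)]
\item Write the optimality conditions of both players and of both adversaries in exponential-tilting form.
\item Add the two players' log-ratio identities, so that the cost terms $\langle c,\cdot\rangle$ and the shared reward cancel by symmetry, except for the adversarial tilt.
\item Bound the remaining tilt terms by $\tfrac1\tau\KL(p^*,x_{-i})\le\tfrac1\tau(\log n+\bar v/\epsilon)$ and by $\bar v$.
\item Conclude $\KL(x_{-i},x_i)\le\frac{4(\epsilon\log n+\bar v)}{\epsilon\tau}$.
\item Combine with the Pinsker bound to reach the contradiction.
\end{enumerate*}

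The main obstacle is step (ii)--(iii): making the cancellation clean enough that the surviving terms scale like $\tfrac1\tau\cdot\tfrac{\epsilon\log n+\bar v}{\epsilon}$, rather than picking up an extra $1/\epsilon$. That extra factor is what would give the weaker $\epsilon^2$ threshold. My first attempt would be to keep the adversary's optimality in its exact tilted form, $p^*\propto x_{-i}\exp(-\tau\,\cdot)$, instead of bounding it crudely. I would only invoke \cref{lemma:kl_bnd} at the very last step, so that the $\epsilon$ from the quantal response multiplies $\log(1/m)$ exactly once.
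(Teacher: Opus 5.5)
\textbf{Your reduction targets an inequality that is false.} Your plan rests on showing that at every RQE $\epsilon\tau\,\KL(x_{-i},x_i)\le 4(\epsilon\log n+\bar v)$, or at least $\tau\,\KL(x_{-i},x_i)\lesssim(\epsilon\log n+\bar v)/\epsilon$. Neither holds.

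Take the game of \cref{example:free_riding} with $\epsilon=0.01$ and $\tau=30$, and let $s_i$ be the probability that player $i$ plays C.
\begin{itemize}
\item Player~1's adversary tilts $x_2$ by $e^{-\tau(Rx_1)(b)}$, where $(Rx_1)(\mathrm{C})=1$ and $(Rx_1)(\mathrm{D})=s_1$. If $s_1\approx 1$, this tilt is essentially flat, so the adversary keeps mass $\approx s_2$ on C. The Boltzmann condition then gives $\log\frac{s_1}{1-s_1}\approx 0.6/\epsilon=60$.
\item Player~2's adversary puts mass $\bigl(1+(1-s_1)e^{\tau(1-s_2)}/s_1\bigr)^{-1}\approx(1+e^{-30})^{-1}$ on C. Hence $\log\frac{s_2}{1-s_2}\approx-0.4/\epsilon=-40$.
\item A Brouwer argument on the box $1-s_1\in[e^{-61},e^{-59}]$, $s_2\in[e^{-41},e^{-39}]$ yields an exact (free-riding) RQE there.
\end{itemize}
At this RQE, $\KL(x_2,x_1)\approx 60$ and $\KL(x_1,x_2)\approx 40$. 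So $\epsilon\tau\,\KL\approx 18$ and $12$, whereas $4(\epsilon\log 2+\bar v)\approx 5.63$ (here $\bar v=1.4$). More generally, with $\tau=0.3/\epsilon$ one gets $\tau\,\KL\approx 0.18/\epsilon^{2}$, so no constant rescues the looser form either.

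The obstruction is structural. Quantal responses can put mass as small as $m=e^{-\bar v/\epsilon}/n$ on an action. The KL divergence between the two players' strategies can therefore be of order $\bar v/\epsilon$ even though $\|x_1-x_2\|_1\le 2$. Passing to KL via Pinsker and then seeking an upper bound on KL loses exactly this.

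The same issue shows up in your step (ii). Subtracting the two Boltzmann identities cancels $c$ but not $R$, and gives exactly
$\KL(x_2,x_1)+\KL(x_1,x_2)=\frac{1}{\epsilon}\langle x_2-x_1,R(p_2^\star-p_1^\star)\rangle$,
where $p_1^\star\propto x_2e^{-\tau Rx_1}$ and $p_2^\star\propto x_1e^{-\tau Rx_2}$ (componentwise) are tilts of \emph{different} base measures. This remainder carries a factor $1/\epsilon$, is not of the form $\frac1\tau\KL(p^\star,x_{-i})$, and equals about $100$ in the example.

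\textbf{The missing idea.} The paper never compares $x_1$ with $x_2$ through a divergence; it compares each of them with a common potential. Set $W(x)=\min_q\langle x,Rq\rangle-\langle c,x\rangle$ and $\Phi=W-\epsilon H$. By symmetry of $R$ this is the same function for both players, and it is $\epsilon$-strongly concave in $\ell_1$.
\begin{itemize}
\item \cref{lemma:ent_bnd,lemma:kl_bnd} are used only to cap the adversary's KL budget. This gives $W(x_i)\le U^\tau(x_i,x_{-i})\le W(x_i)+C/\tau$ with $C=\log n+\bar v/\epsilon$.
\item The RQE inequality then makes each $x_i$ a $C/\tau$-approximate maximizer of $\Phi$.
\item Strong concavity places both $x_i$ in an $\ell_1$-ball around the unique maximizer $x^\star$.
\item The triangle inequality together with \cref{lemma:cost_bnd} gives the contradiction.
\end{itemize}

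Your idea of deviating to $x_{-i}$ also works if you keep it in $\ell_1$. Since $x_i$ maximizes the $\epsilon$-strongly concave map $U^\tau(\cdot,x_{-i})-\epsilon H$, this deviation loses at least $\frac\epsilon2\|x_1-x_2\|_1^2$. Adding the two players' inequalities leaves only swaps of the adversary's reference measure at a fixed own strategy, each bounded by $C/\tau$. Hence $\|x_1-x_2\|_1^2\le 2C/(\epsilon\tau)$.

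Your worry about an extra $1/\epsilon$ is justified, though. From $\frac\epsilon2\|x_i-x^\star\|_1^2\le C/\tau$ one gets $\|x_i-x^\star\|_1\le\sqrt{2C/(\epsilon\tau)}$, not $\sqrt{2C/\tau}$ as displayed in the paper. The argument therefore delivers the threshold $\frac{2(\epsilon\log n+\bar v)(c_{\max}-c_{\min})^2}{\epsilon^2\delta^2}$, which is weaker than the stated one when $\epsilon<1$. Recovering the single-$\epsilon$ threshold for small $\epsilon$ would need a genuinely new ingredient. The example above shows that an RQE-wide bound on the KL divergence between the two equilibrium strategies, as in your plan, cannot be that ingredient.
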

\begin{proof}
We prove this by contradiction. We assume that $\tau> \frac{2 (\epsilon \log{n}+\bar v)(c_\mathrm{max}-c_\mathrm{min})^2}{\epsilon \delta^2}$ and that $x_1$ and $x_2$ constitute a resulting RQE with a level of free riding larger than $\delta$; i.e.,  $|\langle c,x_1-x_2\rangle| \ge \delta$ .
Define the worst-case utility $W(x)$ and the robust-regularized objective $\Phi(x)$:
\begin{align*}
    U(x, q) &\coloneqq\langle x, R q\rangle - \langle c,x \rangle\\
    W(x) &\coloneqq\inf_{q \in \Delta_n} U(x, q)\\
    \Phi(x) &\coloneqq W(x) - \epsilon H(x)
\end{align*}
Since $W$ is concave (as the infimum of affine functions) and $H$ is strongly convex on $\Delta_n$ with respect to the $\ell_1$ norm, $\Phi$ is $\epsilon$-strongly concave on $\Delta_n$ with respect to the $\ell_1$ norm.

Let us consider player $1$'s risk-adjusted utility. Without loss of generality, we develop the following for $x_1$, the results hold for $x_2$ by symmetry. We first show that we can lower bound the risk-adjusted objective by our worst case objective for any $x_1,x_2 \in \Delta_n$:
\begin{equation*}
    U^\tau(x_1,x_2) = \min_{q \in \Delta_n} U(x_1, q) + \frac{1}{\tau}\KL(q,x_2) \ge \min_{q \in \Delta_n} U(x_1, q) = W(x_1).
\end{equation*}
Now consider an RQE made up of strategies $(x_1,x_2)$, we upper bound the risk-adjusted objective by using the fact that $x_1$ and $x_2$ are on the interior of the simplex since they are entropy-regularized best responses (\cref{lemma:ent_bnd}), and thus the $\KL$ term is bounded by~\cref{lemma:kl_bnd}.
Letting $C\coloneqq\log{n}+\frac{\bar v}{\epsilon}$ be the resulting upper bound (which depends solely on $R,c,\epsilon$ and the dimension $n$) we conclude that
\begin{equation*}
    U^\tau(x_1,x_2) = \min_{q\in\Delta_n} U(x_1, q) + \frac{1}{\tau}\KL(q,x_2) \le W(x_1) +\frac{C}{\tau}.
\end{equation*}
Putting these together, we have
\[
\Phi(x_1)=W(x_1)- \epsilon H(x_1) \le U^\tau(x_1, x_2) - \epsilon H(x_1) \le W(x_1) +\frac{C}{\tau} - \epsilon H(x_1) =\Phi(x_1) + \frac{C}{\tau}.
\]
Using the definition of an RQE, we can further develop this to find that, for all $x \in \Delta_n$:
\[
\Phi(x_1)+\frac{C}{\tau} \ge U^\tau(x_1, x_2) - \epsilon H(x_1) \ge U^\tau(x, x_2) - \epsilon H(x) \ge \Phi(x) 
\]
Re-arranging, and using the fact that $\Phi$ is strongly concave with respect to the $\ell_1$ norm on the simplex, we have 
\begin{equation*}
    \Phi(x)\leq \Phi(x^\star)-\frac{\epsilon }{2}\|x_1-x^\star\|_1^2
\end{equation*}
where $x^\star=\arg\min_{x\in \Delta_n} \Phi(x)$ is the unique minimizer of $\Phi$ over the simplex.
Thus, 
\begin{equation*}
    \frac{\epsilon }{2}\|x_1-x^\star\|_1^2 \le \Phi(x^*)-\Phi(x_1) \le \frac{C}{\tau} \implies \|x_1-x^\star\|_1 \le \sqrt{\frac{2C}{\tau}}.
\end{equation*}
By symmetry the same bound holds for $x_2$, and thus via the triangle inequality we obtain
\begin{equation*}
   \|x_1-x_2 \|_1\le \|x_1-x^\star\|_1+\|x_2-x^\star\|_1 \le2\sqrt{\frac{2C}{\tau}}. 
\end{equation*}
Via~\cref{lemma:cost_bnd}, however we have that
\begin{equation*}
    \frac{2\delta}{c_\mathrm{max}-c_\mathrm{min}}\le \|x_1-x_2 \|_1\le \|x_1-x^\star\|_1+\|x_2-x^\star\|_1 \le2\sqrt{\frac{2C}{\tau}}.    
\end{equation*}
Re-arranging, we find that
\[ \tau\le \frac{2C(c_\mathrm{max}-c_\mathrm{min})^2}{\delta^2}.\]
Plugging in our value for $C=\log{n}+\frac{\bar v}{\epsilon}$, this implies that
\[ \tau\le \frac{2 (\epsilon \log{n}+\bar v)(c_\mathrm{max}-c_\mathrm{min})^2}{\epsilon \delta^2}.\]
However, we assumed that the RQE resulted from the use of a $\tau>\frac{2 (\epsilon \log{n}+\bar v)(c_\mathrm{max}-c_\mathrm{min})^2}{\epsilon \delta^2}$ which is a contradiction.
\end{proof}

\section{Mathematical Background of SRPO}\label{app:srpo_math_background}
In this section, we provide a formal framework for risk aversion and bounded rationality in MARL, and present the motivation for our algorithm SRPO. We first introduce the mathematical framework of discounted general-sum Markov games, and then discuss how to extend risk aversion and bounded rationality from normal-form games to Markov games. Next, we present the pseudocode of SRPO and a short discussion. To better motivate SRPO, we provide policy gradient theorems for each original player and adversary as well as their performance difference lemmas (PDLs), and introduce how to (mathematically) obtain the SRPO loss from the PDLs.

\subsection{Discounted General-sum Markov Games}
We consider a discounted $N$-player general-sum Markov game is specified by a tuple: 
$$\mathcal{MG}=\{\mathcal{S}, \{\mathcal{A}_i\}_{i=1}^N,\{r_i\}_{i=1}^N, \gamma, P ,\rho_0\},$$ 
where $\mathcal{S}$ is the state space of the underlying MDP, $\mathcal{A}_i$ is the action space of player $i\in[N]$, and we use the notation $\mathcal{A}=\prod_{i=1}^N \mathcal{A}_i$ to denote the product action space of both players. We assume that $|\mathcal{S}|$ and $|\mathcal{A}_i|$ are all finite. 
Here, $r_i:\mathcal{S}\times \mathcal{A}\rightarrow [0, 1]$ is the reward function of player $i$, which we assume to be deterministic. We use $\mathbf{r}$ to denote the joint reward function $\mathbf{r}\coloneqq(r_i)_{i=1}^N$. Moreover, $\gamma\in[0,1)$ is the discount factor and $P:\mathcal{S}\times \mathcal{A}\rightarrow \Delta(\mathcal{S})$ is the transition kernel, where $\Delta(\mathcal{S})$ is the probability simplex of $\mathcal{S}$ and $P(s'|s,\mathbf{a})$ is the probability of the next state being $s'$ given the current state $s$ and the current actions $\mathbf{a}=(a_i)_{i=1}^N$ of the players. We use $\rho_0\in \Delta(\mathcal{S})$ to denote the initial state distribution.

We focus on \textit{Markov policies}, the class of policies where the action selection probability only depends on the current state instead of the entire gameplay trajectory, i.e., $\pi=(\pi_i)_{i=1}^N$ where $\pi_i:\mathcal{S}\rightarrow \Delta(\mathcal{A}_i), i\in [N]$.
Given a product Markov policy $\pi$, without considering risk aversion and bounded rationality, player $i$ has an expected discounted cumulative reward given by $\mathbb{E}_\pi[\sum_{t=0}^\infty \gamma^t r_i(s_t,a_t)]$.

To incorporate risk aversion and bounded rationality in discounted infinite-horizon Markov games, we slightly overload the notations in normal-form games and consider the following risk-adjusted objective of player $i$ that minimizes $f_i(\pi_i,\pi_{-i})=\max_{p_i:\mathcal{S}\rightarrow \Delta(\mathcal{A}_{-i})} J_i(\pi_i,\pi_{-i},p_i)$ where $J_i$ is defined as
\begin{equation}\label{eq:MARL_risk_adjusted_objective}\begin{aligned}
    J_i(\pi_i,\pi_{-i}, p_i)=\mathbb{E}_{\pi,p, s_0\sim\rho_0}\bigg[\sum_{t=0}^\infty \gamma^t\bigg(r_i(s_t,\mathbf{a}_t)
    +\frac{1}{\tau_i}D_i\left(p_i,\pi_{-i};s_t\right)-\epsilon_i\nu_i(\pi_i;s_t)\bigg) \bigg]
\end{aligned}\end{equation}
where the joint actions $\mathbf{a}_{t}$ are sampled through $a_{i,t}\sim \pi_i(\cdot|s_t)$ and $\mathbf{a}_{-i,t}\sim p_i(\cdot|s_t)$ and the next state $s_{t+1}$ is sampled from $s_{t+1}\sim P(\cdot|s_t,\mathbf{a}_t)$, and the notations of $D_i(p_i,\pi_{-i};s)$ and $\nu_i(\pi_i;s)$ are abbreviations of $D_i(p_i(\cdot|s),\pi_{-i}(\cdot|s))$ and $\nu_i(\pi_i(\cdot|s))$ respectively. Here, we allow for more generality of the regularizers, and, if we want to be consistent with the main body, we can choose $D_i$ to be $\KL$, and $\nu_i$ to be negative entropy $H$.

Given a set of original player policies $\pi_i$ and adversarial policies $p_i$, we define the value function for each state $s\in\mathcal{S}$ as:
\begin{equation}\label{eq:value_function}\begin{aligned}
    V_i^{\pi,p}(s)=\mathbb{E}_{\pi,p, s_0=s}\bigg[\sum_{t=0}^\infty \gamma^t\bigg(r_i(s_t,\mathbf{a}_t)
    +\frac{1}{\tau_i}D_i\left(p_i,\pi_{-i};s_t\right)-\epsilon_i\nu_i(\pi_i;s_t)\bigg) \bigg]
\end{aligned}\end{equation}
so that $J_i(\pi,p)=\mathbb{E}_{s\sim\rho_0}[V_i^{\pi,p}(s)]$, and the $Q$ function as:
\begin{equation}\label{eq:Q_function}
    Q_i^{\pi,p}(s,\mathbf{a})=r_i(s,\mathbf{a})+\gamma \mathbb{E}_{s'\sim P(\cdot|s,\mathbf{a})}V_i^{\pi,p}(s').
\end{equation}
It is easy to verify that
\begin{equation}\begin{aligned}
    V_i^{\pi,p}(s)=&\pi_i(\cdot|s)^TQ_i^{\pi,p}(s,\cdot)p_i(\cdot|s)
    +\frac{1}{\tau_i}D_i(p_i,\pi_{-i};s)-\epsilon_i\nu_i(\pi_i;s).
\end{aligned}\end{equation}

Finally, we define given policies $\pi_i,p_i$, we define the discounted state visitation probability as
\begin{equation}\begin{aligned}
    d_{s_0}^{\pi_i,p_i}(s)\coloneqq (1-\gamma)\sum_{t=0}^\infty \gamma^t \Pr_{\pi_i,p_{i}}(s_t=s|s_0)
    =(1-\gamma)\sum_{t=0}^\infty \gamma^t e_{s_0}^T(P^{\pi_i,p_i})^t e_s,
\end{aligned}\end{equation}
where $e_s$ denotes the one-hot vector corresponding to state $s$. To learn the RQE of the Markov game, we have to optimize the policies $\pi_i$ to minimize (and $p_i$ to maximize) the risk-adjusted objective \eqref{eq:MARL_risk_adjusted_objective}. Here, we provide some useful tools for optimizing these policies.

\subsection{Details of SRPO}\label{app:srpo_details}
We present the pseudocode for SRPO in \Cref{alg:srpo}. Notice that although the agents and adversaries are maximizing two losses $\mathcal{L}_i^\mathrm{SRPO}$ and $\bar{\mathcal{L}}_i^\mathrm{SRPO}$ respectively, we unify these two losses by considering the joint loss:
\begin{equation}
    \mathcal{L}_{i,\mathrm{joint}}^\mathrm{SRPO}(\theta_i,\theta_{-i},\beta)
    =
    \mathcal{L}_i^\mathrm{CLIP}(\theta_i,\beta)+\frac{1}{\tau_i}\KL(\beta,\theta_{-i})-\epsilon_i H(\theta_i)(o_i^t).
\end{equation}
The update of agent $i$ can be seen as maximizing $\mathcal{L}_{i,\mathrm{joint}}^\mathrm{SRPO}$ as a function of $\theta_i$, and the update of adversary $i$ can be seen as minimizing $\mathcal{L}_{i,\mathrm{joint}}^\mathrm{SRPO}$ as a function of $\beta$.

\begin{algorithm}[htb]
\caption{Strategically Risk-averse Policy Optimization (SRPO) for $2n$-agents}
\label{alg:srpo}
\begin{algorithmic}[1]
    \STATE {\bfseries Initialize:} Policies $\{\pi_{\theta_i}\}_{i=1}^n$, adversaries $\{\pi_{\phi_i}\}_{i=1}^n$, and critics $\{V_{\psi_i}\}_{i=1}^n$.
    \STATE {\bfseries Parameters:} Risk aversion $\{\tau_i\}_{i=1}^n$, bounded rationality $\{\epsilon_i\}_{i=1}^n$, learning rates $\eta_\theta, \eta_\phi, \eta_\psi$, clipping $\delta$.
    \FOR{iteration $k=1, 2, \dots$}
        \FOR{each agent $i \in \{1, \dots, n\}$}
            \STATE {\bfseries // Data Collection}
            \STATE Run joint policy $\boldsymbol{\pi}_{\theta_i,\phi_i} = (\pi_{\theta_i},\pi_{\phi_i})$ of agent $i$ and adversary $i$ in the environment for $T$ steps.
            \STATE Store trajectories $\mathcal{D}_i = \{ (o_i^t, a_i^t, r_i^t) \}_{i=1, t=1}^{n, T}$.
            \STATE Compute advantage estimates $\hat{A}_i^t$ using local critic $V_{\psi_i}$.
            \STATE {\bfseries // Maximin-Optimization}
            \FOR{step in steps}
                \STATE \textit{\# Update Agent by maximizing $\mathcal{L}_i^\mathrm{SRPO}$}
                \STATE $\theta_i \leftarrow \theta_i + \eta_\theta \nabla_{\theta_i} \mathcal{L}_i^\mathrm{SRPO}$
                
                \STATE \textit{\# Update Adversary by maximizing $\bar{\mathcal{L}}_i^\mathrm{SRPO}$}
                \STATE $\phi_i \leftarrow \phi_i + \eta_\phi \nabla_{\phi_i} \bar{\mathcal{L}}_i^\mathrm{SRPO}$
                
                \STATE \textit{\# Update Local Critic}
                \STATE $\psi_i \leftarrow \psi_i - \eta_\psi \nabla_{\psi_i} \mathcal{L}_i^{VF}(\psi_i)$
            \ENDFOR
        \ENDFOR
        \STATE Update old policy parameters: $\theta_{i, \text{old}} \leftarrow \theta_i$ for all $i$.
    \ENDFOR
\end{algorithmic}
\end{algorithm}

\paragraph{Discussion}
The iteration structure of SRPO is very similar to that of IPPO. In the data collection phase, unlike IPPO, agent $i$ computes the advantage estimate based on rollouts played against adversary $i$ instead of other real agents. In the optimization phase, both agent $i$ and their adversary update their policies to maximize the SRPO losses \eqref{eq:srpo_loss_agent} and \eqref{eq:srpo_loss_adversary}. Here, we have introduced an additional critic with parameter $\psi_i$ with the MSE loss function $\mathcal{L}_i^{VF}(\theta_i,\theta_{-i})$, as a standard practice for variance reduction in IPPO, which is only used when computing the advantage estimates.

\subsection{Policy Gradient Theorems}
Policy gradient theorems are the most fundamental theoretical tool for nearly all policy-based algorithms. As the foundation of our framework, we provide the expression of policy gradients for the original agents and the adversaries under the risk-adjusted objectives, respectively. Although our results are the first to present these, similar results for single-agent version can be found in \citep{lan2022policymirrordescentreinforcement}.

For the original agents, the policy gradient theorem has the following form:
\begin{lemma}[Policy Gradient for $\pi_i$]\label{lem:policy_gradient_pi}
    The gradient $\nabla_{\pi_i} V_i^{\pi,p}(s)$ can be written as
    \begin{equation*}\begin{aligned}
        \nabla_{\pi_i(\cdot|x)} V_i^{\pi,p}(s)=\frac{1}{1-\gamma} d_{s}^{\pi_i,p_i}(x)
        \left[\mathbb{E}_{\mathbf{a}_{-i}\sim p_i(\cdot|x)}[Q_i^{\pi,p}(x,\cdot, \mathbf{a}_{-i})] -\epsilon_i\nabla_{\pi_i}\nu_i(\pi_i;x)\right].
    \end{aligned}\end{equation*}
    If $\pi_i$ is parameterized by $\theta_i$, we have
    \begin{equation*}\begin{aligned}
        \nabla_{\theta_i}V_i^{\pi,p}(s)=\frac{1}{1-\gamma}\mathbb{E}_{x\sim d_{s}^{\pi_i,p_i},\mathbf{a}\sim(\pi_i\otimes p_i)(\cdot|x)}
        \Bigg[\nabla_{\theta_i} \log{\pi_i(a_i|x)}
        \left(Q_i^{\pi,p}(x,\mathbf{a})-\epsilon_i\frac{\partial \nu_i(\pi_i;x)}{\partial \pi_i(a_i|x)}\right)\Bigg],
    \end{aligned}\end{equation*}
    where $\mathbf{a}=(a_i,\mathbf{a}_{-i})$.
\end{lemma}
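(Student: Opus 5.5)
The gradient with respect to the table $\pi_i(\cdot|x)$ will come from a standard policy-gradient derivation in which the adversary $p_i$ is held fixed. The first step is a one-step Bellman-type identity. Using the expression for $V_i^{\pi,p}(s)$ in terms of $Q_i^{\pi,p}$, I write
\begin{equation*}
V_i^{\pi,p}(s)=\sum_{a_i}\pi_i(a_i|s)\,\mathbb{E}_{\mathbf{a}_{-i}\sim p_i(\cdot|s)}\bigl[Q_i^{\pi,p}(s,a_i,\mathbf{a}_{-i})\bigr]+\tfrac{1}{\tau_i}D_i(p_i,\pi_{-i};s)-\epsilon_i\nu_i(\pi_i;s).
\end{equation*}
Here $Q_i^{\pi,p}(s,\mathbf{a})=r_i(s,\mathbf{a})+\gamma\,\mathbb{E}_{s'}V_i^{\pi,p}(s')$. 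The term $\tfrac{1}{\tau_i}D_i$ depends only on $p_i$ and $\pi_{-i}$, so it does not depend on $\pi_i$ and drops out of any derivative with respect to $\pi_i$. The reward $r_i$ is also independent of $\pi_i$.

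Next I differentiate with respect to the entry $\pi_i(\cdot|x)$ by the product rule. This produces three pieces. The first is the explicit term $\mathbf{1}\{s=x\}\bigl(\mathbb{E}_{\mathbf{a}_{-i}\sim p_i(\cdot|x)}[Q_i^{\pi,p}(x,\cdot,\mathbf{a}_{-i})]-\epsilon_i\nabla_{\pi_i}\nu_i(\pi_i;x)\bigr)$. The second is the implicit term $\gamma\sum_{s'}P^{\pi_i,p_i}(s'|s)\nabla_{\pi_i(\cdot|x)}V_i^{\pi,p}(s')$, which comes from differentiating $Q$ through $V$.

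Unrolling this linear recursion gives
\begin{equation*}
\nabla_{\pi_i(\cdot|x)}V_i^{\pi,p}(s)=\sum_{t\ge0}\gamma^t\Pr_{\pi_i,p_i}(s_t=x|s_0=s)\,g(x),
\end{equation*}
where $g(x)$ denotes the explicit bracket above. By the definition of $d_s^{\pi_i,p_i}$, this sum equals $\frac{1}{1-\gamma}d_s^{\pi_i,p_i}(x)\,g(x)$, which is the first claim. Unrolling is legitimate because $\gamma<1$ and the state and action spaces are finite, so the Neumann series $\sum_t\gamma^t(P^{\pi_i,p_i})^t$ converges.

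The parameterized form follows by the chain rule, $\nabla_{\theta_i}V=\sum_{x,a_i}\nabla_{\pi_i(a_i|x)}V\cdot\nabla_{\theta_i}\pi_i(a_i|x)$. I then use the log-derivative trick $\nabla_{\theta_i}\pi_i(a_i|x)=\pi_i(a_i|x)\nabla_{\theta_i}\log\pi_i(a_i|x)$. Together with the inner expectation over $\mathbf{a}_{-i}\sim p_i$, this turns the sum into the stated expectation over $x\sim d_s^{\pi_i,p_i}$ and $\mathbf{a}\sim\pi_i\otimes p_i$. The scalar $\epsilon_i\,\partial\nu_i/\partial\pi_i(a_i|x)$ is simply carried through inside that expectation.

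The main obstacle is justifying the differentiation. Specifically, I need to check that $\nu_i$ is differentiable at $\pi_i$, which holds for negative entropy on the interior of the simplex, and that the regularized value stays bounded so the recursion is well defined. I also need to be careful that the regularizer terms accumulated along the trajectory at other states are accounted for. They are, because $\nu_i(\pi_i;s_t)$ is counted in the explicit term at each visit to $x$, and the $D_i$ terms are constant in $\pi_i$.
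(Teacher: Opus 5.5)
Your proposal is correct and follows essentially the same route as the paper: expand $V_i^{\pi,p}(s)$ one step through $Q_i^{\pi,p}$, and differentiate. This gives the explicit term $\mathbb{I}[s=x]\bigl(\mathbb{E}_{\mathbf{a}_{-i}\sim p_i}[Q_i^{\pi,p}(x,\cdot,\mathbf{a}_{-i})]-\epsilon_i\nabla_{\pi_i}\nu_i(\pi_i;x)\bigr)$ plus the recursive term $\gamma\,\mathbb{E}_{s'\sim P^{\pi_i,p_i}(s)}[\nabla_{\pi_i(\cdot|x)}V_i^{\pi,p}(s')]$, which you unroll into the factor $\frac{1}{1-\gamma}d_s^{\pi_i,p_i}(x)$ before getting the parameterized form by the chain rule and the log-derivative trick. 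Your remarks on differentiability of $\nu_i$, finiteness of the regularized value, and convergence of the Neumann series make explicit what the paper's ``$=\dots$'' unrolling step leaves implicit; note also that your ``three pieces'' should read two, since the $D_i$ term contributes nothing.
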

\begin{proof}
    We have
    \begin{equation*}
        \begin{aligned}
            \nabla_{\pi_i(\cdot|x)} V_i^{\pi,p}(s)=&\langle \frac{\partial}{\partial \pi_i(\cdot|x)}Q_i^{\pi,p}(s,\cdot),(\pi_i \otimes p_i)(\cdot|s)\rangle\\
            &+\left(\mathbb{E}_{a_{-i}\sim p_i(\cdot|s)}[Q_i^{\pi,p}(s,\cdot, a_{-i})]-\epsilon_i\nabla_{\pi_i}\nu_i(\pi_i;s)\right)\mathbb{I}[s=x]\\
            =&\gamma \mathbb{E}_{s'\sim P^{\pi_i,p_i}(s)}[\nabla_{\pi_i} V_i^{\pi,p}(s')]+\left(\mathbb{E}_{a_{-i}\sim p_{i}(\cdot|s)}[Q_i^{\pi,p}(s,\cdot, a_{-i})]-\epsilon_i\nabla_{\pi_i}\nu_i(\pi_i;s)\right)\mathbb{I}[s=x]\\
            =&\dots\\
            =&\frac{1}{1-\gamma} d_{s}^{\pi_i,p_i}(x)\left[\mathbb{E}_{a_{-i}\sim p_i(\cdot|x)}[Q_i^{\pi,p}(x,\cdot, a_{-i})]-\epsilon_i\nabla_{\pi_i}\nu_i(\pi_i;x)\right].
        \end{aligned}
    \end{equation*}
    Similarly, for the parameterized case, we have
    \begin{equation*}
        \begin{aligned}
            \frac{\partial}{\partial \pi_i(a_i|x)} &V_i^{\pi,p}(s)\\
            =&\left\langle \frac{\partial}{\partial \pi_i(a_i|x)}Q_i^{\pi,p}(s,\cdot),(\pi_i \otimes p_i)(\cdot|s)\right\rangle\\
            &+\left(\mathbb{E}_{a_{-i}\sim p_{i}(\cdot|s)}[Q_i^{\pi,p}(s,a_i, a_{-i})]-\epsilon_i\frac{\partial}{\partial \pi_i(a_i|x)}\nu_i(\pi_i;s)\right)\mathbb{I}[s=x]\\
            =&\gamma \mathbb{E}_{s'\sim P^{\pi_i,p_i}(s)}[\frac{\partial}{\partial \pi_i(a_i|x)} V_i^{\pi,p}(s')]+\left(\mathbb{E}_{a_{-i}\sim p_i(\cdot|s)}[Q_i^{\pi,p}(s,a_i, a_{-i})]-\epsilon_i\frac{\partial}{\partial \pi_i(a_i|x)}\nu_i(\pi_i;s)\right)\mathbb{I}[s=x]\\
            =&\dots\\
            =&\frac{1}{1-\gamma} d_{s}^{\pi_i,p_i}(x)\left[\mathbb{E}_{a_{-i}\sim p_{i}(\cdot|x)}[Q_i^{\pi,p}(x,a_i, a_{-i})]-\epsilon_i\frac{\partial}{\partial \pi_i(a_i|x)}\nu_i(\pi_i;x)\right],
        \end{aligned}
    \end{equation*}
    therefore,
    \begin{equation*}
        \begin{aligned}
            \nabla_{\theta_i} V_i^{\pi,p}(s)=&\sum_{x,a_i} \frac{\partial}{\partial \pi_i(a_i|x)} V_i^{\pi,p}(s) \nabla_{\theta_i} \pi_i(a_i|x)\\
            =&\frac{1}{1-\gamma}\sum_{x,a_i} d_{s}^{\pi_i,p_i}(x)\left[\mathbb{E}_{a_{-i}\sim p_i(\cdot|x)}[Q_i^{\pi,p}(x,a_i, a_{-i})]-\epsilon_i\frac{\partial}{\partial \pi_i(a_i|x)}\nu_i(\pi_i;x)\right]\nabla_{\theta_i} \pi_i(a_i|x)\\
            =&\frac{1}{1-\gamma}\mathbb{E}_{x\sim d_{s}^{\pi_i,p_i}}\mathbb{E}_{a\sim(\pi_i\otimes p_i)(\cdot|x)}\left[\nabla_{\theta_i} \log{\pi_i(a_i|x)}\left(Q_i^{\pi,p}(x,a)-\epsilon_i\frac{\partial}{\partial \pi_i(a_i|x)}\nu_i(\pi_i;x)\right)\right].
        \end{aligned}
    \end{equation*}
    This concludes the proof. 
\end{proof}

Similarly, we have the policy gradient for $p_i$ as follows:
\begin{lemma}[Policy gradient for $p_i$]\label{lem:policy_gradient_p}
    The gradient $\nabla_{p_i} V_i^{\pi,p}(s)$ can be written as
    \begin{equation*}
    \begin{aligned}
        \nabla_{p_i(\cdot|x)}V_i^{\pi,p}(s)=\frac{1}{1-\gamma}d_s^{\pi_i,p_i}(x)
        \left[\mathbb{E}_{a_i\sim \pi_i(\cdot|x)}[Q_i^{\pi,p}(x,a_i,\cdot)]+\frac{1}{\tau_i}\nabla_{p_i}D_i(p_i,\pi_{-i};x)\right]
    \end{aligned}
    \end{equation*}
    and if $p_i$ is parameterized by $\bar{\theta}_i$, we have
    \begin{equation*}
        \begin{aligned}
            \nabla_{\bar{\theta}_i}V_i^{\pi,p}(s)=\frac{1}{1-\gamma} \mathbb{E}_{x\sim d_s^{\pi_i,p_i},\mathbf{a}\sim(\pi_i\otimes p_i)(\cdot|x)}
            \left[\nabla_{\bar{\theta}_i} \log p_i(\mathbf{a}_{-i}|x)\left(Q_i^{\pi,p}(x,\mathbf{a})+\frac{1}{\tau_i}\frac{\partial D_i(p_i,\pi_{-i};x)}{\partial p_i(\mathbf{a}_{-i}|x)} \right) \right],
        \end{aligned}
    \end{equation*}
    where $\mathbf{a}=(a_i,\mathbf{a}_{-i})$.
\end{lemma}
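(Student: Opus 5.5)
The plan mirrors the proof of \cref{lem:policy_gradient_pi}, with the roles of $\pi_i$ and $p_i$ swapped. Fix a state $x$ and differentiate the Bellman-type identity
\begin{equation*}
V_i^{\pi,p}(s)=\pi_i(\cdot|s)^\top Q_i^{\pi,p}(s,\cdot)\,p_i(\cdot|s)+\frac{1}{\tau_i}D_i(p_i,\pi_{-i};s)-\epsilon_i\nu_i(\pi_i;s)
\end{equation*}
with respect to the block $p_i(\cdot|x)$. Two kinds of terms arise.

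The first is the \emph{direct} term, present only when $s=x$. It equals $\mathbb{E}_{a_i\sim\pi_i(\cdot|x)}[Q_i^{\pi,p}(x,a_i,\cdot)]+\frac{1}{\tau_i}\nabla_{p_i}D_i(p_i,\pi_{-i};x)$. Here $Q$ depends on $p_i$ only through continuation values, and $\nu_i(\pi_i;\cdot)$ does not depend on $p_i$ at all. Also, $\pi_{-i}$ is the fixed reference measure in $D_i$, so only the first argument of $D_i$ is differentiated.

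The second is the \emph{indirect} term $\langle\partial_{p_i(\cdot|x)}Q_i^{\pi,p}(s,\cdot),(\pi_i\otimes p_i)(\cdot|s)\rangle$. By \eqref{eq:Q_function}, and since $r_i$ does not depend on the policies, this equals $\gamma\,\mathbb{E}_{s'\sim P^{\pi_i,p_i}(s)}[\nabla_{p_i(\cdot|x)}V_i^{\pi,p}(s')]$.

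Together these give a linear recursion $g(s)=\gamma\sum_{s'}P^{\pi_i,p_i}(s,s')g(s')+b(x)\mathbb{I}[s=x]$, where $g(s)\coloneqq\nabla_{p_i(\cdot|x)}V_i^{\pi,p}(s)$ and $b(x)$ is the direct term. Unrolling it, justified because $\gamma<1$ and the state space is finite so the Neumann series converges, gives $g(s)=\sum_t\gamma^t e_s^\top(P^{\pi_i,p_i})^te_x\,b(x)=\frac{1}{1-\gamma}d_s^{\pi_i,p_i}(x)b(x)$. This is the first claim.

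For the parameterized claim, apply the chain rule $\nabla_{\bar\theta_i}V=\sum_{x,\mathbf{a}_{-i}}\frac{\partial V}{\partial p_i(\mathbf{a}_{-i}|x)}\nabla_{\bar\theta_i}p_i(\mathbf{a}_{-i}|x)$. Use the coordinatewise version of the first claim, rewrite $\nabla p_i=p_i\nabla\log p_i$, and reinsert $\mathbb{E}_{a_i\sim\pi_i(\cdot|x)}$ inside the sum. This produces the expectation over $x\sim d_s^{\pi_i,p_i}$ and $\mathbf{a}\sim(\pi_i\otimes p_i)(\cdot|x)$. The KL partial-derivative term does not depend on $a_i$, so it passes through the $a_i$-expectation unchanged.

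The main obstacle is bookkeeping in the differentiation of $Q$. One must check that $Q_i^{\pi,p}$ depends on $p_i(\cdot|x)$ only through $V$ at successor states, and that the regularizer $D_i$ at states $s\neq x$ contributes nothing directly. The regularizer terms at other states are accumulated inside $V(s')$, so they are correctly captured by the recursion; this is where care is needed to avoid double counting. A minor point is that $p_i(\cdot|x)$ lives on the simplex, so the gradient is understood as the unconstrained partial derivative with respect to the coordinates, as in \cref{lem:policy_gradient_pi}.
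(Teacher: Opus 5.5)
Your proposal is correct and follows essentially the same argument as the paper. You differentiate the one-step Bellman identity, split the result into the direct term at $s=x$ and the $\gamma$-discounted successor term, unroll the recursion into $\frac{1}{1-\gamma}d_s^{\pi_i,p_i}(x)$, and then use the chain rule with the log-derivative trick for the parameterized case; your explicit Neumann-series justification of the unrolling is slightly more careful than the paper's elided ``$\dots$'' step.
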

\begin{proof}
We have
\begin{equation*}
    \begin{aligned}
        \nabla_{p_i(\cdot|x)}V_i^{\pi,p}(s)=&\nabla_{p_i(\cdot|x)}(\langle Q_i^{\pi,p}(s,\cdot), (\pi_i\otimes p_i)(\cdot|s) \rangle)+\frac{1}{\tau_i}\nabla_{p_i}D_i(p_i,\pi_{-i};s)\mathbb{I}[s=x]\\
        =&\langle \frac{\partial}{\partial p_i(\cdot|x)} Q_i^{\pi,p}(s,\cdot), (\pi_i\otimes p_i)(\cdot|s) \rangle\\
        &+ \left(\mathbb{E}_{a_i\sim \pi_i(\cdot|s)}[Q_i^{\pi,p}(s,a_i,\cdot)]+\frac{1}{\tau_i}\nabla_{p_i}D_i(p_i,\pi_{-i};s)\right)\mathbb{I}[s=x]\\
        =&\gamma \mathbb{E}_{s'\sim P^{\pi_i,p_i}(s)}[\nabla_{p_i(\cdot|x)} V_i^{\pi,p}(s')]+\left(\mathbb{E}_{a_i\sim \pi_i(\cdot|s)}[Q_i^{\pi,p}(s,a_i,\cdot)]+\frac{1}{\tau_i}\nabla_{p_i}D_i(p_i,\pi_{-i};s)\right)\mathbb{I}[s=x]\\
        =& \dots\\
        =&\frac{1}{1-\gamma}d_s^{\pi_i,p_i}(x)\left[\mathbb{E}_{a_i\sim \pi_i(\cdot|x)}[Q_i^{\pi,p}(x,a_i,\cdot)]+\frac{1}{\tau_i}\nabla_{p_i}D_i(p_i,\pi_{-i};x)\right],
    \end{aligned}
\end{equation*}
and similarly for the parameterized case,
\begin{equation*}\begin{aligned}
    &\nabla_{\bar{\theta}_i}V_i^{\pi,p}(s)\\
    =&\frac{1}{1-\gamma} \mathbb{E}_{x\sim d_s^{\pi_i,p_i}}\mathbb{E}_{a\sim(\pi_i\otimes p_i)(\cdot|x)}\left[\nabla_{\bar{\theta}_i} \log p_i(a_{-i}|x)\left(Q_i^{\pi,p}(x,a)+\frac{1}{\tau_i}\frac{\partial}{\partial p_i(a_{-i}|x)}D_i(p_i,\pi_{-i};x) \right) \right].
\end{aligned}\end{equation*}
This concludes the proof. 
\end{proof}

\subsection{Performance Difference Lemmas}
Another crucial tool for quantifying the difference in value functions (hence expected returns) between different policies through the advantage function is the \textit{performance difference lemma} (PDL). For our 4-player risk-adjusted game, we state the PDLs for original agents and adversaries below:
\begin{lemma}[Performance Difference Lemma for $\pi_i$]\label{lem:PDL_pi}
    For two policies $\pi_i$ and $\pi_i'$ of player $i$, we have
    \begin{equation}\begin{aligned}
        V_i^{\pi_i,z}(s)-V_i^{\pi_i',z}(s)=\frac{1}{1-\gamma}
        \mathbb{E}_{s'\sim d_s^{\pi_i,p_i},\mathbf{a}\sim (\pi_i\otimes p_i)(\cdot|s')}\left[A_i^{\pi_i',z}(s',\mathbf{a})-\epsilon_i\nu_i(\pi_i;s') \right]
    \end{aligned}\end{equation}
    where the advantage function $A_i$ is defined as:
    \begin{equation*}
        A_i^{\pi_i,z}(s,\mathbf{a})=Q_i^{\pi_i,z}(s,\mathbf{a})-V_i^{\pi_i,z}(s)+\frac{1}{\tau_i}D_i(z;s).
    \end{equation*}
    Here, $z$ is the shorthand notation of $z=(\pi_{-i},p_i)$.
\end{lemma}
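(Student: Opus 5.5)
Fix $z=(\pi_{-i},p_i)$, so the only thing that changes between the two sides is player $i$'s own policy. The plan is the standard Kakade--Langford telescoping argument, run on the regularized value function~\eqref{eq:value_function}. The one place care is needed is the bookkeeping of the per-state regularizers. The KL term $\frac{1}{\tau_i}D_i(p_i,\pi_{-i};s)$ depends only on $z$, so it is the same under both policies. The entropy term $-\epsilon_i\nu_i(\cdot;s)$ depends on player $i$'s policy, so it differs.

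\textbf{Step 1 (one-step expansion).} For the comparison policy $\pi_i'$, write a Bellman identity for $V_i^{\pi_i',z}(s)$ in terms of $Q_i^{\pi_i',z}$ from~\eqref{eq:Q_function}, keeping the per-state term $\frac{1}{\tau_i}D_i(z;s)-\epsilon_i\nu_i(\pi_i';s)$. The convention of which per-state terms sit inside $Q$ versus $V$ is exactly what fixes the extra $+\frac{1}{\tau_i}D_i(z;s)$ in the paper's definition of $A_i$. I will first pin down the identity $V=\langle \pi_i\otimes p_i, Q\rangle+\frac{1}{\tau_i}D_i-\epsilon_i\nu_i$ stated just before the lemma, and derive everything else consistently from it.

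\textbf{Step 2 (telescoping).} Start from $V_i^{\pi_i,z}(s)=\mathbb{E}_{\pi_i,p_i}\big[\sum_t\gamma^t\big(r_i(s_t,\mathbf a_t)+\frac{1}{\tau_i}D_i(z;s_t)-\epsilon_i\nu_i(\pi_i;s_t)\big)\big]$, with trajectories generated by $\pi_i\otimes p_i$. Add and subtract $\gamma^{t+1}V_i^{\pi_i',z}(s_{t+1})$ inside the sum. Then use $\mathbb{E}[r_i(s_t,\mathbf a_t)+\gamma V_i^{\pi_i',z}(s_{t+1})\mid s_t,\mathbf a_t]=Q_i^{\pi_i',z}(s_t,\mathbf a_t)$. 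The $V$-terms telescope, leaving
\[
V_i^{\pi_i,z}(s)-V_i^{\pi_i',z}(s)=\mathbb{E}_{\pi_i,p_i}\Big[\sum_t\gamma^t\big(Q_i^{\pi_i',z}(s_t,\mathbf a_t)-V_i^{\pi_i',z}(s_t)+\tfrac{1}{\tau_i}D_i(z;s_t)-\epsilon_i\nu_i(\pi_i;s_t)\big)\Big].
\]
Boundedness of rewards and regularizers, which holds on the interior of the simplex, together with $\gamma<1$ justifies exchanging sum and expectation and dropping the tail term $\gamma^T V(s_T)\to0$.

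\textbf{Step 3 (visitation measure).} Rewrite $\sum_t\gamma^t\Pr(s_t=s')$ as $\frac{1}{1-\gamma}d_s^{\pi_i,p_i}(s')$. Then identify the integrand as $A_i^{\pi_i',z}(s',\mathbf a)-\epsilon_i\nu_i(\pi_i;s')$ with $\mathbf a\sim(\pi_i\otimes p_i)(\cdot|s')$, which is the claimed formula.

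The main obstacle is Step~1/3 consistency: checking that the regularizer terms land exactly as in the stated advantage. Specifically, $D_i$ should appear inside $A_i$, and only the new policy's $\nu_i(\pi_i;\cdot)$ should appear outside it, with the old policy's entropy absorbed into $V_i^{\pi_i',z}$. If the stated convention does not match on the first pass, I will adjust by moving the $D_i$ term between $Q$ and $A$. Either placement is legitimate because $D_i(z;\cdot)$ is independent of player $i$'s policy.
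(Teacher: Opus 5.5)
Your proposal is correct and is essentially the paper's argument. The paper adds and subtracts the one-step lookahead $\mathbb{E}_{\pi_i\otimes p_i}[r_i+\frac{1}{\tau_i}D_i-\epsilon_i\nu_i(\pi_i;s)+\gamma V_i^{\pi_i',z}(s')]$ and unrolls the recursion for $V_i^{\pi_i',z}-V_i^{\pi_i,z}$, which gives the negated identity; this is the same computation as your telescoping along $\pi_i\otimes p_i$-trajectories. Your Step 2 already lands exactly on the stated $A_i^{\pi_i',z}$ under the paper's $Q$ convention, so the fallback of moving $D_i$ between $Q$ and $A$ is not needed, and the only real regularity condition is $D_i(z;s)<\infty$ (e.g.\ $p_i(\cdot|s)\ll\pi_{-i}(\cdot|s)$) rather than interiority of all policies.
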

\begin{proof}
    We can write the difference in value function as
    \begin{align*}\allowdisplaybreaks
            &V_i^{\pi_i',z}(s)-V_i^{\pi_i,z}(s)\\
            =&\mathbb{E}_{\pi_i',z,s_0=s}\left[\sum_{t=0}^\infty \gamma^t\bigg(r_i(s_t,\mathbf{a}_t)+\frac{1}{\tau_i}D_i\left(z;s_t\right)-\epsilon_i\nu_i(\pi_i';s_t)\bigg) \right]-V_i^{\pi_i,z}(s)\\
            =&V_i^{\pi_i',z}(s)-\mathbb{E}_{(a_{i,0},\mathbf{a}_{-i,0})\sim (\pi_i\otimes p_i)(\cdot|s)}\left[r_i(s,\mathbf{a}_0)+\frac{1}{\tau_i}D_i(z;s)-\epsilon_i\nu_i(\pi_i;s)+\gamma \mathbb{E}_{s'\sim P(\cdot|s,\mathbf{a}_0)}V_i^{\pi_i',z}(s')\right]\\
            &+\mathbb{E}_{(a_{i,0},\mathbf{a}_{-i,0})\sim (\pi_i\otimes p_i)(\cdot|s)}\left[r_i(s,\mathbf{a}_0)+\frac{1}{\tau_i}D_i(z;s)-\epsilon_i\nu_i(\pi_i;s)+\gamma \mathbb{E}_{s'\sim P(\cdot|s,\mathbf{a}_0)}V_i^{\pi_i',z}(s')\right]-V_i^{\pi_i,z}(s)\\
            =&\mathbb{E}_{(a_{i,0},\mathbf{a}_{-i,0})\sim (\pi_i\otimes p_i)(\cdot|s)}\left[V_i^{\pi_i',z}(s)-Q_i^{\pi_i',z}(s,\mathbf{a}_0)-\frac{1}{\tau_i}D_i(z;s)+\epsilon_i\nu_i(\pi_i;s) \right]\\
            &+\gamma\mathbb{E}_{(a_{i,0},\mathbf{a}_{-i,0})\sim (\pi_i\otimes p_i)(\cdot|s)}\mathbb{E}_{s'\sim P(s,\mathbf{a}_0)}\left[V_i^{\pi_i',z}(s')-V_i^{\pi_i,z}(s')\right]\\
            =&\cdots\\
            =&\frac{1}{1-\gamma} \mathbb{E}_{s'\sim d_s^{\pi_i,p_i}}\mathbb{E}_{\mathbf{a}\sim (\pi_i\otimes p_i)(\cdot|s')}\left[V_i^{\pi_i',z}(s')-Q_i^{\pi_i',z}(s',\mathbf{a})-\frac{1}{\tau_i}D_i(z;s')+\epsilon_i\nu_i(\pi_i;s') \right]
    \end{align*}
    This concludes the proof. 
\end{proof}

\begin{lemma}[Performance Difference Lemma for $p_i$]\label{lem:PDL_p}
    For two policies $p_i$ and $p_i'$ of adversary $i$, we have:
    \begin{equation*}\begin{aligned}
        V_i^{p_i,\pi}(s)-V_i^{p_i',\pi}(s)=\frac{1}{1-\gamma}
        \mathbb{E}_{s'\sim d_s^{\pi_i,p_i}, \mathbf{a}\sim (\pi_i\otimes p_i)(\cdot|s')}\left[A_i^{p_i',\pi}(s',\mathbf{a})+\frac{1}{\tau_i}D_i(p_i,\pi_{-i};s')\right]
    \end{aligned}
    \end{equation*}
    where
    \begin{equation*}
        A_i^{p_i,\pi}(s,\mathbf{a})=Q_i^{p_i,\pi}(s,\mathbf{a})-V_i^{p_i,\pi}(s)-\epsilon_i\nu_i(\pi_i;s).
    \end{equation*}
\end{lemma}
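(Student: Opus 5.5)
The plan mirrors the proof of \cref{lem:PDL_pi}, now perturbing the adversary's policy while holding the agent's policy $\pi_i$ (and $\pi_{-i}$, which enters only through $D_i$) fixed. Write $V_i^{p_i,\pi}$ for the value under adversary policy $p_i$. Start from the trajectory representation~\eqref{eq:value_function} of $V_i^{p_i,\pi}(s)$. Add and subtract the one-step Bellman backup that uses $V_i^{p_i',\pi}$ as continuation value. Taking the first-step expectation under $(\pi_i\otimes p_i)(\cdot|s)$ gives
\begin{align*}
V_i^{p_i,\pi}(s)-V_i^{p_i',\pi}(s)
=&\ \mathbb{E}_{\mathbf{a}_0\sim(\pi_i\otimes p_i)(\cdot|s)}\Big[r_i(s,\mathbf{a}_0)+\gamma\mathbb{E}_{s'\sim P(\cdot|s,\mathbf{a}_0)}V_i^{p_i',\pi}(s')-V_i^{p_i',\pi}(s)\\
&\qquad+\tfrac{1}{\tau_i}D_i(p_i,\pi_{-i};s)-\epsilon_i\nu_i(\pi_i;s)\Big]\\
&+\gamma\,\mathbb{E}_{\mathbf{a}_0\sim(\pi_i\otimes p_i)(\cdot|s)}\mathbb{E}_{s'\sim P(\cdot|s,\mathbf{a}_0)}\big[V_i^{p_i,\pi}(s')-V_i^{p_i',\pi}(s')\big].
\end{align*}

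By~\eqref{eq:Q_function}, the first three terms inside the first bracket combine to $Q_i^{p_i',\pi}(s,\mathbf{a}_0)-V_i^{p_i',\pi}(s)$. Adding the entropy term $-\epsilon_i\nu_i(\pi_i;s)$, this is exactly the advantage $A_i^{p_i',\pi}(s,\mathbf{a}_0)$ as defined in the statement. The regularizer in $A_i$ is $\nu_i(\pi_i;\cdot)$ rather than $D_i$ because the agent's policy is common to both sides. What remains outside the advantage is the divergence term $\frac{1}{\tau_i}D_i(p_i,\pi_{-i};s)$. It is evaluated at the \emph{new} adversary $p_i$, since the rollout is generated by $p_i$. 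The second line is a discounted copy of the same difference at the next state.

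Next, unroll this recursion along the trajectory generated by $(\pi_i\otimes p_i)$. Each unrolling contributes a term at time $t$ weighted by $\gamma^t\Pr_{\pi_i,p_i}(s_t=s'\mid s_0=s)$. Summing the geometric series, which converges because $\gamma<1$ and rewards and regularizers are bounded on finite spaces, and using the definition of $d_s^{\pi_i,p_i}$ produces the factor $\frac{1}{1-\gamma}$ and the expectation over $s'\sim d_s^{\pi_i,p_i}$, $\mathbf{a}\sim(\pi_i\otimes p_i)(\cdot|s')$.

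The main point of care is bookkeeping rather than difficulty. In the agent's PDL, the regularizer tied to the fixed object ($D_i$) sits inside the advantage and the varying one ($\nu_i$) sits outside. Here the roles are swapped. I expect the only real obstacle to be checking these sign and placement conventions so that the displayed formula matches exactly, including the orientation of the difference, $V^{p_i}-V^{p_i'}$ with rollouts under $p_i$.
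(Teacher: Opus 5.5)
Your proposal is correct and follows the paper's proof essentially step for step. You add and subtract the one-step backup of $V_i^{p_i,\pi}(s)$ with continuation value $V_i^{p_i',\pi}$, identify $Q_i^{p_i',\pi}-V_i^{p_i',\pi}-\epsilon_i\nu_i(\pi_i;\cdot)$ as $A_i^{p_i',\pi}$ with $\frac{1}{\tau_i}D_i(p_i,\pi_{-i};\cdot)$ left over, and unroll along $(\pi_i\otimes p_i)$ to obtain the $\frac{1}{1-\gamma}d_s^{\pi_i,p_i}$ weighting. One small correction to your convergence remark: the KL regularizer is not automatically bounded on finite spaces (it is $+\infty$ if $\pi_{-i}$ lacks full support), so the vanishing tail in the unrolling relies on the implicit assumption, tacit in the paper as well, that all value functions involved are finite.
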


\begin{proof}\allowdisplaybreaks
    We can write the value function difference as
    \begin{align*}
            &V_i^{p_i,\pi}(s)-V_i^{p_i',\pi}(s)\\
            =&V_i^{p_i,\pi}(s)-\mathbb{E}_{(a_{i,0},\mathbf{a}_{-i,0})\sim (\pi_i\otimes p_i)(\cdot|s)}\left[r_i(s,\mathbf{a}_0)+\frac{1}{\tau_i}D_i(p_i,\pi_{-i};s)-\epsilon_i\nu_i(\pi_i;s)+\gamma \mathbb{E}_{s'\sim P(\cdot|s,\mathbf{a}_0)}V_i^{p_i',\pi}(s')\right]\\
            &+\mathbb{E}_{(a_{i,0},\mathbf{a}_{-i,0})\sim (\pi_i\otimes p_i)(\cdot|s)}\left[r_i(s,\mathbf{a}_0)+\frac{1}{\tau_i}D_i(p_i, \pi_{-i};s)-\epsilon_i\nu_i(\pi_i;s)+\gamma \mathbb{E}_{s'\sim P(\cdot|s,\mathbf{a}_0)}V_i^{p_i',\pi}(s')\right]-V_i^{p_i',\pi}(s)\\
            =&\gamma \mathbb{E}_{(a_{i,0},\mathbf{a}_{-i,0})\sim (\pi_i\otimes p_i)(\cdot|s)}\mathbb{E}_{s'\sim P(\cdot|s,\mathbf{a}_0)}[V_i^{p_i,\pi}(s')-V_i^{p_i',\pi}(s')]\\
            &+\mathbb{E}_{(a_{i,0},\mathbf{a}_{-i,0})\sim (\pi_i\otimes p_i)(\cdot|s)}\left[Q_i^{p_i',\pi}(s,\mathbf{a}_0)+\frac{1}{\tau_i}D_i(p_i, \pi_{-i};s)-\epsilon_i\nu_i(\pi_i;s)-V_i^{p_i',\pi}(s) \right]\\
            =&\dots\\
            =&\frac{1}{1-\gamma}\mathbb{E}_{s'\sim d_s^{\pi_i,p_i}}\mathbb{E}_{\mathbf{a}\sim (\pi_i\otimes p_i)(\cdot|s')}\left[Q_i^{p_i',\pi}(s',\mathbf{a}_0)+\frac{1}{\tau_i}D_i(p_i,\pi_{-i};s')-\epsilon_i\nu_i(\pi_i;s')-V_i^{p_i',\pi}(s')\right].
    \end{align*}
    This concludes the proof. 
\end{proof}

\subsection{Motivating the SRPO Objective}
Having stated the policy gradient theorems and PDLs, we now motivate our SRPO objective following the same rationale as that in TRPO \citep{schulman2015trust}.
Let $z=(\pi^0,p^0)$ be the joint policy at the current timestep, by \Cref{lem:PDL_pi}, consider policy update for $\pi_i$, we can write the value function for any policy $\pi_i$ as
\begin{equation*}\begin{aligned}
    J_i^{\pi_i,\pi_{-i}^0,p_i^0}=J_i^{\pi^0,p^0}+\frac{1}{1-\gamma}
    \mathbb{E}_{s\sim d^{\pi_i,p_i^0},\mathbf{a}\sim (\pi_i\otimes p_i^0)(\cdot|s)}\left[A_i^{\pi_i^0,\pi_{-i}^0,p_i^0}(s,\mathbf{a})-\epsilon_i\nu_i(\pi_i;s) \right],
\end{aligned}\end{equation*}
where $d^{\pi_i,p_i^0}$ is the state distribution for the initial state distribution $\rho_0$.
When $\pi_i$ is constrained to be close to $\pi_i^0$, we use the following surrogate objective first proposed in TRPO \citep{schulman2015trust} to approximate the expectation term by replacing the sample distribution from $d^{\pi_i,p_i^0}$ to $d^{\pi_i^0,p_i^0}$:
\begin{equation}\label{eq:surrogate_objective_pi}\begin{aligned}
    \mathbb{E}_{s\sim d^{\pi_i^0,p_i^0},\mathbf{a}\sim (\pi_i^0\otimes p_i^0)(\cdot|s)}\left[\frac{\pi_i(a_i|s)}{\pi_i^0(a_i|s)}A_i^{\pi^0,p_i^0}(s,\mathbf{a})-\epsilon_i\nu_i(\pi_i;s) \right],
\end{aligned}\end{equation}
where the approximation $d^{\pi_i^0,p_i^0}\approx d^{\pi_i,p_i^0}$ holds when $\pi^0$ stays close to $\pi$, and importance sampling through the ratio $\frac{\pi_i(a_i|s)}{\pi_i^0(a_i|s)}$ on $a_i$. Similarly by \Cref{lem:PDL_p}, we have the following surrogate objective for $p_i$ as follows:
\begin{equation}\label{eq:surrogate_objective_p}\begin{aligned}
    \mathbb{E}_{s\sim d^{\pi_i^0,p_i^0}, \mathbf{a}\sim (\pi_i^0\otimes p_i^0)(\cdot|s)}\Bigg[&\frac{p_i(\mathbf{a}_{-i}|s)}{p_i^0(\mathbf{a}_{-i}|s)}A_i^{p_i^0,\pi^0}(s,\mathbf{a})+\frac{1}{\tau_i}D_i(p_i,\pi_{-i};s)\Bigg].
\end{aligned}\end{equation}
The original TRPO objective suggests optimizing \eqref{eq:surrogate_objective_pi} and \eqref{eq:surrogate_objective_p} subject to the constraints 
$$\mathbb{E}_{d^{\pi_i^0,p_i^0}}[\KL(\pi_i(\cdot|s)\|\pi_i^0(\cdot|s))]\leq \delta,$$
and $$\mathbb{E}_{d^{\pi_i^0,p_i^0}}[\KL(p_i(\cdot|s)\|p_i^0(\cdot|s))]\leq \delta,$$respectively. A later adaptation PPO \citep{schulman2017proximalpolicyoptimizationalgorithms} replaces these hard constraints with the clipped surrogate objective. Following this adaptation and replacing the policy to condition on observations instead of the global states, we obtain our SRPO losses \eqref{eq:srpo_loss_agent} and \eqref{eq:srpo_loss_adversary} respectively.
\section{Experimental Details}\label{sec:experimental-details}
\subsection{Overall Setup}
Across all experiments, we compare SRPO against IPPO under a unified training and evaluation protocol. Each method is trained from scratch using multiple random seeds, and performance is reported as episodic return averaged over rollouts. 
\paragraph{Training.} Agents are trained for a fixed interaction budget per environment, with periodic evaluation during training. Unless otherwise specified, SRPO and IPPO agents share the same architecture, optimizer settings, amount of interaction with the environment, and entropy regularization, differing only in the inclusion of the strategic risk-averse objective. Given the same total amount of interactions, SRPO and IPPO take roughly the same time to train in all environments we have.
\paragraph{Cross-play evaluation.} To assess unseen partner generalization, we adopt a \textbf{cross-play} protocol. For each environment, we collect a population of independently trained agents and evaluate all pairwise combinations without further learning. Cross-play performance is reported as a matrix, where each entry corresponds to the average episodic return of a fixed agent pair evaluated over multiple episodes. This protocol measures zero-shot coordination with previously unseen partners.
\subsection{Overcooked Gridworld}
\paragraph{Environments.}We implemented a revised version of Overcooked AI, shown in \cref{fig:overcooked-env}. The task is defined on a $5\times 5$ grid with two agents acting simultaneously. Agents start from one of two symmetric initial configurations in the top left and must cooperatively pick up onions from two fixed sources and deliver them to a central pot. Each agent selects from five discrete actions (up, down, left, right, stay); non-stay actions incur a private movement cost of 0.2. Agent updates are applied sequentially in random order each step, with collisions blocked and penalized by 2.0. An agent can carry at most one onion; stepping onto an available onion grants a shared reward of +1 and removes the onion, while delivering an onion by attempting to move into the (solid) pot yields a shared reward of +10. Each onion independently respawns with probability 0.2 per step. Observations are fully shared and consist of both agents' positions, carry status, and onion availability. In this setup, rewards combine private movement penalties with shared team rewards, inducing cooperative behavior under individual costs.
% \begin{figure}
% \includegraphics[width=0.3\linewidth]{figures/overcooked.png}
%     \caption{The Overcooked AI environment.}
%     \label{fig:overcooked}
%     \hfill
% \includegraphics[width=0.3\linewidth]{sources/tag.png}
%     \caption{The Tag environment.}
%     \label{fig:tag}
% \end{figure}

\begin{figure}[t]
\centering
\begin{minipage}{0.4\linewidth}
    \centering
    \includegraphics[width=\linewidth]{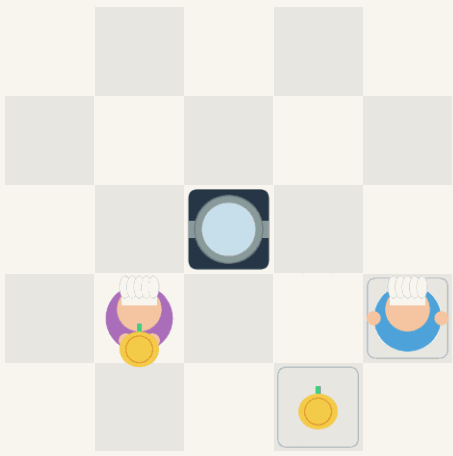}
    \caption{The Overcooked Gridworld environment.}
    \label{fig:overcooked-env}
\end{minipage}
\qquad\qquad 
\begin{minipage}{0.4\linewidth}
    \centering
    \includegraphics[width=\linewidth]{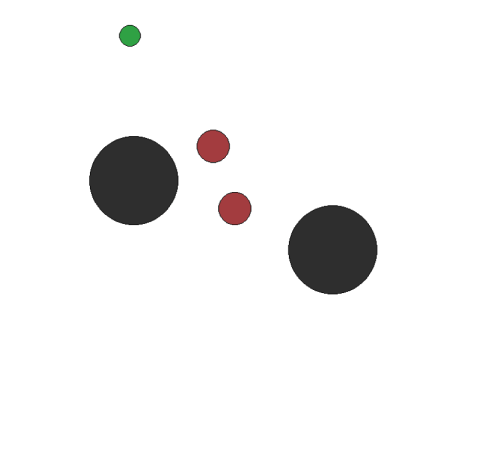}
    \caption{The Tag environment.}
    \label{fig:tag}
\end{minipage}
\end{figure}
\paragraph{Training and evaluation.} Each agent is trained for approximately $2\times 10^6$ environment interactions. Training statistics are recorded every 10240 interactions, where each evaluation point reports the average return over 5 rollouts of length 128. We perform 30 independent runs for each method. IPPO agents use an entropy coefficient of $\epsilon=0.1$, while SRPO agents use $\tau = 10$ with the same entropy coefficient. For the main experiments, we construct a $60 \times 60$ cross-play matrix consisting of 30 IPPO agents and 30 SRPO agents. Each entry is averaged over 100 evaluation episodes of length 100. We report cross-play results for both the training environment and the held-out test environment. The results are shown in \cref{fig:overcooked}.

\subsection{Tag}\label{app:tag}
\paragraph{Environment.} We evaluate on the PettingZoo Multi-Agent Particle Environment (MPE) simple\_tag\_v3 \citep{terry2021pettingzoogymmultiagentreinforcement}, configured with 1 runner (prey), 2 chasers, and 2 static obstacles, using discrete actions and a horizon of 100 cycles per episode. The environment is shown in \cref{fig:tag}. Rewards are taken directly from the environment and scaled by 0.1, and an episode terminates when any agent is terminated or truncated. 
\paragraph{Training and evaluation.} Each agent is trained for approximately $3\times 10^{7}$ environment interactions. Training performance is recorded every $40960$ interactions, where each evaluation point reports the average episodic return over 64 rollouts of length 100. We perform 30 independent runs for each method. IPPO agents use an entropy coefficient of $\epsilon=0.01$, while SRPO agents use the risk aversion parameter $\tau=10$ with the same entropy coefficient. To evaluate partner generalization, we construct a $60\times 60$ cross-play matrix consisting of 30 IPPO agents and 30 SRPO agents. Each matrix entry corresponds to the average return obtained by a fixed agent pair, evaluated over 100 episodes of length 100 without further learning. In addition to cross-play on the training environment, we evaluate all agent pairs in a held-out test environment with a different runner configuration and report these results separately. The results are shown in \cref{fig:tag-cross-play}.
\subsection{Hanabi}
\paragraph{Environments.} Hanabi is a cooperative card game where players act as distracted pyrotechnicians who must work together to launch a spectacular fireworks display by playing cards in the correct sequence, with the twist that you can see everyone's cards except your own and is a canonical benchmark for collaboration. At each turn, the active player chooses from moves such as playing a card to the shared firework piles, discarding a card to regain an information token, or hinting another player about the color or rank of cards in their hand; hints consume a finite pool of information tokens, and illegal plays consume life tokens. Following \citet{lauffer2025robust}, we consider a version with $3$ colors and $3$ ranks. We conduct our experiments in both the $2$ player and the $4$ player settings.

\paragraph{Training and evaluation.} 
Each run uses a single GPU with 1 training thread and 1000 rollout threads, collecting trajectories of episode length $100$. We train for $3\times 10^7$ environment steps with PPO updates using 15 epochs and 1 minibatch per update. We set the learning rates to $7\times 10^{-4}$ (actor) and $1\times 10^{-3}$ (critic), and set the initialization gain to $0.01$. We set the entropy regularization to $0.001$ and the risk aversion parameter $\tau=0.01$. In the $2$-player setting, policies use a $2$-layer MLP with hidden size 128. In the $4$-player setting, policies use a $2$-layer MLP with hidden size 256, and each agent is randomly set to be adversary agent during each rollout for SRPO. For each setting, we construct a $20 \times 20$ cross-play matrix consisting of $10$ IPPO agents and $10$ SRPO agents, trained independently with different seeds. Each entry is averaged over $50$ evaluation episodes of length $100$. The results are shown in \cref{fig:hanabi-appendix,fig:hanabi-drop-appendix}.
\begin{figure*}[th]
    \begin{subfigure}{0.48\textwidth}
        \centering
        \includegraphics[height=6cm]{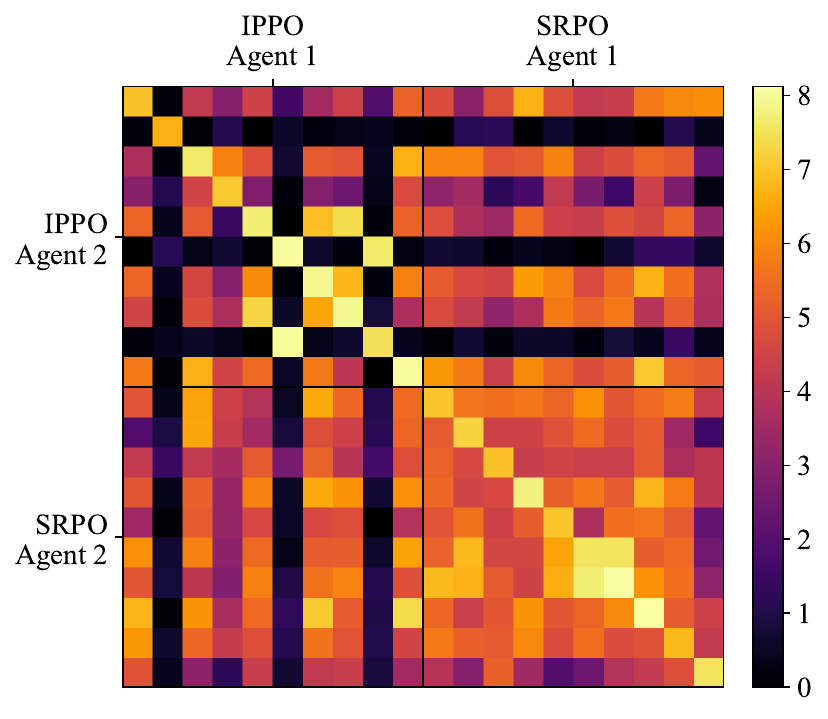}
        \caption{Hanabi $2$-player.}
        \label{fig:hanabi-2-appendix}
    \end{subfigure}
    \hfill
    \begin{subfigure}{0.48\textwidth}
        \centering
        \includegraphics[height=6cm]{figures/hanabi_4_cross_play.pdf}
        \caption{Hanabi $4$-player.}
        \label{fig:hanabi-4-appendix}
    \end{subfigure}
    \vspace{-1ex}\caption{
    Cross-play performances of SRPO ($\epsilon=0.001, \tau=0.01$) and IPPO ($\epsilon=0.001$) in both $2$-player and $4$-player hanabi games. Each square represents the average reward of two agents across 50 runs of length 100. 
    }\vspace{-3ex}
    \label{fig:hanabi-appendix}
\end{figure*}
\begin{figure*}[th]
    \begin{subfigure}{0.48\textwidth}
        \centering
        \includegraphics[height=6cm]{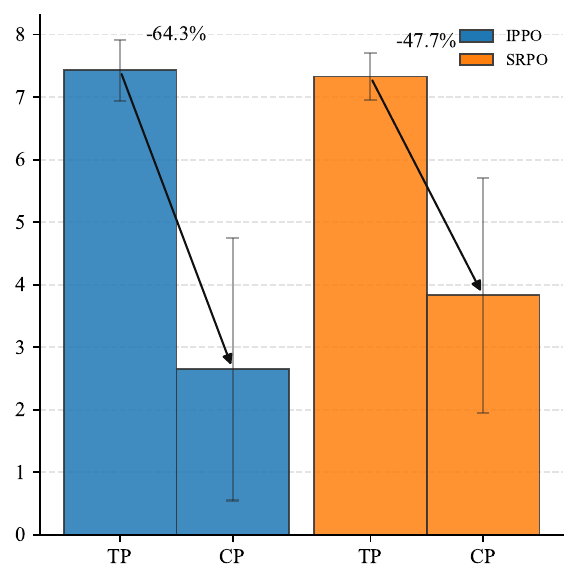}
        \caption{Hanabi $2$-player.}
        \label{fig:hanabi-2-drop-appendix}
    \end{subfigure}
    \hfill
    \begin{subfigure}{0.48\textwidth}
        \centering
        \includegraphics[height=6cm]{figures/drop_hanabi_4.pdf}
        \caption{Hanabi $4$-player.}
        \label{fig:hanabi-4-drop-appendix}
    \end{subfigure}
    \vspace{-1ex}\caption{
    Performance change (mean and standard deviation) between Training Performance (TP) and Cross-play Performance (CP): the performance of IPPO drastically decreases, with lower average and larger standard deviation in cross-play, while the performance of SRPO drops less severely. 
    }\vspace{-3ex}
    \label{fig:hanabi-drop-appendix}
\end{figure*}
\subsection{Multi-agent debate on GSM8k}\label{sec:llm-debate}
\paragraph{Environment.}In this setup, two agents engage in a three-round iterative debate protocol. In the first round, each agent independently observes the question and produces its own reasoning and answer. In subsequent rounds, each agent observes the original question as well as both agents' outputs from the previous round, and then refines its response. Successfully solving a problem therefore requires more than producing a correct answer in isolation: each agent must reinforce correct reasoning while remaining robust to potentially misleading or incorrect proposals from its teammate. This naturally induces a cooperative yet adversarial interaction, where robustness to the partner’s policy plays a critical role in achieving reliable joint performance. Both IPPO (the existing state-of-the-art) and SRPO agents are trained using multiple base language models, including Qwen2.5-0.5B-Instruct (Q0.5B) and Qwen2.5-3B-Instruct (Q3B)~\citep{bai2025qwen2}, as well as Qwen3-0.6B (Q0.6B) and Qwen3-4B-Instruct-2507 (Q4B)~\citep{yang2025qwen3}, using the verl training framework~\citep{sheng2024hybridflow}. Here, we set the entropy coefficient to be $\epsilon=0$ for both IPPO and SRPO, and $\tau=10$ for SRPO. Qwen2.5-0.5B-Instruct is trained with $200$ epochs, and Qwen3-0.6B is trained with $300$ steps. Every $10$ epochs, we evaluate the performances on a held-out validation set using the same debate setup. The results are shown in \cref{fig:qwen2.5,fig:qwen3,fig:qwen3B,fig:qwen4B}. Specifically, the initial accuracy before debate reflects the agent's reasoning abilities, while the final accuracy after debate reflects the agent's coordination abilities. The results show that, compared to IPPO, SRPO only changes the agent's coordination abilities, instead of the reasoning abilities. The final debate accuracy achieved by SRPO during training is slightly lower than that of IPPO, as SRPO is trained in the presence of an adversarial partner, whereas IPPO is not.
\begin{figure*}[th]
    \begin{subfigure}{0.48\textwidth}
        \centering
        \includegraphics[height=6cm]{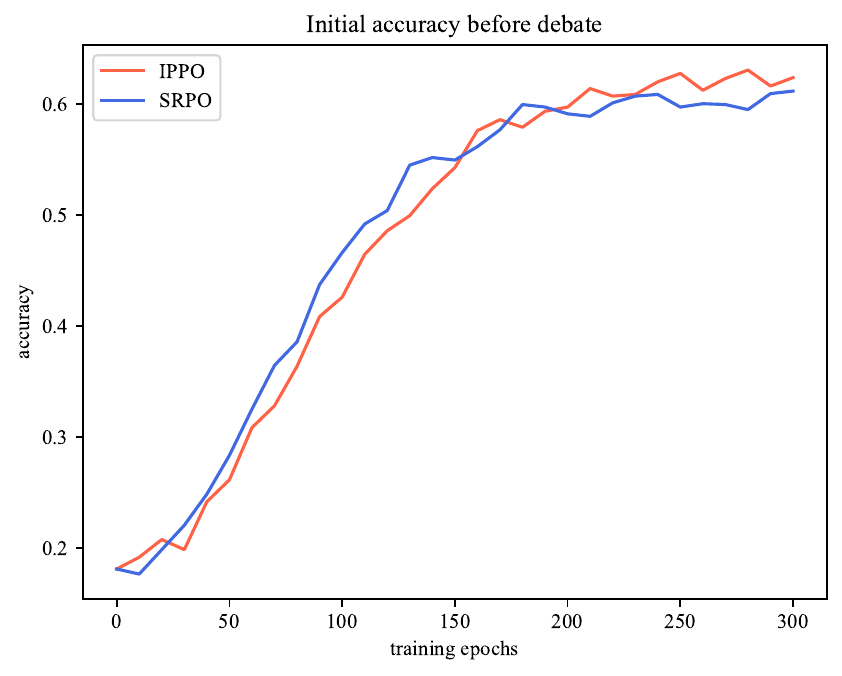}
        \caption{Initial accuracy before debate.}
        \label{fig:qwen3-before}
    \end{subfigure}
    \hfill
    \begin{subfigure}{0.48\textwidth}
        \centering
        \includegraphics[height=6cm]{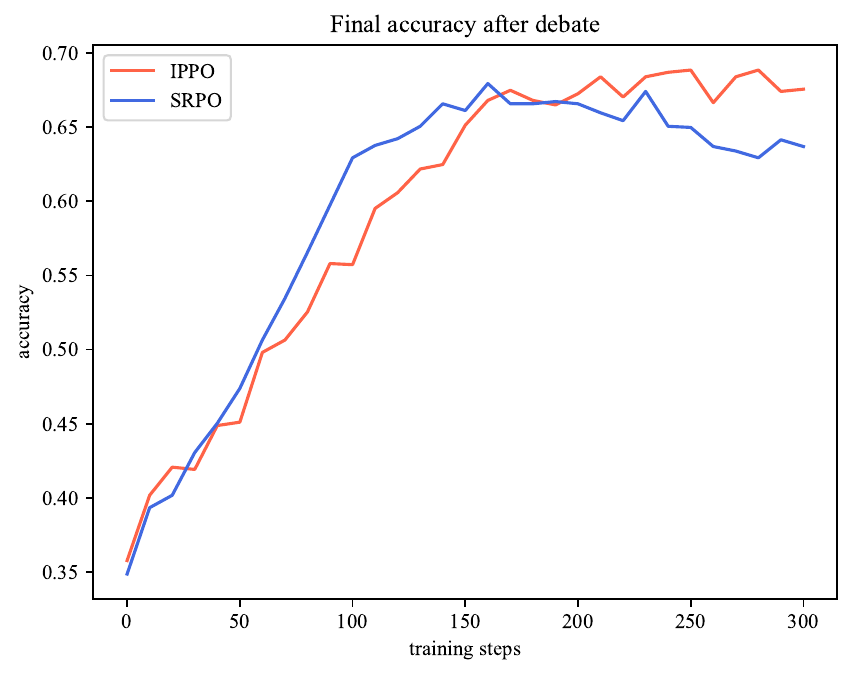}
        \caption{Final accuracy after debate.}
        \label{fig:qwen3-after}
    \end{subfigure}
    \vspace{-1ex}\caption{
    The training curve for SRPO and IPPO with Qwen2.5-0.5B-Instruct. We point out that SRPO is trained against an adversary---meaning lower training reward (post debate) is expected as the adversary learns to mislead the agent to minimize reward.
    }\vspace{-2ex}
    \label{fig:qwen3}
\end{figure*}

\begin{figure*}[th]
    \begin{subfigure}{0.48\textwidth}
        \centering
        \includegraphics[height=6cm]{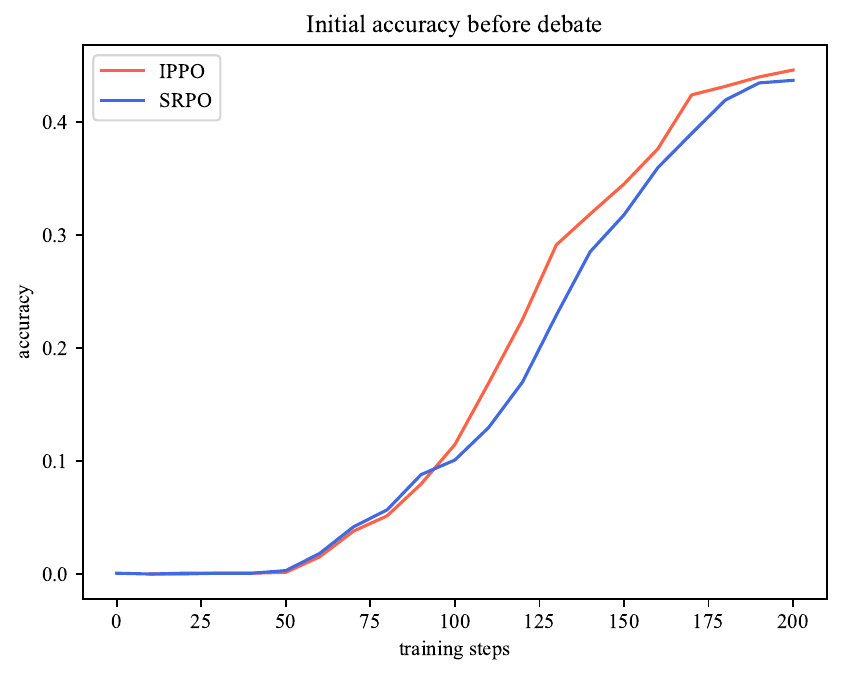}
        \caption{Initial accuracy before debate.}
        \label{fig:qwen2.5-before}
    \end{subfigure}
    \hfill
    \begin{subfigure}{0.48\textwidth}
        \centering
        \includegraphics[height=6cm]{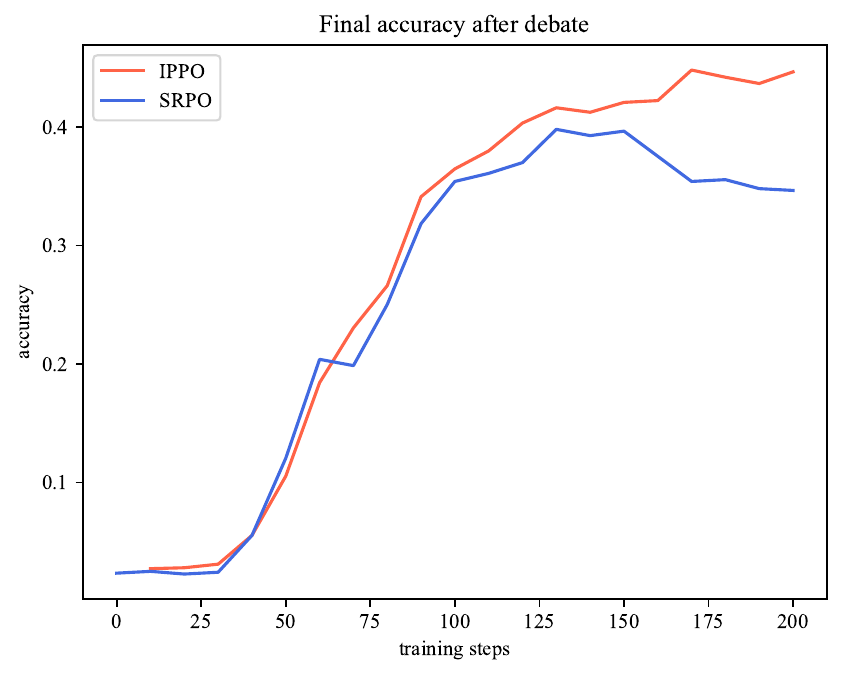}
        \caption{Final accuracy after debate.}
        \label{fig:qwen2.5-after}
    \end{subfigure}
    \vspace{-1ex}\caption{
    The training curve for SRPO and IPPO with Qwen3-0.6B. We point out that SRPO is trained against an adversary---meaning lower training reward (post debate) is expected as the adversary learns to mislead the agent to minimize reward.
    }\vspace{-3ex}
    \label{fig:qwen2.5}
\end{figure*}

\begin{figure*}[th]
    \begin{subfigure}{0.48\textwidth}
        \centering
        \includegraphics[height=6cm]{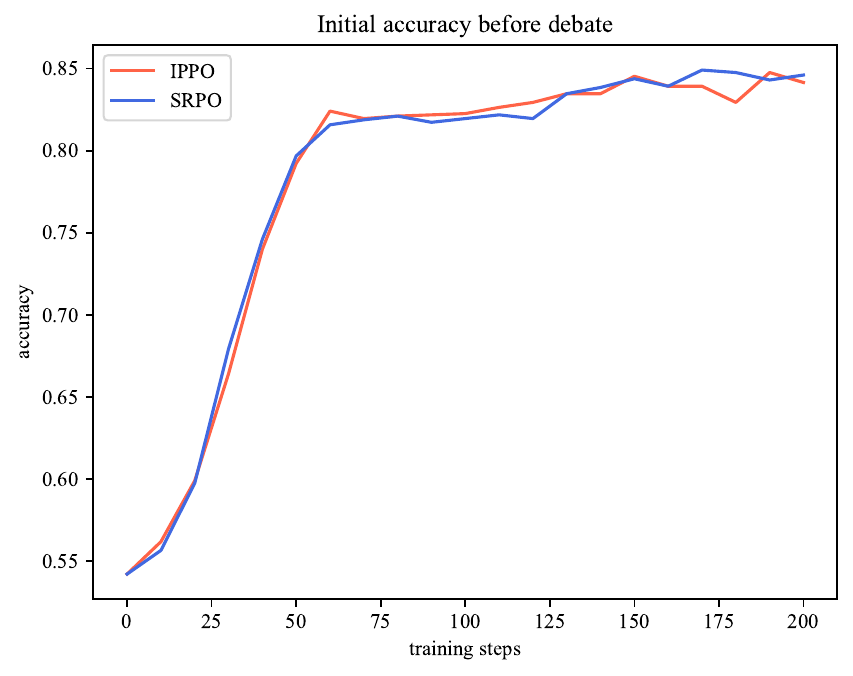}
        \caption{Initial accuracy before debate.}
        \label{fig:qwen3B-before}
    \end{subfigure}
    \hfill
    \begin{subfigure}{0.48\textwidth}
        \centering
        \includegraphics[height=6cm]{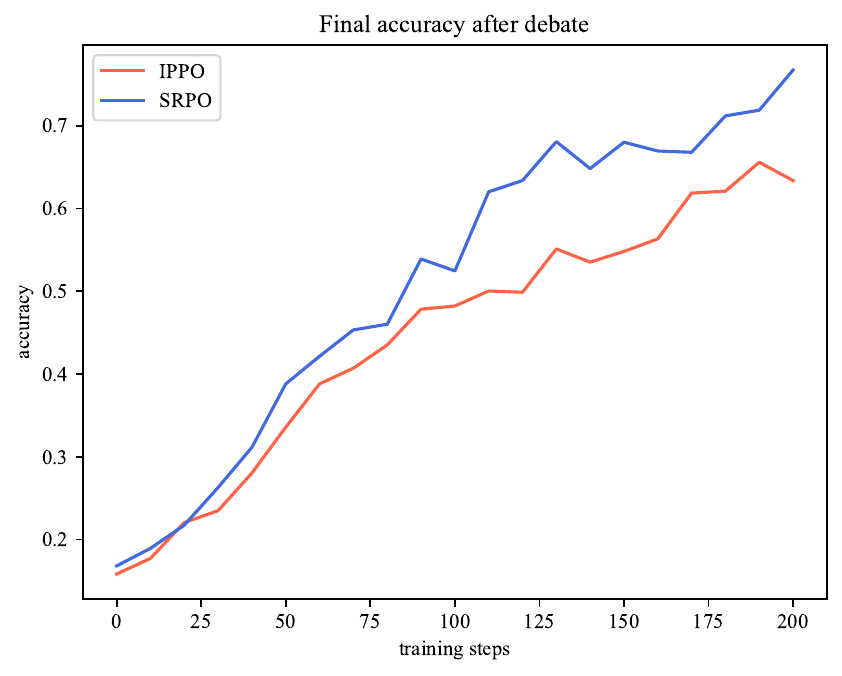}
        \caption{Final accuracy after debate.}
        \label{fig:qwen3B-after}
    \end{subfigure}
    \vspace{-1ex}\caption{
    The training curve for SRPO and IPPO with Qwen2.5-3B-Instruct. We point out that SRPO is trained against an adversary---meaning lower training reward (post debate) is expected as the adversary learns to mislead the agent to minimize reward.
    }\vspace{-3ex}
    \label{fig:qwen3B}
\end{figure*}

\begin{figure*}[th]
    \begin{subfigure}{0.48\textwidth}
        \centering
        \includegraphics[height=6cm]{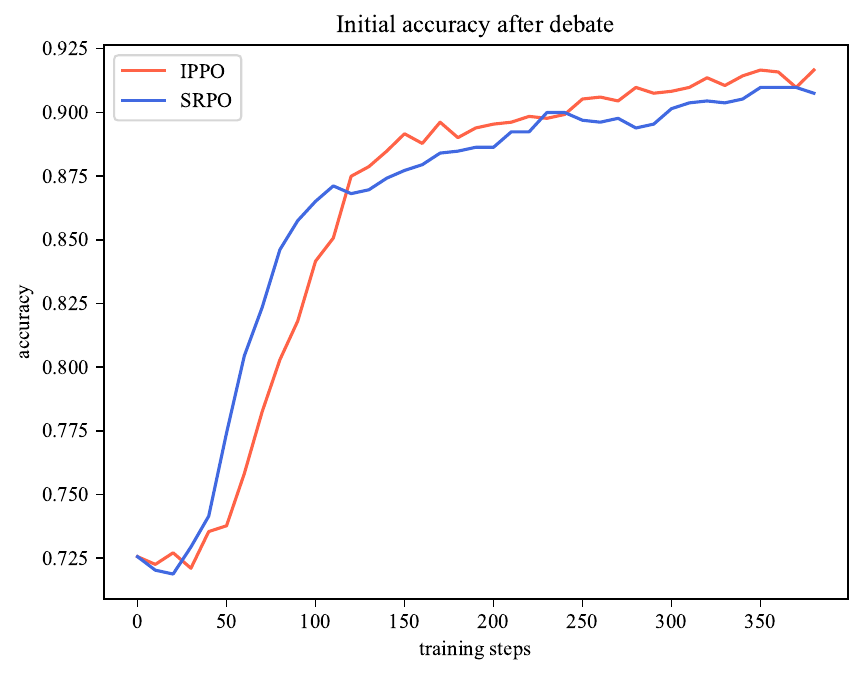}
        \caption{Initial accuracy before debate.}
        \label{fig:qwen4B-before}
    \end{subfigure}
    \hfill
    \begin{subfigure}{0.48\textwidth}
        \centering
        \includegraphics[height=6cm]{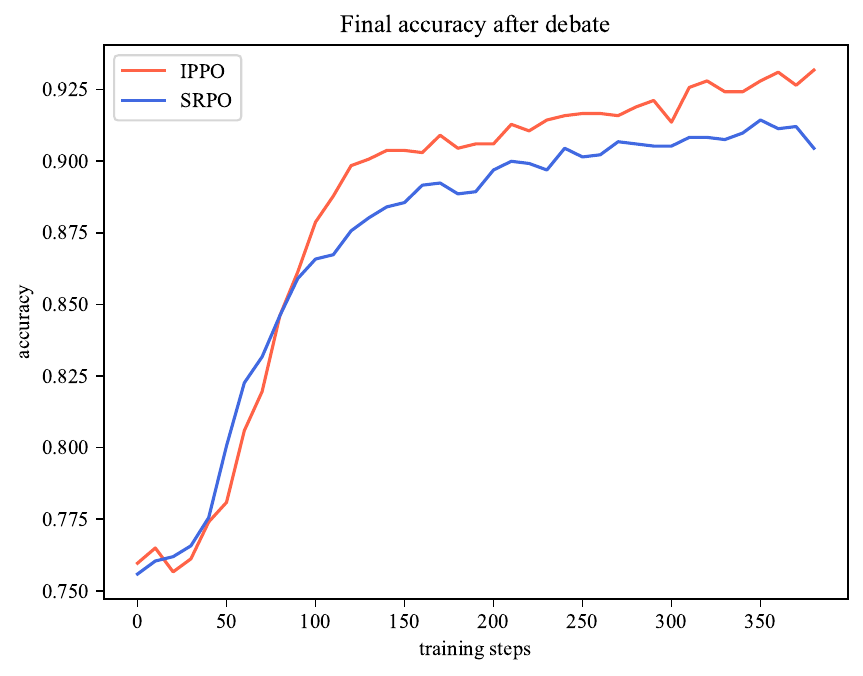}
        \caption{Final accuracy after debate.}
        \label{fig:qwen4B-after}
    \end{subfigure}
    \vspace{-1ex}\caption{
    The training curve for SRPO and IPPO with Qwen3-4B-Instruct-2507. We point out that SRPO is trained against an adversary---meaning lower training reward (post debate) is expected as the adversary learns to mislead the agent to minimize reward.
    }\vspace{-3ex}
    \label{fig:qwen4B}
\end{figure*}

\subsection{Ablation study}\label{sec:ablation}
We conduct ablation studies on both the risk aversion parameter $\tau$ and the entropy coefficient $\epsilon$ in the Overcooked and Tag environments. The results are shown in \cref{fig:entropy-ablation,fig:risk-ablation}. While prior work \citep{PPOentropy} demonstrates that tuning the entropy coefficient $\epsilon$ can improve cross-play performance of IPPO in the Hanabi environment, we find that this strategy does not generalize to Overcooked or Tag. In these environments, entropy tuning alone fails to yield robust partner generalization, suggesting that entropy regularization is not a principled mechanism for addressing coordination under partner shifts.
\begin{figure}[th]
    \centering
    \begin{subfigure}{0.48\linewidth}
        \centering
        \includegraphics[width=\linewidth]{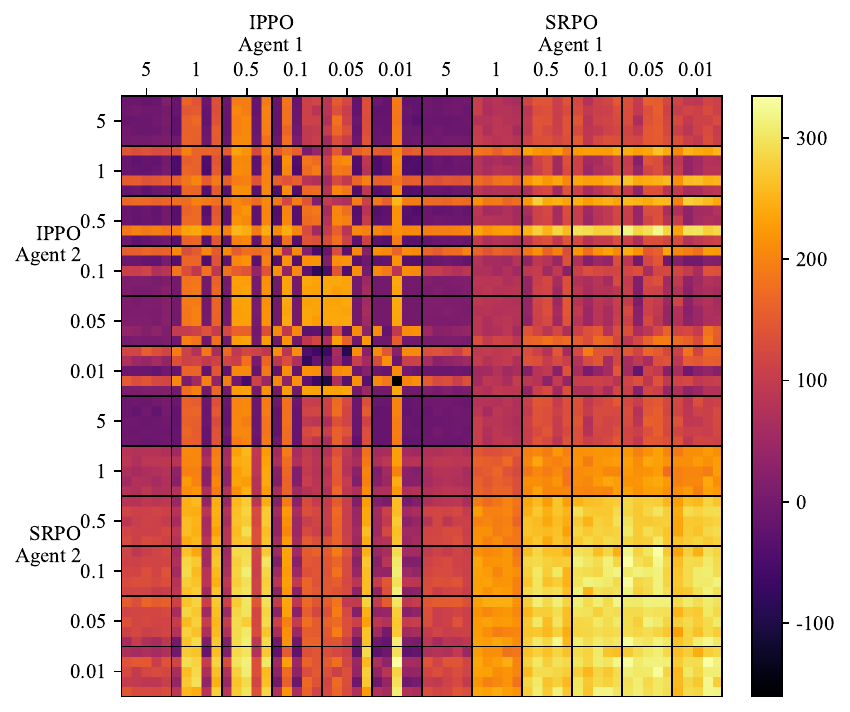}
        \caption{Overcooked.}
        \label{fig:entropy-ablation-overcooked}
    \end{subfigure}
    \hfill
    \begin{subfigure}{0.48\linewidth}
        \centering
        \includegraphics[width=\linewidth]{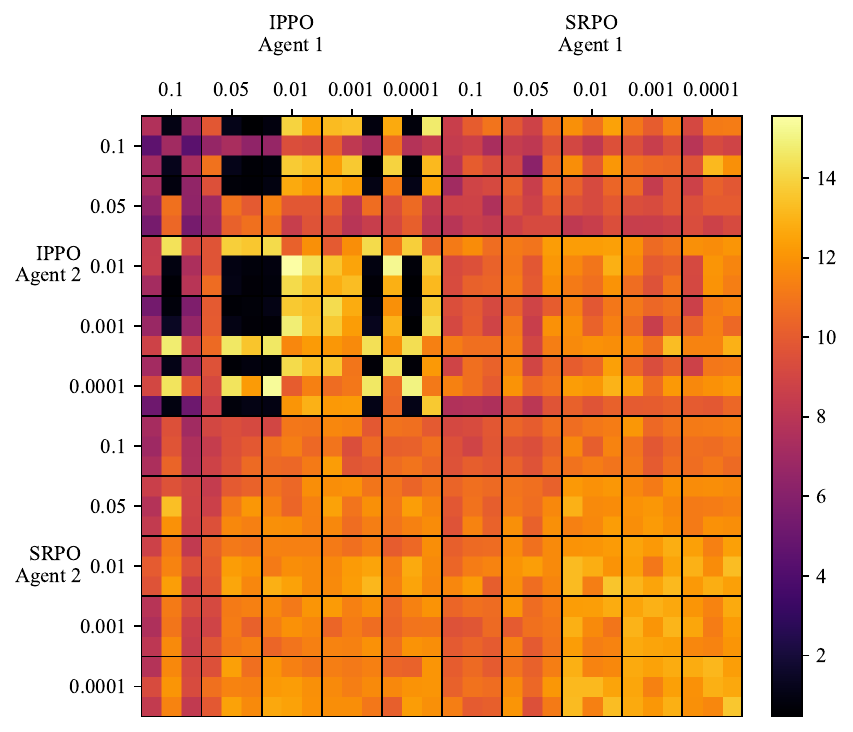}
        \caption{Tag.}
        \label{fig:entropy-ablation-tag-train}
    \end{subfigure}
    \caption{
    Ablation study on the entropy parameter $\epsilon$.
    }
    \label{fig:entropy-ablation}
\end{figure}

\begin{figure}[th]
    \centering
    \begin{subfigure}{0.48\linewidth}
        \centering
        \includegraphics[height=6.5cm]{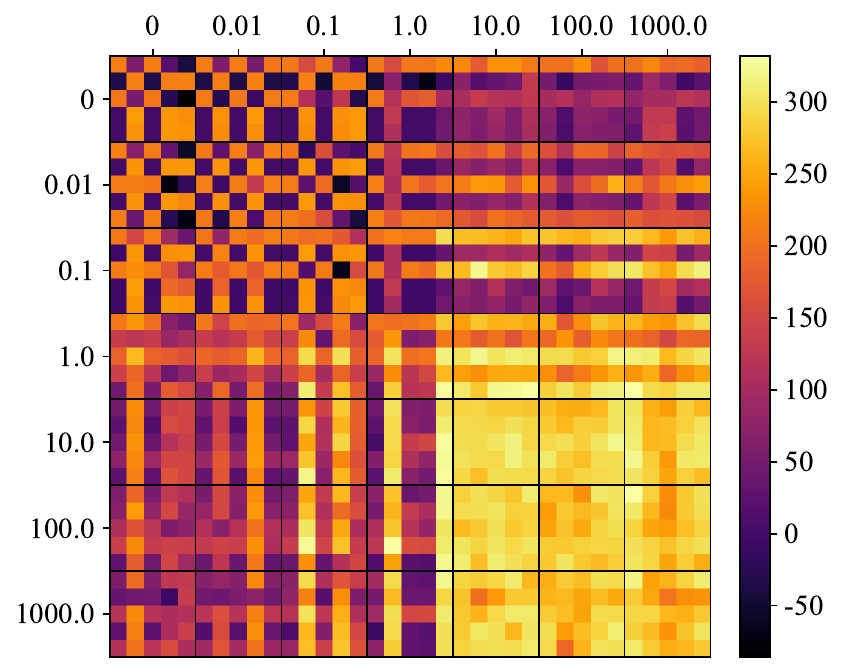}
        \caption{Overcooked.}
        \label{fig:risk-ablation-overcooked}
    \end{subfigure}
    \hfill
    \begin{subfigure}{0.48\linewidth}
        \centering
        \includegraphics[height=6.5cm]{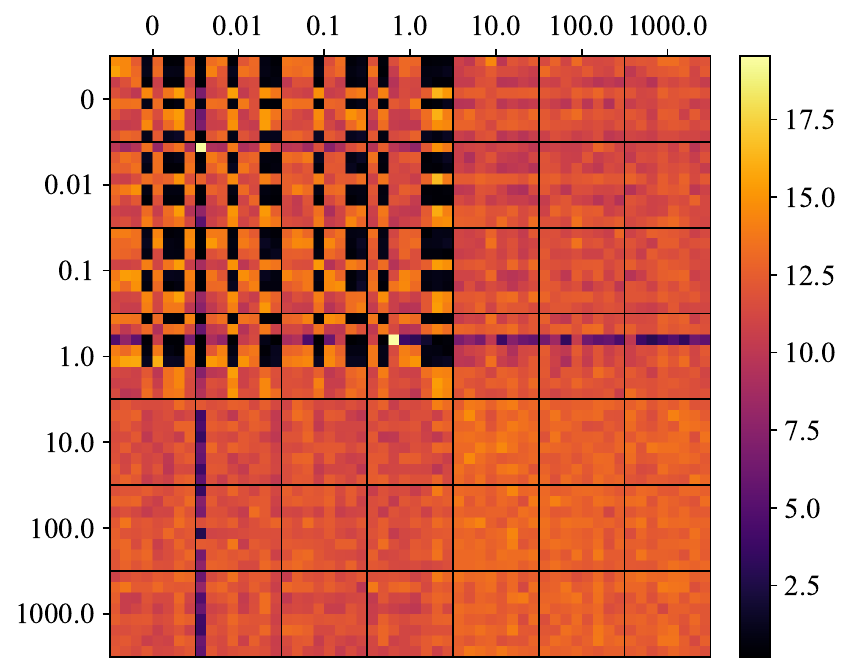}
        \caption{Tag.}
        \label{fig:risk-ablation-tag-train}
    \end{subfigure}
    \caption{
    Ablation study on the degree of risk aversion parameter $\tau$; $\tau=0$ refers to the risk-neutral case, i.e., IPPO.
    }
    \label{fig:risk-ablation}
\end{figure}

%%%%%%%%%%%%%%%%%%%%%%%%%%%%%%%%%%%%%%%%%%%%%%%%%%%%%%%%%%%%

\end{document}